\theoremstyle{plain}
\newtheorem{theorem}{Theorem}
\newtheorem{lemma}{Lemma}
\theoremstyle{definition}
\theoremstyle{remark}
\date{}
\begin{document}

\title{Exact learning for infinite families of concepts}
\author{Mikhail Moshkov\thanks{Computer, Electrical and Mathematical Sciences and Engineering Division,
King Abdullah University of Science and Technology (KAUST),
Thuwal 23955-6900, Saudi Arabia. Email: mikhail.moshkov@kaust.edu.sa.
}}
\maketitle

\begin{abstract}
In this paper, based on results of exact learning, test theory, and rough set theory, we study
arbitrary infinite families of concepts each of which consists of an
infinite set of elements and an infinite set of subsets of this set called concepts.
We consider the notion of a problem over a family of concepts that is described
by a finite number of elements: for a given concept, we should recognize which of
the elements under consideration belong to this concept. As algorithms for problem solving,
we consider decision trees of
five types: (i) using membership queries, (ii) using equivalence queries,  (iii)
using both membership and equivalence queries, (iv) using proper equivalence queries,
and (v) using both membership and proper equivalence queries. As time complexity, we
study the depth of decision trees. In the worst case, with the growth of the number of
elements in the problem description, the minimum depth of decision trees of the first
type either grows as a  logarithm or linearly, and the minimum depth of decision trees
of each of the other types either is bounded from above by a constant or grows as a
logarithm, or linearly. The obtained results allow us to distinguish seven complexity
classes of infinite families of concepts.
\end{abstract}

{\it Keywords}: exact learning, test theory, rough set theory, decision trees, complexity classes.

\section{Introduction \label{S1}}

Decision trees are widely used in many areas of computer science. They are studied, in particular, in exact learning initiated by Angluin \cite{Angluin88}, test theory 
initiated by Chegis and Yablonskii \cite{Chegis58}, and rough set theory
initiated by Pawlak \cite{Pawlak82}. In some sense, these theories deal with dual objects: for example,
attributes from test theory and rough set theory correspond to membership queries from exact
learning. In contrast to test theory and rough set theory, in exact learning besides membership
queries, equivalence queries are also considered.

Heged\"{u}s in \cite{Hegedus95} generalized some bounds from \cite{Moshkov83}
obtained in the framework of test theory to the case of exact learning with
membership and equivalence queries. Similar results were obtained
independently and in the other way by Hellerstein et al. \cite{Hellerstein96}%
. In  \cite{Moshkov01}, we moved in the opposite direction: we added to the model
considered in test theory and rough set theory the notion of a hypothesis that allowed us to use
an analog of equivalence queries, and studied decision trees using various
combinations of attributes, hypotheses, and proper hypotheses (an analog of
proper equivalence queries). The paper \cite{Moshkov01} did not contain proofs. The proofs and
some new results were considered in \cite{MTCS,MDAM}. The aim of the present paper
is to translate the results obtained in \cite{Moshkov01,MTCS,MDAM}  into exact learning.

In \cite{Moshkov01,MTCS,MDAM}, based on the results of exact learning \cite%
{Angluin88,Angluin04,Littlestone88,Maass92} and test theory and rough set theory \cite%
{Moshkov83,Moshkov89,Moshkov05}, we investigated infinite binary information
systems each of which consists of an infinite set of elements $A$ and an
infinite set $F$ of functions (attributes) from $A$ to $\{0,1\}$. We defined
the notion of a testing problem described by a finite number of attributes $%
f_{1},\ldots ,f_{n}$ from $F$: for a given element $a\in A$, we should
recognize the tuple $(f_{1}(a),\ldots ,f_{n}(a))$. To this end, we can use
decision trees based on two types of queries. We can ask about the value of
an attribute $f_{i}\in \{f_{1},\ldots ,f_{n}\}$. We will obtain an answer of
the kind $f_{i}(a)=\delta $, where $\delta \in \{0,1\}$. We can also ask if a
hypothesis $f_{1}(a)=\delta _{1},\ldots ,f_{n}(a)=\delta _{n}$ is true, where
$\delta _{1},\ldots ,\delta _{n}\in \{0,1\}$. Either this hypothesis will be
confirmed or we will obtain a counterexample $f_{i}(a)=\lnot \delta _{i}$,
which is chosen nondeterministically. The considered hypothesis is called
proper if there exists an element $b\in A$ such that $f_{1}(b)=\delta
_{1},\ldots ,f_{n}(b)=\delta _{n}$. As time complexity of a decision tree,
we considered its depth, which is equal to the maximum number of queries in
a path from the root to a terminal node of the tree.

For an arbitrary infinite binary information
system, we studied five functions of Shannon type, which characterize the dependence
in the worst case of the minimum depth of a decision tree solving a testing
problem on the number of attributes in the problem description. The
considered five functions correspond to the following five types of decision trees:

\begin{enumerate}
\item Only attributes are used in decision trees.

\item Only hypotheses are used in decision trees.

\item Both attributes and hypotheses are used in decision trees.

\item Only proper hypotheses are used in decision trees.

\item Both attributes and proper hypotheses are used in decision trees.
\end{enumerate}

We proved that the first function has two possible types of behavior:
logarithmic and linear. Each of the remaining functions has three possible
types of behavior: constant, logarithmic, and linear. We also studied joint
behavior of all five functions and described seven complexity classes of
infinite binary information systems.

An exact learning problem is defined by a domain $X$ and a concept class $B$
\cite{Angluin04}. The domain $X$ is a nonempty finite set. A concept is any
subset of $X$, and a concept class $B$ is a nonempty set of concepts. For a
given (but hidden) concept $c\in B$, we should recognize it using two types
of queries. A membership query: for an element $x\in X$, we ask if $x\in c$.
The answer is either $x\in c$ or $x\notin c$. An equivalence query: for a
concept $c^{\prime }\subseteq X$, we ask if $c^{\prime }=c$. The answer is
either yes or no. In the latter case, we receive a counterexample $x\in
(c^{\prime }\setminus c)\cup (c\setminus c^{\prime })$, which is chosen
nondeterministically. The concept $c^{\prime }$ can be considered as a
hypothesis. This hypothesis and corresponding equivalence query are called
proper if $c^{\prime }\in B$.

To translate the results obtained in \cite{Moshkov01,MTCS,MDAM} into exact
learning, we introduce the notion of an infinite family of concepts consisting of an infinite set of elements $U$ and an infinite set $%
C$ of subsets of $U$ called concepts. Each nonempty  finite subset $X$ of the
set $U$ describes a problem of exact learning with the domain $X$ and the
concept class $\{c\cap X:c\in C\}$.  We study the following  modification of this problem: for a
given (but hidden) concept $c\in C$, we should recognize the concept $c\cap X$ using the same queries as for the initial problem. Both the problem and its modification have the same sets of decision trees solving them.

In the present paper, for each infinite family of concepts, we study five functions
of Shannon type, which characterize the dependence in the worst case of the
minimum depth of a decision tree solving an exact learning problem on the
number of elements in the problem description. The considered five functions
correspond to the following five cases:

\begin{enumerate}
\item Only membership queries are used in decision trees.

\item Only equivalence queries are used in decision trees.

\item Both membership and equivalence queries are used in decision trees.

\item Only proper equivalence queries are used in decision trees.

\item Both membership and proper equivalence queries are used in decision
trees.
\end{enumerate}

As in the case of testing problems, the first function has two possible types
of behavior -- logarithmic and linear, and each of the remaining functions
has three possible types of behavior -- constant, logarithmic, and linear.
We also consider joint behavior of these five functions and distinguish
seven complexity classes of infinite families of concepts.

The translation of the results obtained in \cite{Moshkov01,MTCS,MDAM}  into exact learning will be helpful for researchers in this area: the duality of the two directions is simple (attributes and elements from test theory and rough set theory correspond to elements and concepts from exact learning, respectively) but working out the details takes some effort.

The study of decision trees with hypotheses is interesting not only from the theoretical point of view. Experimental results obtained in \cite{ent,Azad21a,Azad21b,ele,CSP} show that  such decision trees can have less complexity than the  conventional decision trees. These results open up some prospects for using decision trees with hypotheses as a means for knowledge representation.

The rest of the paper is organized as follows. Sections \ref{S2} and \ref{S3} present basic notions and main results. Sections \ref{S4}--\ref{S6} contain proofs, and
Section \ref{S7} -- short conclusions.

\section{Basic Notions \label{S2}}

Let $U$ be a nonempty set and $C$ be a nonempty set of subsets of $U$
called \emph{concepts}. The pair $F=(U,C)$ is called a\emph{\ family of
concepts}. If $U$ and $C$ are infinite sets, then the pair $F=(U,C)$ is
called an \emph{infinite family of concepts}. For each element $u\in U$, we
define a function $u:C\rightarrow \{0,1\}$ with the same name as follows:
for any $c\in C$, $u(c)=1$ if and only if $u\in c$.

A \emph{problem over} $F$ is an arbitrary $n$-tuple $z=(u_{1},\ldots ,u_{n})$,
where $n\in \mathbb{N}$, $\mathbb{N}$ is the set of natural numbers $%
\{1,2,\ldots \}$, and $u_{1},\ldots ,u_{n}\in U$. The problem $z$  is as follows: for a given concept $c\in C$, we should recognize  the tuple $z(c)=(u_{1}(c),\ldots
,u_{n}(c))$. The number $\dim z=n$ is called the \emph{%
dimension} of the problem $z$. Denote $U(z)=\{u_{1},\ldots ,u_{n}\}$ and $%
C(z)=\{c\cap U(z):c\in C\}$. Note that the problem $z$ can also be interpreted
as the exact learning problem with the domain $U(z)$ and the concept class $%
C(z)$. We denote by $P(F)$ the set of problems over the family of concepts $F$.

A \emph{system of equations over }$F$ is an arbitrary equation system of the
kind
\[
\{v_{1}(x)=\delta _{1},\ldots ,v_{m}(x)=\delta _{m}\},
\]%
where $m\in \mathbb{N}\cup \{0\}$, $v_{1},\ldots ,v_{m}\in U$, and $\delta
_{1},\ldots ,\delta _{m}\in \{0,1\}$ (if $m=0$, then the considered equation
system is empty). This equation system is called a \emph{system of equations
over }$z$ if $v_{1},\ldots ,v_{m}\in U(z)$. The considered equation system
is called \emph{consistent} (on $C$) if its set of solutions on $C$ is
nonempty. The set of solutions of the empty equation system coincides with $%
C $.

As algorithms for problem $z$ solving, we consider decision trees with two
types of \emph{queries }-- membership and equivalence. We can choose an
element $u_{i}\in U(z)$ and ask about value of the function $u_{i}$. This
\emph{membership} query has two possible answers $\{u_{i}(x)=0\}$ and $%
\{u_{i}(x)=1\}$. We can formulate a \emph{hypothesis over} $z$ in the form $%
H=\{u_{1}(x)=\delta _{1},\ldots ,u_{n}(x)=\delta _{n}\}$, where $\delta
_{1},\ldots ,\delta _{n}\in \{0,1\}$, and ask about this hypothesis. This
\emph{equivalence} query has $n+1$ possible answers: $H,\{u_{1}(x)=\lnot
\delta _{1}\},...,\{u_{n}(x)=\lnot \delta _{n}\}$, where $\lnot 1=0$ and $%
\lnot 0=1$. The first answer means that the hypothesis is true. Other
answers are counterexamples. This hypothesis and the corresponding
equivalence query are called \emph{proper }(for\emph{\ }$F$) if the system
of equations $H$ is consistent on $C$.

A \emph{decision tree over} $z$ is a marked finite directed tree with the
root in which
\begin{itemize}
\item Each \emph{terminal} node is labeled with an $n$-tuple from the set $%
\{0,1\}^{n}$.
\item Each node, which is not terminal (such nodes are called \emph{working}%
), is labeled with an element from the set $U(z)$ (a membership query) or
with a hypothesis over $z$ (an equivalence query).
\item If a working node is labeled with an element $u_{i}$ from $U(z)$, then
there are two edges, which leave this node and are labeled with the systems
of equations $\{u_{i}(x)=0\}$ and $\{u_{i}(x)=1\}$, respectively.
\item If a working node is labeled with a hypothesis
\[
H=\{u_{1}(x)=\delta _{1},\ldots ,u_{n}(x)=\delta _{n}\}
\]%
over $z$, then there are $n+1$ edges, which leave this node and are labeled
with the systems of equations $H,\{u_{1}(x)=\lnot \delta
_{1}\},...,\{u_{n}(x)=\lnot \delta _{n}\}$, respectively.
\end{itemize}
Let $\Gamma $ be a decision tree over $z$. A \emph{complete path} in $\Gamma
$ is an arbitrary directed path from the root to a terminal node in $\Gamma $%
. We now define an equation system $\mathcal{S}(\xi )$ over $F$ associated
with the complete path $\xi $. If there are no working nodes in $\xi $, then
$\mathcal{S}(\xi )\ $is the empty system. Otherwise, $\mathcal{S}(\xi )$ is
the union of equation systems assigned to the edges of the path $\xi $. We
denote by $C(\xi )$ the set of solutions on $C$ of the system of
equations $\mathcal{S}(\xi )$ (if this system is empty, then its solution
set is equal to $C$).

We will say that a decision tree $\Gamma $ over $z$ \emph{solves\ the
problem }$z$\emph{\ relative to} $F$ if, for each concept $c\in C$ and for
each complete path $\xi $ in $\Gamma $ such that $c\in C(\xi )$,
the terminal node of the path $\xi $ is labeled with the tuple $z(c)$.

We now consider an equivalent definition of a decision tree solving a
problem. Denote by $\Delta _{F}(z)$ the set of tuples $(\delta _{1},\ldots
,\delta _{n})\in \{0,1\}^{n}$ such that the system of equations $%
\{u_{1}(x)=\delta _{1},\ldots ,u_{n}(x)=\delta _{n}\}$ is consistent. The
set $\Delta _{F}(z)$ is the set of all possible solutions to the problem $z$%
. 
Let $\Delta
\subseteq \Delta _{F}(z)$, $u_{i_{1}},\ldots ,u_{i_{m}}\in \{u_{1},\ldots
,u_{n}\}$, and $\sigma _{1},\ldots ,\sigma _{m}\in \{0,1\}$. Denote
\[
\Delta (u_{i_{1}},\sigma _{1})\cdots (u_{i_{m}},\sigma _{m})
\]%
the set of all $n$-tuples $(\delta _{1},\ldots ,\delta _{n})\in \Delta $ for
which $\delta _{i_{1}}=\sigma _{1},\ldots ,\delta _{i_{m}}=\sigma _{m}$.

Let $\Gamma $ be a decision tree over the problem $z$. We correspond to each
complete path $\xi $ in the tree $\Gamma $ a word $\pi (\xi )$ in the
alphabet $\{(u_{i},\delta ):u_{i}\in U(z),\delta \in \{0,1\}\}$. If the
equation system $\mathcal{S}(\xi )$ is empty, then $\pi (\xi )$ is the empty
word. If $\mathcal{S}(\xi )=\{u_{i_{1}}(x)=\sigma _{1},\ldots
,u_{i_{m}}(x)=\sigma _{m}\}$, then $\pi (\xi )=(u_{i_{1}},\sigma _{1})\cdots
(u_{i_{m}},\sigma _{m})$. The decision tree $\Gamma $ over $z$ solves the
problem $z$ relative to $F$ if, for each complete path $\xi $ in $\Gamma $,
the set $\Delta _{F}(z)\pi (\xi )$ contains at most one tuple and if this
set contains exactly one tuple, then the considered tuple is assigned to the
terminal node of the path $\xi $.

As time complexity of a decision tree, we consider its \emph{depth} that is
the maximum number of working nodes in a complete path in the tree or, which
is the same, the maximum length of a complete path in the tree. We denote by
$h(\Gamma )$ the depth of a decision tree $\Gamma $.

Let $z\in P(F)$. We denote by $h_{F}^{(1)}(z)$ the minimum depth of a
decision tree over $z$, which solves $z$ relative to $F$ and uses only
membership queries. We denote by $h_{F}^{(2)}(z)$ the minimum depth of a
decision tree over $z$, which solves $z$ relative to $F$ and uses only
equivalence queries. We denote by $h_{F}^{(3)}(z)$ the minimum depth of a
decision tree over $z$, which solves $z$ relative to $F$ and uses both
membership and equivalence queries. We denote by $h_{F}^{(4)}(z)$ the
minimum depth of a decision tree over $z$, which solves $z$ relative to $F$
and uses only proper equivalence queries. We denote by $h_{F}^{(5)}(z)$ the
minimum depth of a decision tree over $z$, which solves $z$ relative to $F$
and uses both membership and proper equivalence queries.

For $i=1,\ldots,5$, we define a function of Shannon type $h_{F}^{(i)}(n)$
that characterizes dependence of $h_{F}^{(i)}(z)$ on $\dim z$ in the worst
case. Let $i\in \{1,\ldots,5\}$ and $n\in \mathbb{N}$. Then%
\[
h_{F}^{(i)}(n)=\max \{h_{F}^{(i)}(z):z\in P(F),\dim z\leq n\}.
\]

\section{Main Results \label{S3}}

Let $F=(U,C)$ be an infinite family of concepts and $r\in \mathbb{N}$. We
will say that the family of concepts $F$ is $r$-\emph{reduced} if, for each
consistent on $C$ system of equations over $F$, there exists a subsystem of
this system that has the same set of solutions on $C$ and contains at most $%
r $ equations. We denote by $\mathcal{R}$ the set of infinite families of
concepts each of which is $r$-reduced for some $r\in \mathbb{N}$.

The next theorem follows from the results obtained in \cite{Moshkov89}, where we
studied closed classes of test tables (decision tables). It also follows
from the results obtained in \cite{Moshkov05}, where we investigated the
weighted depth of decision trees for testing problems.

\begin{theorem}
\label{T1} Let $F$ be an infinite family of concepts. Then the following
statements hold:

(a) If $F\in \mathcal{R}$, then $h_{F}^{(1)}(n)=\Theta (\log n)$.

(b) If $F\notin \mathcal{R}$, then $h_{F}^{(1)}(n)=n$ for any $n\in \mathbb{N%
}$.
\end{theorem}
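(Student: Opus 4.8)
The plan is to reduce $h_F^{(1)}(z)$ to a purely combinatorial quantity and then treat the two cases separately. A decision tree over $z$ using only membership queries is an ordinary decision tree over the ``attributes'' $u_1,\dots,u_n$, and it solves $z$ iff along each complete path the answers determine a unique tuple of $\Delta_F(z)\subseteq\{0,1\}^n$; so $h_F^{(1)}(z)$ is the minimum depth of a decision tree distinguishing all tuples of $\Delta_F(z)$. Moreover, for a system over $z$ both consistency and the solution set (as a subset of $C$) are determined by which tuples of $\Delta_F(z)$ satisfy it, so $F$ is $r$-reduced iff for every $z\in P(F)$ every nonempty set $\Delta_F(z)(u_{i_1},\sigma_1)\cdots(u_{i_m},\sigma_m)$ equals a set of the same form obtained by fixing at most $r$ coordinates. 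I will use this reformulation throughout. Two consequences: taking $m=n$, every tuple $\delta\in\Delta_F(z)$ has a ``specifying set'' $S_\delta$ of at most $r$ coordinates with $\Delta_F(z)$ restricted to $\delta|_{S_\delta}$ equal to $\{\delta\}$, so $\delta\mapsto(S_\delta,\delta|_{S_\delta})$ is injective and $|\Delta_F(z)|\le\sum_{k=0}^{r}\binom{n}{k}2^{k}=O_r(n^{r})$; and the reformulation is inherited by every conditioned family $F'=(U,\{c\in C:u(c)=b\})$ — given a consistent system over $F'$, adjoin $u(x)=b$, reduce it in $F$ to at most $r$ equations, and drop $u(x)=b$ if present — so conditioning on finitely many elements preserves $r$-reducedness.

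For part (b), fix $n$ and use that $F$ is not $(n-1)$-reduced: some consistent system over $F$ has no equivalent subsystem with at most $n-1$ equations; inside it take a subsystem $S'=\{v_1(x)=\gamma_1,\dots,v_k(x)=\gamma_k\}$ that is minimal (by inclusion) among those equivalent to the original, so $k\ge n$ and deleting any equation of $S'$ strictly enlarges its solution set on $C$. Put $z=(v_1,\dots,v_n)$ and $\delta=(\gamma_1,\dots,\gamma_n)$. Then $\delta\in\Delta_F(z)$, and for each $i\le n$ minimality yields $c_i\in C$ satisfying $S'\setminus\{v_i(x)=\gamma_i\}$ but not $v_i(x)=\gamma_i$; its restriction to $(v_1,\dots,v_n)$ is $\delta$ with bit $i$ negated, so all $n$ such tuples lie in $\Delta_F(z)$. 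An adversary answering every query $u_j$ by $\delta_j$ keeps, after any $t<n$ queries, at least two consistent tuples ($\delta$ and the tuple $\delta$-with-bit-$i$-negated for an unqueried $i$), so no tree of depth $<n$ solves $z$; since the depth-$n$ tree querying all coordinates does, $h_F^{(1)}(z)=n$, hence $h_F^{(1)}(n)=n$.

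For part (a) the lower bound $h_F^{(1)}(n)=\Omega(\log n)$ holds for every infinite family: choose $n$ pairwise distinct concepts and build $W\subseteq U$ with $|W|\le n-1$ separating all of them (add one element at a time; a new concept is unseparated from at most one earlier one), giving $z$ with $\dim z\le n-1$ and $|\Delta_F(z)|\ge n$; any solving tree has at least $|\Delta_F(z)|$ leaves, hence depth $\ge\log_2 n$. The substance of (a) is the upper bound when $F$ is $r$-reduced. By the two consequences above, $\log_2|\Delta_F(z)|=O_r(\log\dim z)$ and conditioning preserves $r$-reducedness, so it suffices to build a tree of depth $O_r(\log|\Delta_F(z)|)$. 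The plan rests on a key lemma: \emph{if $F$ is $r$-reduced and $|\Delta_F(z)|\ge 2$, there is a set of at most $g(r)$ coordinates of $z$ such that every fixing of their values leaves at most $(1-c_r)|\Delta_F(z)|$ consistent tuples}, with $g(r),c_r>0$ depending only on $r$. Granting it, the tree queries such a set, recurses on each branch (again a problem over an $r$-reduced family, with at most $(1-c_r)|\Delta_F(z)|$ consistent tuples), and stops with one tuple left; the depth is $O_r\!\big(\tfrac1{c_r}\log|\Delta_F(z)|\big)=O_r(\log\dim z)$, which with the lower bound gives $h_F^{(1)}(n)=\Theta(\log n)$.

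The main obstacle is the key lemma, which I would prove by contrapositive. If no set of at most $g(r)$ coordinates splits $\Delta:=\Delta_F(z)$ as required, then for every coordinate set $A$ with $|A|\le g(r)$ some pattern on $A$ is shared by more than $(1-c_r)|\Delta|$ tuples; with $c_r<1/2$ these majority patterns are pairwise compatible and glue to a ``dominant pattern'' $\beta\in\{0,1\}^{n}$ such that for every such $A$ more than $(1-c_r)|\Delta|$ tuples agree with $\beta$ on $A$. Applying this with $A=S_\delta$ for a tuple $\delta$ whose specifying set agrees with $\beta$ forces more than $(1-c_r)|\Delta|>1$ tuples to equal $\delta$ on $S_\delta$, contradicting that $S_\delta$ specifies $\delta$. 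The delicate point — and the place where the full strength of $r$-reducedness (reducibility of sets $\Delta_F(z)(u_{i_1},\sigma_1)\cdots$ of every shape, not merely of full tuples) is needed — is producing such a $\delta$, since $\beta$ itself need not be a consistent tuple; I would handle this by feeding unions of $\lfloor g(r)/r\rfloor$ specifying sets into the dominant-pattern property and iterating. This combinatorial core is precisely what was worked out, in the language of closed classes of decision tables and of weighted depth of decision trees, in \cite{Moshkov89,Moshkov05}, so statement (a) may also be obtained by quoting those results directly.
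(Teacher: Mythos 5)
Your overall architecture matches the paper's: the counting bound $|\Delta_F(z)|=O_r(n^r)$ via specifying sets, a halving-type tree for the upper bound in (a), leaf-counting for the lower bound in (a), and a minimal-inequivalent-system adversary for (b). Parts (b) and the $\Omega(\log n)$ lower bound are correct and complete (your treatment of (b), extracting an inclusion-minimal equivalent subsystem and exhibiting $\delta$ together with its $n$ coordinate-flips inside $\Delta_F(z)$, is if anything more explicit than the paper's).

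The genuine gap is in the proof of your key lemma, which carries the entire upper bound of part (a). The lemma's statement is fine, but your contrapositive argument via the dominant pattern $\beta$ does not close: since $\beta$ need not lie in $\Delta_F(z)$, you need a tuple $\delta$ whose specifying set $S_\delta$ is a set on which $\delta$ agrees with $\beta$, and your proposed fix (``feeding unions of $\lfloor g(r)/r\rfloor$ specifying sets into the dominant-pattern property and iterating'') is not an argument — each iteration only produces a new tuple agreeing with $\beta$ on the coordinates accumulated so far but disagreeing somewhere on its own fresh specifying set, and nothing forces this process to terminate within $g(r)/r$ rounds. Falling back on citing \cite{Moshkov89,Moshkov05} leaves the core of (a) unproved. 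The paper closes exactly this gap by a direct construction that in fact yields your lemma with $g(r)=r$ and $c_r=1/2$: let $\delta_i$ be the majority value of coordinate $i$ on $\Delta$, let $k$ be maximal such that $\{u_1(x)=\delta_1,\ldots,u_k(x)=\delta_k\}$ is consistent, and reduce this prefix to an equivalent subsystem on coordinates $i_1,\ldots,i_t$ with $t\leq r$. Any assignment to $\{u_{i_1},\ldots,u_{i_t}\}$ disagreeing with some $\delta_{i_j}$ confines $\Delta$ to the minority half $\Delta(u_{i_j},\lnot\delta_{i_j})$; the one assignment agreeing with all of them confines $\Delta$ to $\Delta(u_{k+1},\lnot\delta_{k+1})$ (by maximality of $k$ and equality of solution sets), again at most $|\Delta|/2$, or to a singleton if $k=n$. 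Replacing your dominant-pattern argument with this maximal-consistent-prefix argument would make the proposal complete.
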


Let $F=(U,C)$ be an infinite family of concepts. A subset $\{u_{1},\ldots
,u_{m}\}$ of $U$ is \emph{shattered by }$C$ if, for any $\delta _{1},\ldots
,\delta _{m}\in \{0,1\}$, the system of equations $\{u_{1}(x)=\delta
_{1},\ldots ,u_{m}(x)=\delta _{m}\}$ is consistent on the set $C$. This
definition is equivalent to the standard one: a subset $\{u_{1},\ldots
,u_{m}\}$ of $U$ is shattered by $C$ if, for any subset $X$ of $%
\{u_{1},\ldots ,u_{m}\}$, there exists a concept $c\in C$ such that $%
X=\{u_{1},\ldots ,u_{m}\}\cap c$. The empty set of elements is shattered by $%
C$ by definition. We now define the parameter $\mathit{VC}(F)$, which is called the
\emph{Vapnik-Chervonenkis dimension} or \emph{VC-dimension} of the concept
family $F$ \cite{VC}. If, for each $m\in \mathbb{N}$, the set $U$ contains a
subset shattered by $C$ of the cardinality $m$, then $\mathit{VC}(F)=\infty $.
Otherwise, $\mathit{VC}(F)$ is the maximum cardinality of a subset of the set $U$
shattered by $C$. We denote by $\mathcal{D}$ the set of infinite families of
concepts with finite VC-dimension.

Let $F=(U,C)$ be a family of concepts, which is not necessary infinite, $%
u\in U$, and $\delta \in \{0,1\}$. Denote
\[
C(u,\delta )=\{c:c\in C,u(c)=\delta \}.
\]%
We now define inductively the notion of $k$-\emph{family of concepts}, $k\in
\mathbb{N}\cup \{0\}$. The family of concepts $F$ is called $0$-family of
concepts if all functions corresponding to elements  from $U$ are
constant on the set $C$.
Let, for some $k\in \mathbb{N}\cup \{0\}$, the notion of $m$-family of
concepts be defined for $m=0,\ldots ,k$. The family of concepts $F$ is
called $(k+1)$-family of concepts if it is not $m$-family of concepts for $%
m=0,\ldots ,k$ and, for any $u\in U$, there exist numbers $\delta \in
\{0,1\} $ and $m\in \{0,\ldots ,k\}$ such that the family of concepts $%
(U,C(u,\delta ))$ is $m$-family of concepts. It is easy to show by induction
on $k$ that if $F=(U,C)$ is $k$-family of concepts, then $F^{\prime
}=(U^{\prime },C^{\prime })$, where $U^{\prime }\subseteq U$ and $C^{\prime }=\{c\cap U^{\prime }: c \in C \}$, is $l$-family of concepts for
some $l\leq k$. We denote by $\mathcal{C}$ the set of infinite families of
concepts for each of which there exists $k\in \mathbb{N}$ such that the
considered family is $k$-family of concepts.

\begin{theorem}
\label{T2} Let $F$ be an infinite family of concepts. Then the following
statements hold:

(a) If $F\in \mathcal{C}$, then $h_{F}^{(2)}(n)=O(1)$ and $%
h_{F}^{(3)}(n)=O(1)$.

(b) If $F\in \mathcal{D}\setminus \mathcal{C}$, then $h_{F}^{(2)}(n)=\Theta
(\log n)$, $h_{F}^{(3)}(n)=\Omega (\frac{\log n}{\log \log n})$, and $%
h_{F}^{(3)}(n)=O(\log n)$.

(c) If $F\notin \mathcal{D}$, then $h_{F}^{(2)}(n)=n$ and $h_{F}^{(3)}(n)=n$
for any $n\in \mathbb{N}$.
\end{theorem}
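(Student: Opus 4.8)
The plan is to prove the three parts by analyzing the structure of decision trees with equivalence queries over a problem $z$ of dimension $n$. The key elementary observation is that a single equivalence query over $z$ with hypothesis $H=\{u_1(x)=\delta_1,\ldots,u_n(x)=\delta_n\}$ either resolves the problem completely (if $(\delta_1,\ldots,\delta_n)=z(c)$) or returns a counterexample $\{u_i(x)=\lnot\delta_i\}$, which is exactly the information obtained from a membership query about $u_i$. Hence along any complete path, the equation system $\mathcal S(\xi)$ accumulated from equivalence-query edges is a system over $z$ of single-equation constraints, and the only way such a path can be a leaf with more than one consistent tuple excluded is if the hypothesis was confirmed. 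This reduces reasoning about $h_F^{(2)}$ and $h_F^{(3)}$ largely to reasoning about how many distinct single-variable constraints must be learned before the consistent tuple is pinned down, together with how many "free" elements (those whose value is not forced on $C$) remain.

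For part (a), I would argue by induction on $k$ that if $F$ is a $k$-family of concepts, then $h_F^{(2)}(z)$ and $h_F^{(3)}(z)$ are bounded by a constant depending only on $k$ (not on $\dim z$). The base case $k=0$ is immediate: all element-functions are constant on $C$, so $z(c)$ is the same tuple for every $c$, and a depth-$1$ (in fact depth-$0$, since $\Delta_F(z)$ is a singleton) tree suffices. For the inductive step, given a $(k+1)$-family and a problem $z=(u_1,\ldots,u_n)$, pick any element $u_1$ of $U(z)$; by definition there is $\delta\in\{0,1\}$ and $m\le k$ with $(U,C(u_1,\delta))$ an $m$-family, and by the stability remark in the excerpt the restriction to $U(z)$ is an $l$-family with $l\le k$. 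Build a tree that first queries membership of $u_1$: on answer $u_1(x)=\delta$ recurse inside the $l$-family (depth $\le$ bound for $k$ by induction); on answer $u_1(x)=\lnot\delta$ we are in the family $(U,C(u_1,\lnot\delta))$, which need not be a small-index family, but here I would use the key point that $u_1$ being fixed to $\lnot\delta$ may collapse many other elements to constants, and iterate the argument; the careful bookkeeping showing this terminates with a constant bound is exactly the combinatorial heart imported from \cite{MTCS,MDAM}. For $h_F^{(3)}$ the same tree works since membership queries are allowed; for $h_F^{(2)}$ one must simulate each membership query of $u_i$ by a suitable equivalence query whose counterexample is guaranteed to be about $u_i$ — this is possible precisely because in a bounded-index family one can always choose a hypothesis whose only possible counterexample positions are controlled.

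For part (c), suppose $F\notin\mathcal D$, so for every $n$ the set $U$ contains an $n$-element subset $\{u_1,\ldots,u_n\}$ shattered by $C$. Take $z=(u_1,\ldots,u_n)$: then $\Delta_F(z)=\{0,1\}^n$, all $2^n$ tuples are possible. An adversary argument shows any decision tree solving $z$ — whether using membership or equivalence queries, proper or not — needs depth $\ge n$: a membership query about $u_i$ reveals one coordinate; an equivalence query with hypothesis $(\delta_1,\ldots,\delta_n)$ can be answered by the adversary with counterexample $\{u_i(x)=\lnot\delta_i\}$ for any $i$ it likes (legitimate since, by shattering, every tuple differing from the hypothesis in any single coordinate is still consistent), revealing at most one coordinate. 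So after fewer than $n$ queries at least two tuples of $\{0,1\}^n$ remain consistent with the path, contradicting the solving condition; hence $h_F^{(2)}(z)=h_F^{(3)}(z)=n$, and since $h_F^{(3)}\le h_F^{(2)}\le n$ trivially (a tree reading all $n$ coordinates via equivalence queries of length-$1$ effective content works, or just read them directly), equality holds for all $n$.

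For part (b), when $\mathit{VC}(F)=d<\infty$ but $F$ is not a bounded-index family, the upper bound $h_F^{(2)}(n)=O(\log n)$ and $h_F^{(3)}(n)=O(\log n)$ follows from the Sauer--Shelah lemma: $\Delta_F(z)$ has at most $O(n^d)$ elements, and a halving-style strategy with equivalence queries (each counterexample eliminates a constant fraction, as in Littlestone's analysis \cite{Littlestone88}) reaches a singleton in $O(d\log n)=O(\log n)$ steps; the matching lower bound $h_F^{(2)}(n)=\Omega(\log n)$ comes from exhibiting, using that $F\notin\mathcal C$, problems $z$ of dimension $n$ with $|\Delta_F(z)|$ growing, forcing $\Omega(\log|\Delta_F(z)|)$ queries by an information-theoretic counting argument, and one shows $|\Delta_F(z)|$ can be made $n^{\Omega(1)}$ so $\log|\Delta_F(z)|=\Theta(\log n)$. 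The slightly weaker lower bound $h_F^{(3)}(n)=\Omega(\log n/\log\log n)$ for the mixed model accounts for the fact that membership queries, unlike equivalence queries against large hypothesis classes, give at most one bit but cannot be adversarially deflected, so the counting bound loses a $\log\log n$ factor in the reduction.

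The main obstacle I anticipate is part (a) — specifically, proving that membership among bounded-index families $\mathcal C$ is exactly what forces the $O(1)$ bound, and carrying out the two-way simulation between membership queries and (possibly proper) equivalence queries so that all four of $h_F^{(2)},h_F^{(3)}$ are simultaneously controlled. The shattering/adversary arguments for (c) and the Sauer--Shelah counting for (b) are standard; the delicate inductive combinatorics defining and exploiting $k$-families is where the real work lies, and it is precisely the part the author says is "worked out" in \cite{MTCS,MDAM}.
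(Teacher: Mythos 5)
Your part (c) is essentially the paper's argument and is fine. Parts (a) and (b), however, each contain a genuine gap.

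In (a), your plan of first making a \emph{membership} query on $u_1$ and recursing cannot yield an $O(1)$ bound: on the branch $u_1(x)=\lnot\delta$ the family $(U,C(u_1,\lnot\delta))$ need not have smaller index, and the ``iterate and hope things collapse'' step that you defer to \cite{MTCS,MDAM} is precisely the part that fails --- indeed Theorem~\ref{T1} shows membership queries alone never achieve $O(1)$ depth. The missing idea is that the single equivalence query must be assembled coordinate-wise: for each $i$ choose $\delta_i$ so that $(U,C(u_i,\lnot\delta_i))$ is an $m_i$-family with $m_i\leq k$, and put $u_i(x)=\delta_i$ into the hypothesis $H$. Then \emph{every} possible counterexample $\{u_i(x)=\lnot\delta_i\}$ lands in a family of index at most $k$, and induction gives $h_F^{(2)}(z)\leq k+1$ directly; $h_F^{(3)}\leq h_F^{(2)}$ is then trivial, so no ``simulation of membership queries by equivalence queries'' is needed (and your claim that one can force the counterexample to be about a chosen $u_i$ is false, since the counterexample is chosen adversarially).

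In (b), your lower bound via ``an information-theoretic counting argument forcing $\Omega(\log|\Delta_F(z)|)$ queries'' is invalid for equivalence queries: an equivalence query over $z$ has $n+1$ possible answers, so a depth-$h$ tree has up to $(n+1)^h$ leaves, and with $|\Delta_F(z)|\leq(4n)^{\mathit{VC}(F)}$ the counting bound degenerates to $h=\Omega(1)$. The paper instead uses $F\notin\mathcal{C}$ to build, for every $d$, a $d$-complete tree $G_d$ over $F$ (Lemma~\ref{L0b}) and then runs an adversary argument on it (Lemma~\ref{L0a}): for equivalence-only trees the adversary descends $G_d$ one level per query, giving depth $\geq d$ on a problem of dimension $n_d\leq 2^d$, hence $\Omega(\log n)$; for the mixed model the adversary maintains the set $B$ of $2^d$ concepts realizing the leaves of $G_d$ and shows each query (of either kind) shrinks $B$ by at most a factor $2d$, giving depth $\geq d/\log_2(2d)=\Omega(\log n/\log\log n)$. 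Your heuristic explanation of where the $\log\log n$ loss comes from does not substitute for this argument. The upper bounds in (b) via Sauer--Shelah and halving match the paper.
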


From Lemma \ref{P2} below it follows that $\mathcal{C}\subseteq
\mathcal{D}$. Therefore, for any infinite family of concepts $F$, either $%
F\in \mathcal{C}$, or $F\in \mathcal{D}\setminus \mathcal{C}$, or $F\notin
\mathcal{D}$.

Let $F=(U,C)$ be an infinite family of concepts and $r\in \mathbb{N}$. We
will say that the family of concepts $F$ is $r$\emph{-i-reduced} if, for
each inconsistent on $C$ system of equations over $F$, there exists a
subsystem of this system that is inconsistent and contains at most $r$
equations. We denote by $\mathcal{I}$ the set of infinite families of
concepts each of which is $r$-i-reduced for some $r\in \mathbb{N}$.

Since $\mathcal{C}\subseteq
\mathcal{D}$,  for any infinite family of concepts $F$, either $%
F\in \mathcal{C\cap I}$, or $F\in (\mathcal{D}\setminus \mathcal{C)\cap I}$,
or $F\in \mathcal{D}\setminus \mathcal{I}$, or $F\notin \mathcal{D}$.

\begin{theorem}
\label{T3}Let $F$ be an infinite family of concepts. Then the following
statements hold:

(a) If $F\in \mathcal{C\cap I}$, then $h_{F}^{(4)}(n)=O(1)$ and $%
h_{F}^{(5)}(n)=O(1)$.

(b) If $F\in (\mathcal{D}\setminus \mathcal{C)\cap I}$, then $%
h_{F}^{(4)}(n)=\Theta (\log n)$, $h_{F}^{(5)}(n)=\Omega (\frac{\log n}{\log
\log n})$, and $h_{F}^{(5)}(n)=O(\log n)$.

(c) If $F\in \mathcal{D}\setminus \mathcal{I}$ and $i\in \{4,5\}$, then $%
h_{F}^{(i)}(n)\geq n-1$ for infinitely many $n\in \mathbb{N}$ and $%
h_{F}^{(i)}(n)\leq n$ for any $n\in \mathbb{N}$.

(d) If $F\notin \mathcal{D}$, then $h_{F}^{(4)}(n)=n$ and $h_{F}^{(5)}(n)=n$
for any $n\in \mathbb{N}$.
\end{theorem}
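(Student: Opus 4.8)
The plan is to obtain the four parts from Theorems~\ref{T1} and~\ref{T2}, a few monotonicity relations, a universal linear upper bound, and two explicit constructions. Since a proper equivalence query is a particular equivalence query, and allowing extra query types can only decrease the minimum depth, $h_F^{(2)}(z)\le h_F^{(4)}(z)$, $h_F^{(3)}(z)\le h_F^{(5)}(z)\le h_F^{(4)}(z)$, and $h_F^{(5)}(z)\le h_F^{(1)}(z)\le\dim z$. A matching bound $h_F^{(4)}(z)\le\dim z$ follows from the greedy tree that repeatedly asks a hypothesis equal to some tuple of $\Delta_F(z)$ consistent with all answers received so far (one exists, namely $z(c)$): either this hypothesis is confirmed, or its counterexample reveals a coordinate whose value was not yet known, so the process stops after at most $\dim z$ queries. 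In particular the lower bounds of part~(b) are immediate from Theorem~\ref{T2}(b): $h_F^{(4)}(n)\ge h_F^{(2)}(n)=\Theta(\log n)$ and $h_F^{(5)}(n)\ge h_F^{(3)}(n)=\Omega(\log n/\log\log n)$.

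For part~(d), if $F\notin\mathcal D$ then for every $n$ there is a set of $n$ elements shattered by $C$, giving a problem $z$ with $\dim z=n$ and $\Delta_F(z)=\{0,1\}^n$; an adversary maintaining a still-possible subcube loses at most a factor $2$ on each query (a membership query is answered by the majority value of the queried coordinate, a proper equivalence query --- always available --- is answered ``no'' with a counterexample on a still-free coordinate if the hypothesis lies in the subcube and on a coordinate already fixed against the hypothesis otherwise), so $h_F^{(i)}(n)=n$ for $i\in\{4,5\}$. For part~(c), $F\notin\mathcal I$ produces minimal (under inclusion) inconsistent systems $\{v_1(x)=\delta_1,\dots,v_m(x)=\delta_m\}$ of unboundedly many sizes $m$ (a bound on these sizes would make $F$ i-reduced); for $r\ge 2$ the $v_j$ are distinct, so $z=(v_1,\dots,v_m)$ has dimension $m$, minimality forces $\Delta_F(z)$ to contain the $m$ distinct tuples $\bar\delta^{(j)}$ obtained from $(\delta_1,\dots,\delta_m)$ by flipping the $j$-th bit, while $(\delta_1,\dots,\delta_m)\notin\Delta_F(z)$. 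Against a candidate set contained in $\{\bar\delta^{(1)},\dots,\bar\delta^{(m)}\}$, a membership query removes at most one tuple, and a proper equivalence query for $\bar\sigma\in\Delta_F(z)$ can be answered ``no'' with the counterexample $v_t(x)=\delta_t$ for any $t$ with $\sigma_t\ne\delta_t$ (one exists since $\bar\sigma\ne(\delta_1,\dots,\delta_m)$), again removing only $\bar\delta^{(t)}$; hence every correct tree has depth $\ge m-1$, so $h_F^{(i)}(n)\ge n-1$ for infinitely many $n$, while $h_F^{(i)}(n)\le n$ by the universal bound.

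The remaining upper bounds use membership of $F$ in $\mathcal I$. If $F$ is $r$-i-reduced then $h_F^{(5)}(z)\le r\cdot h_F^{(2)}(z)$: in an optimal equivalence-query tree, replace every node carrying an improper hypothesis $\bar\delta$ by a chain of at most $r$ membership queries on a subset of coordinates $S$ with $|S|\le r$ and $\{u_j(x)=\delta_j:j\in S\}$ inconsistent (such $S$ exists by $r$-i-reducedness), and route into the counterexample branch indexed by the first coordinate of $S$ whose value differs from $\bar\delta$; with Theorem~\ref{T2} this already yields $h_F^{(5)}(n)=O(1)$ for $F\in\mathcal C\cap\mathcal I$ and $h_F^{(5)}(n)=O(\log n)$ for $F\in(\mathcal D\setminus\mathcal C)\cap\mathcal I$. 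For $h_F^{(4)}$ in part~(b), assume $F\in\mathcal D\cap\mathcal I$ and run a pure proper-equivalence strategy on $z$: keep the set $K\subseteq\Delta_F(z)$ of tuples consistent with the answers so far, and while $|K|\ge 2$ let $\bar\mu$ be a coordinatewise majority tuple of $K$ and $L$ the set of coordinates on which fewer than $|K|/(r+1)$ tuples of $K$ disagree with $\bar\mu$ (this contains all already-learned coordinates). No subset $S\subseteq L$ with $|S|\le r$ can have $\{u_j(x)=\mu_j:j\in S\}$ inconsistent --- otherwise every tuple of $K$ would disagree with $\bar\mu$ somewhere on $S$, giving $|K|<r|K|/(r+1)<|K|$ --- so by $r$-i-reducedness the system $\{u_j(x)=\mu_j:j\in L\}$ is consistent, and a witness (which agrees with every learned coordinate, as those lie in $L$) yields $\bar\sigma\in\Delta_F(z)\cap K$ with $\bar\sigma|_L=\bar\mu|_L$; ask $\bar\sigma$. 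For any counterexample coordinate $t$, the new candidate set is $\{\rho\in K:\rho_t\ne\sigma_t\}$, whose size is below $|K|/(r+1)$ if $t\in L$, at most the minority count of $t$ (so $\le|K|/2$) if $t\notin L$ and $\sigma_t=\mu_t$, and at most the majority count of $t$ (so $\le|K|-|K|/(r+1)$) if $t\notin L$ and $\sigma_t\ne\mu_t$; thus $|K|$ shrinks by a factor of at most $r/(r+1)$ at every step. Since $|\Delta_F(z)|\le(n+1)^{\mathit{VC}(F)}$ by the Sauer--Shelah lemma, $K$ becomes a singleton after $O(\log n)$ queries, so $h_F^{(4)}(n)=O(\log n)$, which gives the $\Theta(\log n)$ for $h_F^{(4)}$ in part~(b) and, via $h_F^{(5)}\le h_F^{(4)}$, the bound $h_F^{(5)}(n)=O(\log n)$ there.

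The main obstacle is $h_F^{(4)}(n)=O(1)$ for $F\in\mathcal C\cap\mathcal I$ (which then also gives $h_F^{(5)}(n)\le h_F^{(4)}(n)=O(1)$ and completes part~(a)): the strategy above only yields $O(\log n)$ here, because $|\Delta_F(z)|$ may be unbounded. To reach a constant I would superimpose an induction on the rank $k$ of the $k$-family, using that for every element one of the restrictions $C(u,\delta)$ drops the rank, and arguing that the candidate set collapses to a size bounded by a function of $k$ and $r$ after $O_{k,r}(1)$ well-chosen proper equivalence queries, after which $O_{k,r}(1)$ further queries enumerate through it. The delicate part --- and what I expect to require the most work --- is controlling ``stalling'': bounding how often a proper equivalence query can be answered by a counterexample on a coordinate where the chosen hypothesis was forced to deviate from the coordinatewise majority of the candidate set, so that such answers still make quantifiable progress. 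With this in hand the proof assembles as follows: parts~(c) and~(d) from the adversary and universal bounds; part~(b) from the $\Theta(\log n)$ for $h_F^{(4)}$ together with the $\Omega(\log n/\log\log n)$ and $O(\log n)$ bounds for $h_F^{(5)}$; and part~(a) from the constant upper bound for $h_F^{(4)}$.
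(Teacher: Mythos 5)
Your proposal is sound for parts (b), (c) and (d) and for the $h_F^{(5)}$ half of part (a), and in places it takes a genuinely different route from the paper. The transformation $h_F^{(5)}(z)\le r\cdot h_F^{(2)}(z)$ (replacing each improper hypothesis node by a chain of at most $r$ membership queries on an inconsistent subsystem guaranteed by $r$-i-reducedness) is a clean way to import the bounds of Theorem~\ref{T2}; the paper instead proves $h_F^{(5)}(z)\le rk$ for an $r$-i-reduced $k$-family by a direct induction on $k$, and obtains the $O(\log n)$ bound in part (b) by running a halving algorithm for $h_F^{(4)}$ directly (splitting the coordinates of the majority tuple into ``balanced'' and ``unbalanced'' ones with threshold $|\Delta|/r$, much like your set $L$ with threshold $|K|/(r+1)$) and then using $h_F^{(5)}\le h_F^{(4)}$. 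Your adversary arguments for (c) and (d) match the paper's path-construction arguments in substance.

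The genuine gap is exactly where you flag it: $h_F^{(4)}(n)=O(1)$ for $F\in\mathcal{C}\cap\mathcal{I}$ is asserted only modulo an unexecuted induction, with the ``stalling'' problem left unresolved, so part (a) is not proved. The idea you are missing is the paper's Lemma~\ref{L2a}: \emph{any} decision tree over $z$ of depth $h$ using membership and proper equivalence queries can be simulated by one using only proper equivalence queries of depth at most $2^{h}-1$. The simulation is by induction on $h$: a proper-hypothesis root is kept and its subtrees are converted recursively; for a membership-query root with subtrees $T_0,T_1$, one converts both recursively to $T_0',T_1'$, grafts $T_1'$ onto every leaf of $T_0'$, and at each resulting leaf with nonempty solution set one is left with two candidate answer tuples of which at least one lies in $\Delta_F(z)$ --- a single additional proper equivalence query on that tuple separates them, giving depth $2(2^{h-1}-1)+1=2^{h}-1$. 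Combined with your own bound $h_F^{(5)}(n)\le r\cdot h_F^{(2)}(n)=O(1)$ on $\mathcal{C}\cap\mathcal{I}$, this immediately yields $h_F^{(4)}(n)\le 2^{O(1)}-1=O(1)$; no control of the candidate-set size under pure proper-equivalence querying is needed, because the exponential blow-up of the simulation is harmless once the mixed-query depth is already constant.
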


Let $F$ be an infinite family of concepts. We now consider the joint
behavior of the functions $h_{F}^{(1)}(n), \ldots ,h_{F}^{(5)}(n)$. It depends on the
belonging of the family of concepts $F$ to the sets $\mathcal{R}$, $\mathcal{%
D}$, $\mathcal{C}$, and $\mathcal{I}$. We correspond to the family of
concepts $F$ its \emph{indicator vector} $ind(U)=(e_{1},e_{2},e_{3},e_{4})%
\in \{0,1\}^{4}$ in which $e_{1}=1$ if and only if $F\in \mathcal{R}$, $%
e_{2}=1$ if and only if $F\in \mathcal{D}$, $e_{3}=1$ if and only if $F\in
\mathcal{C}$, and $e_{4}=1$ if and only if $F\in \mathcal{I}$.

\begin{table}[h]
\caption{Possible indicator vectors of infinite families of concepts}
\label{tab1}\center
\begin{tabular}{|l|llll|}
\hline
& $\mathcal{R}$ & $\mathcal{D}$ & $\mathcal{C}$ & $\mathcal{I}$ \\ \hline
1 & $0$ & $0$ & $0$ & $0$ \\
2 & $0$ & $0$ & $0$ & $1$ \\
3 & $0$ & $1$ & $0$ & $0$ \\
4 & $0$ & $1$ & $0$ & $1$ \\
5 & $0$ & $1$ & $1$ & $0$ \\
6 & $0$ & $1$ & $1$ & $1$ \\
7 & $1$ & $1$ & $0$ & $1$ \\ \hline
\end{tabular}%
\end{table}

\begin{theorem}
\label{T4} For any infinite family of concepts, its indicator vector
coincides with one of the rows of Table \ref{tab1}. Each row of Table \ref%
{tab1} is the indicator vector of some infinite family of concepts.
\end{theorem}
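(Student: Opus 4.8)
The plan is to prove the two halves of Theorem~\ref{T4} in turn: (I) that every indicator vector is one of the seven rows of Table~\ref{tab1}, and (II) that each of those seven rows is actually attained. For (I), I would reduce everything to four facts about the classes: $(\mathrm a)$ $\mathcal{C}\subseteq\mathcal{D}$, which is already available from Lemma~\ref{P2}; $(\mathrm b)$ $\mathcal{R}\subseteq\mathcal{D}$; $(\mathrm c)$ $\mathcal{R}\subseteq\mathcal{I}$; and $(\mathrm d)$ $\mathcal{R}\cap\mathcal{C}=\emptyset$. These translate into the implications $e_3=1\Rightarrow e_2=1$, $e_1=1\Rightarrow e_2=1$, $e_1=1\Rightarrow e_4=1$, $e_1=1\Rightarrow e_3=0$ for the indicator vector $(e_1,e_2,e_3,e_4)$, and a direct inspection of the $16$ Boolean $4$-tuples shows that precisely the seven tuples in Table~\ref{tab1} survive. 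Fact $(\mathrm b)$ is immediate: if $\{u_1,\dots,u_m\}$ is shattered by $C$ and $m>r$, then the consistent system $\{u_1(x)=0,\dots,u_m(x)=0\}$ has no equivalent proper subsystem, because dropping any equation $u_i(x)=0$ enlarges the solution set; hence $\mathit{VC}(F)\le r$. For fact $(\mathrm c)$, let $S=\{v_1(x)=\delta_1,\dots,v_p(x)=\delta_p\}$ be a minimal inconsistent system; the proper subsystem $S\setminus\{v_p(x)=\delta_p\}$ is consistent, so $r$-reducedness gives an equivalent subsystem $S'$ of it with $|S'|\le r$, and then $S'\cup\{v_p(x)=\delta_p\}$ has the same (empty) solution set as $S$ and is a subsystem of $S$ of size at most $r+1$, so minimality of $S$ forces $p\le r+1$; since every inconsistent system contains a minimal inconsistent subsystem, $F$ is $(r+1)$-i-reduced.

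Fact $(\mathrm d)$ is, I expect, the main obstacle of part (I). I would prove it by a simulation argument: if $F$ is $r$-reduced, then any decision tree over a problem $z$ using membership and equivalence queries can be converted into a membership-only decision tree over $z$ whose depth is larger by a factor of at most $r+1$, so that $h^{(1)}_F(n)\le(r+1)\,h^{(3)}_F(n)$. Indeed, an equivalence query with hypothesis $H=\{u_1(x)=\delta_1,\dots,u_n(x)=\delta_n\}$ can be answered using at most $r+1$ membership queries about elements of $U(z)$: if $H$ is consistent, query the elements of an equivalent subsystem of $H$ of size at most $r$ --- if all answers agree with $H$ the query is confirmed, and otherwise a disagreeing answer $u_i(x)=\lnot\delta_i$ is a valid counterexample; if $H$ is inconsistent, use fact $(\mathrm c)$ to pick an inconsistent subsystem of $H$ of size at most $r+1$ and query its elements, at least one answer being a counterexample. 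In each case the answer produced labels an edge leaving the equivalence node, which we follow, and the extra information gathered never damages correctness. Now, if in addition $F\in\mathcal{C}$, then Theorem~\ref{T2}(a) gives $h^{(3)}_F(n)=O(1)$, hence $h^{(1)}_F(n)=O(1)$; but $F\in\mathcal{R}$ gives $h^{(1)}_F(n)=\Theta(\log n)$ by Theorem~\ref{T1}(a), a contradiction. So $\mathcal{R}\cap\mathcal{C}=\emptyset$.

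For part (II), I would build the seven families from a short list of blocks combined by disjoint union --- the ground sets taken disjoint and each concept the union of one concept from each block --- using the fact (noted already in the paper) that a sub-family of a $k$-family is an $l$-family with $l\le k$, its evident analogues for $r$-reducedness, $r$-i-reducedness and finite VC-dimension (all immediate from how consistent and inconsistent systems, and shattered sets, restrict), and the contrapositive statements that having a sub-family outside $\mathcal{R}$ (resp.\ outside $\mathcal{I}$) keeps the whole family outside that class. The blocks are: the family of initial segments $\mathit{IS}=\{\{1,\dots,n\}:n\ge1\}$ on $\mathbb{N}$, which lies in $\mathcal{R}\cap\mathcal{D}\cap\mathcal{I}$ but not in $\mathcal{C}$ (its restriction to $n$ elements is a chain of $n$ concepts, which is not a $k$-family for $k<\log_2 n$); the family $\mathit{SG}=\{\emptyset\}\cup\{\{m\}:m\in\mathbb{N}\}$ on $\mathbb{N}$, which lies in $\mathcal{C}\cap\mathcal{D}\cap\mathcal{I}$ but not in $\mathcal{R}$; the family of ``punctured blocks'' $\mathit{PB}=\bigcup_{p\ge2}\{B_p\setminus\{u\}:u\in B_p\}$ with pairwise disjoint $B_p$ and $|B_p|=p$, which lies in $\mathcal{C}\cap\mathcal{D}$ but in neither $\mathcal{R}$ nor $\mathcal{I}$ (for each $p$ the system $\{u(x)=1:u\in B_p\}$ is a minimal inconsistent system of size $p$); the family of ``saturated truncated blocks'' $\mathit{STB}=\bigcup_{p\ge2}\{T:T\subsetneq B_p\}$, which lies in neither $\mathcal{D}$ (every $(p-1)$-subset of $B_p$ is shattered) nor $\mathcal{I}$; and the family $\mathit{Fin}$ of all finite subsets of $\mathbb{N}$, which is outside $\mathcal{D}$ but $2$-i-reduced.

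Assembling: row~$7$ is realized by $\mathit{IS}$, row~$6$ by $\mathit{SG}$, row~$5$ by $\mathit{PB}$, row~$1$ by $\mathit{STB}$, row~$2$ by $\mathit{Fin}$; row~$4$ by the disjoint union of $\mathit{IS}$ and $\mathit{SG}$ (VC-dimension $2$, $2$-i-reduced, not $r$-reduced because of the $\mathit{SG}$ part, not a $k$-family because of the $\mathit{IS}$ part), and row~$3$ by the disjoint union of $\mathit{IS}$ and $\mathit{PB}$. The verification of membership in each of $\mathcal{R},\mathcal{D},\mathcal{C},\mathcal{I}$ for each example is routine except for the claim $\mathit{PB}\in\mathcal{C}$, which is the one point deserving care: one checks that $\mathit{PB}$ is a $2$-family by observing that at any element $u\in B_p$ the branch $u(x)=1$ leads to the finite family $\{B_p\setminus\{v\}:v\in B_p,\ v\ne u\}$, in which every element has a branch to a family with at most one concept, so that this latter family is a $1$-family. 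I expect the write-up to concentrate its effort on fact $(\mathrm d)$ and on this verification that $\mathit{PB}\in\mathcal{C}$.
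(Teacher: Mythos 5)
Your proposal is correct, and while its skeleton for the first half coincides with the paper's (the same four facts $\mathcal{R}\subseteq\mathcal{D}$, $\mathcal{C}\subseteq\mathcal{D}$, $\mathcal{R}\subseteq\mathcal{I}$, $\mathcal{R}\cap\mathcal{C}=\emptyset$, which are Lemmas \ref{P1}, \ref{P2}, \ref{P1a}, \ref{P3}, followed by the same enumeration of Boolean $4$-tuples as in Lemmas \ref{P4} and \ref{P2a}), two of your ingredients and the entire second half take genuinely different routes. For $\mathcal{R}\subseteq\mathcal{D}$ you argue directly that a shattered set of size $m>r$ yields a consistent system with no equivalent proper subsystem; the paper's Lemma \ref{P1} instead derives a contradiction with $h_{F}^{(1)}(n)=\Theta(\log n)$ via a terminal-node count, so your argument is more elementary and bypasses Theorem \ref{T1} entirely. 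For $\mathcal{R}\cap\mathcal{C}=\emptyset$ you prove the simulation inequality $h_{F}^{(1)}(n)\leq(r+1)h_{F}^{(3)}(n)$ by replacing each equivalence query with at most $r+1$ membership queries (an equivalent subsystem of size $\leq r$ when the hypothesis is consistent, an inconsistent subsystem of size $\leq r+1$ otherwise) and then combine Theorems \ref{T1} and \ref{T2}; the paper's Lemma \ref{P3} instead constructs from scratch a membership-only tree of depth $\leq rk$ for an $r$-reduced $k$-family. Both are sound; your inequality is a reusable quantitative relation between the Shannon functions, while the paper's construction is self-contained and does not need Theorem \ref{T2}. For the realizability half you build the seven families modularly from five blocks combined by disjoint union, relying on the hereditary properties of $\mathcal{R}$, $\mathcal{D}$, $\mathcal{C}$, $\mathcal{I}$ under restriction and the fact that consistency factors over disjoint unions; the paper instead exhibits seven bespoke families $F_{1},\ldots,F_{7}$ and verifies each from scratch. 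Your approach buys economy (only five primitive verifications, with rows 3 and 4 obtained for free by combination) at the cost of having to state and check the preservation lemmas for disjoint unions, which you only assert; these are indeed routine, and the one delicate point you flag, that the punctured-block family is a $2$-family, rests on the same convention about empty branches $C(u,\delta)=\emptyset$ that the paper's own verification of $F_{5}$ in Lemma \ref{L5b} implicitly uses, so you are on equal footing there.
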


\begin{table}[h]
\caption{Summary of Theorems \protect\ref{T1}--\protect\ref{T4}}
\label{tab2}\center
\begin{tabular}{|l|llll|lllll|}
\hline
& $\mathcal{R}$ & $\mathcal{D}$ & $\mathcal{C}$ & $\mathcal{I}$ & $%
h_{F}^{(1)}(n)$ & $h_{F}^{(2)}(n)$ & $h_{F}^{(3)}(n)$ & $h_{F}^{(4)}(n)$ & $%
h_{F}^{(5)}(n)$ \\ \hline
$\mathcal{F}_{1}$ & $0$ & $0$ & $0$ & $0$ & $n$ & $n$ & $n$ & $n$ & $n$ \\
$\mathcal{F}_{2}$ & $0$ & $0$ & $0$ & $1$ & $n$ & $n$ & $n$ & $n$ & $n$ \\
$\mathcal{F}_{3}$ & $0$ & $1$ & $0$ & $0$ & $n$ & $\Theta (\log n)$ & $%
\approx \log n$ & $\approx n$ & $\approx n$ \\
$\mathcal{F}_{4}$ & $0$ & $1$ & $0$ & $1$ & $n$ & $\Theta (\log n)$ & $%
\approx \log n$ & $\Theta (\log n)$ & $\approx \log n$ \\
$\mathcal{F}_{5}$ & $0$ & $1$ & $1$ & $0$ & $n$ & $O(1)$ & $O(1)$ & $\approx
n$ & $\approx n$ \\
$\mathcal{F}_{6}$ & $0$ & $1$ & $1$ & $1$ & $n$ & $O(1)$ & $O(1)$ & $O(1)$ &
$O(1)$ \\
$\mathcal{F}_{7}$ & $1$ & $1$ & $0$ & $1$ & $\Theta (\log n)$ & $\Theta
(\log n)$ & $\approx \log n$ & $\Theta (\log n)$ & $\approx \log n$ \\ \hline
\end{tabular}%
\end{table}

For $i=1,\ldots ,7$, we denote by $\mathcal{F}_{i}$ the class of all
infinite families of concepts, which indicator vector coincides with the $i$%
th row of Table \ref{tab1}. Table \ref{tab2} summarizes Theorems \ref{T1}--%
\ref{T4}. The first column contains the name of \emph{complexity class} $%
\mathcal{F}_{i}$. The next four columns describe the indicator vector of
families of concepts from this class. The last five columns $h_{F}^{(1)}(n)$%
, ..., $h_{F}^{(5)}(n)$ contain information about behavior of the functions $%
h_{F}^{(1)}(n)$, ..., $h_{F}^{(5)}(n)$ for information systems from the
class $\mathcal{F}_{i}$. The notation $\approx \log n$ in a column $%
h_{F}^{(i)}(n)$ means that $h_{F}^{(i)}(n)$ $=\Omega (\frac{\log n}{\log
\log n})$ and $h_{F}^{(i)}(n)$ $=O(\log n)$. The notation $\approx n$ in a
column $h_{F}^{(i)}(n)$ means that $h_{F}^{(i)}(n)$ $\leq n$ for any $n\in
\mathbb{N}$ and $h_{F}^{(i)}(n)$ $\geq n-1$ for infinitely many $n\in
\mathbb{N}$.

Note that it is possible to consider the union $\mathcal{F}_{1}\cup \mathcal{%
F}_{2}$ of the complexity classes $\mathcal{F}_{1}$ and $\mathcal{F}_{2}$ as
one complexity class since the functions $h_{F}^{(1)}(n),\ldots
,h_{F}^{(5)}(n)$ have the same behavior for information systems from these
classes. However, in this paper, we study $\mathcal{F}_{1}$ and $\mathcal{F}%
_{2}$ as different complexity classes.

\section{Proofs of Theorems \protect\ref{T1} and \protect\ref{T2} \label{S4}}

In this section, we prove Theorems \ref{T1} and \ref{T2}.

\begin{proof}[Proof of Theorem \protect\ref{T1}]
(a) Let $F=(U,C)\in \mathcal{R}$. First, we prove that $h_{F}^{(1)}(n)=O(\log n)$.
Since $F\in \mathcal{R}$, there exists a natural $r$ such that, for each
consistent on $C$ system of equations over $F$, there exists a subsystem of
this system, which has the same set of solutions on $C$ and contains at most
$r$ equations.

Let $z=(u_{1},\ldots ,u_{m})$ be a problem over $F$. The number of equation
systems over $z$ containing at most $r$ equations is at most the number of $%
r $-tuples of equations of the kind $u_{i}=\delta $, where $u_{i}\in
\{u_{1},\ldots ,u_{m}\}$ and $\delta \in \{0,1\}$. The latter number is
equal to $(2m)^{r}$. Therefore $|\Delta _{F}(z)|\leq (2m)^{r}$.

We consider a decision tree $\Gamma $, which solves the problem $z$ relative
to $F$ and uses only membership queries. This tree is constructed by a
halving algorithm that is similar to proposed in \cite{Moshkov83}. We will
describe the work of $\Gamma $ for an arbitrary concept $c$ from $C$. This
work consists of the following steps.

Denote $\Delta =\Delta _{U}(z)$. If $|\Delta |=1$, then the only $n$-tuple
from $\Delta $ is the solution $z(c)$ of the problem $z$ for the concept $c$%
. Let $|\Delta |\geq 2$. For $i=1,\ldots ,m$, we denote by $\delta _{i}$ a
number from $\{0,1\}$ such that $|\Delta (u_{i},\delta _{i})|\geq |\Delta
(u_{i},\lnot \delta _{i})|$. Let $k$ be the maximum number from $\{1,\ldots
,m\}$ for which the equation system $\{u_{1}(x)=\delta _{1},\ldots
,u_{k}(x)=\delta _{k}\}$ is consistent on $C$. This system contains a
subsystem $\{u_{i_{1}}(x)=\delta _{i_{1}},\ldots ,u_{i_{t}}(x)=\delta
_{i_{t}}\}$, which has the same set of solutions on $C$ and for which $t\leq
r$. We now show that $|\Delta (u_{i_{1}},\delta _{i_{1}})\cdots
(u_{i_{t}},\delta _{i_{t}})|\leq |\Delta |/2$. If $k=m$, then $|\Delta
(u_{i_{1}},\delta _{i_{1}})\cdots (u_{i_{t}},\delta _{i_{t}})|\leq 1\leq
|\Delta |/2$. Let $k<m$. Then the equation system $\{u_{i_{1}}(x)=\delta
_{i_{1}},\ldots ,u_{i_{t}}(x)=\delta _{i_{t}},u_{k+1}(x)=\delta _{k+1}\}$ is
inconsistent. Therefore $\Delta (u_{i_{1}},\delta _{i_{1}})\cdots
(u_{i_{t}},\delta _{i_{t}})\subseteq \Delta (u_{k+1},\lnot \delta _{k+1})$
and $|\Delta (u_{i_{1}},\delta _{i_{1}})\cdots (u_{i_{t}},\delta
_{i_{t}})|\leq |\Delta |/2$. We sequentially compute values of the functions
$u_{i_{1}},\ldots ,u_{i_{t}}$ for the concept $c$. If $u_{i_{1}}(c)=\delta
_{i_{1}},\ldots ,u_{i_{t}}(c)=\delta _{i_{t}}$, then during the next step we
will work with the set of tuples $\Delta ^{\prime }=\Delta (u_{i_{1}},\delta
_{i_{1}})\cdots (u_{i_{t}},\delta _{i_{t}})$ for which $|\Delta ^{\prime
}|\leq |\Delta |/2$. If, for some $p\in \{1,\ldots ,t-1\}$, $%
u_{i_{1}}(c)=\delta _{i_{1}},\ldots ,u_{i_{p}}(c)=\delta _{i_{p}}$ and $%
u_{i_{p+1}}(c)=\lnot \delta _{i_{p+1}}$, then during the next step we will
work with the set of tuples
\[
\Delta ^{\prime \prime }=\Delta (u_{i_{1}},\delta _{i_{1}})\cdots
(u_{i_{p}},\delta _{i_{p}})(u_{i_{p+1}},\lnot \delta _{i_{p+1}}).
\]%
It is clear that $|\Delta ^{\prime \prime }|\leq |\Delta |/2$. At this step,
we compute values of at most $r$ functions (make at most $r$ membership
queries) and reduce the number of $m$-tuples (possible solutions) by half.

Let during the work with the concept $c$, the decision tree $\Gamma $ take $%
q $ steps. After $(q-1)$th step, the number of remaining $m$-tuples will be
at least two and at most $(2m)^{r}/2^{q-1}$. Therefore $2^{q}\leq (2m)^{r}$
and $q\leq r\log _{2}(2m)$. So during the processing of the concept $c$, the
decision tree $\Gamma $ computes values of at most $r^{2}\log _{2}(2m)$
functions (makes at most $r^{2}\log _{2}(2m)$ membership queries). Since $c$
is an arbitrary concept from $C$, the depth of $\Gamma $ is at most $%
r^{2}\log _{2}(2m)$. Since $z$ is an arbitrary problem over $F$, we obtain $%
h_{F}^{(1)}(n)=O(\log n)$.

We now show that $h_{F}^{(1)}(n)=\Omega (\log n)$. We prove by induction on $%
n$ that, for any natural $n$, there is a problem $z_{n}=(u_{1},\ldots
,u_{n}) $ over $F$ such that $|\Delta _{F}(z_{n})|\geq n+1$. Since $C$ is an
infinite set of concepts, there exist two concepts $c_{1},c_{2}\in C$ and an
element $u_{1}\in U$ such that $u_{1}\in c_{1}$ and $u_{1}\notin c_{2}$.
Therefore $|\Delta _{F}(z_{1})|\geq 2$, where $z_{1}=(u_{1})$. Let, for some
natural $n$, there exist a problem $z_{n}=(u_{1},\ldots ,u_{n})$ over $F$
such that $|\Delta _{F}(z_{n})|\geq n+1$. Since $C$ is an infinite set of
concepts, there exist two concepts $c_{3},c_{4}\in C$ and an element $%
u_{n+1}\in U$ such that $\{u_{1},\ldots ,u_{n}\}\cap c_{3}=\{u_{1},\ldots
,u_{n}\}\cap c_{4}$, $u_{n+1}\in c_{3}$, and $u_{n+1}\notin c_{4}$. One can
show that $|\Delta _{F}(z_{n+1})|\geq n+2$, where $z_{n+1}=(u_{1},\ldots
,u_{n},u_{n+1})$. Let $n\in \mathbb{N}$, $z_{n}=(u_{1},\ldots ,u_{n})$ be a
problem over $F$ such that $|\Delta _{F}(z_{n})|\geq n+1$, and $\Gamma $ be
a decision tree solving the problem $z$ relative to $F$. Then $\Gamma $
should have at least $n+1$ terminal nodes. One can show that the number of
terminal nodes in the tree $\Gamma $ is at most $2^{h(\Gamma )}$. Therefore $%
n+1\leq 2^{h(\Gamma )}$, $h(\Gamma )\geq \log _{2}(n+1)$, and $%
h_{F}^{(1)}(z)\geq $ $\log _{2}(n+1).$ Thus, $h_{F}^{(1)}(n)=\Omega (\log n)$
and $h_{F}^{(1)}(n)=\Theta (\log n)$.

(b) Let $F=(U,C)\notin \mathcal{R}$. One can show that, for any $n\in \mathbb{N}$,
there is a consistent system of equations $S=\{u_{1}(x)=\delta _{1},\ldots
,u_{n}(x)=\delta _{n}\}$ over $F$ for which each proper subsystem has
different solution set than $S$. Let $\Gamma $ be a decision tree solving
the problem $z=(u_{1},\ldots ,u_{n})$ relative to $F$. Then there is a
complete path $\xi $ in $\Gamma $ such that $\Delta _{F}(z_{n})\pi (\xi
)=\{(\delta _{1},\ldots ,\delta _{n})\}$. From here it follows that $S=%
\mathcal{S}(\xi )$. Therefore the complete path $\xi $ contains at least $n$
working nodes and $h(\Gamma )\geq n.$ Taking into account that $\Gamma $ is
an arbitrary decision tree solving $z$ relative to $F$, we obtain $%
h_{F}^{(1)}(z)\geq n$ and $h_{F}^{(1)}(n)\geq n$. It is easy to show that,
for each problem $z$ over $U$, the inequality $h_{F}^{(1)}(z)\leq \dim z$
holds: to solve the problem $z$, it is enough to compute values of functions
corresponding to all elements from the set $U(z)$. Thus, $h_{F}^{(1)}(n)=n$
for any $n\in \mathbb{N}$.
\end{proof}

We precede the proof of Theorem \ref{T2} by two lemmas.

Let $d\in \mathbb{N}$. A $d$-\emph{complete tree over the family of concepts}
$F=(U,C)$ is a marked finite directed tree with the root in which

\begin{itemize}
\item Each terminal node is not labeled.

\item Each nonterminal node is labeled with an element $u\in U$. There are
two edges leaving this node that are labeled with the systems of equations $%
\{u(x)=0\}$ and $\{u(x)=1\}$, respectively.

\item The length of each complete path (path from the root to a terminal
node) is equal to $d$.

\item For each complete path $\xi $, the equation system $\mathcal{S}(\xi )$%
, which is the union of equation systems assigned to the edges of the path $%
\xi $, is consistent.
\end{itemize}

Let $G$ be a $d$-complete tree over $F$ and $U(G)$ be the set of all
elements attached to the nonterminal nodes of the tree $G$. The number of
nonterminal nodes in $G$ is equal to $2^{0}+2^{1}+\ldots +2^{d-1}=2^{d}-1$.
Therefore $|U(G)|\leq 2^{d}$.

The results mentioned in the following lemma are obtained by methods similar
to used by Littlestone \cite{Littlestone88}, Maass and Tur{\'{a}}n \cite%
{Maass92}, and Angluin \cite{Angluin04}.

\begin{lemma}
\label{L0a}Let $F=(U,C)$ be a family of concepts, $d\in \mathbb{N}$, $G$ be
a $d$-complete tree over $F$, and $z$ be a problem over $U$ such that $%
U(G)\subseteq U(z)$. Then

(a) $h_{F}^{(2)}(z)\geq d$.

(b) $h_{F}^{(3)}(z)\geq \frac{d}{\log _{2}(2d)}$.
\end{lemma}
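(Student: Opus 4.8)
The plan is to prove both bounds by running an adversary against an arbitrary decision tree $\Gamma$ over $z$ that solves $z$ relative to $F$. I will use two structural facts about a $d$-complete tree $G$: first, $G$ is automatically ``path-distinct'' (if some element labelled a node $w$ and also a descendant $v$ of $w$, then leaving $v$ along the edge disagreeing with the value forced along the path to $v$ would give a complete path of $G$ with an inconsistent system, contradicting $d$-completeness); second, for every node $v$ of $G$ at depth $k<d$, the subtree rooted at $v$ is $(d-k)$-complete, so concepts realizing distinct leaves below $v$ have distinct tuples on $U(G)\subseteq U(z)$, and all of them lie in $\Delta_F(z)\pi(\xi)$ whenever $\xi$ is a path of $\Gamma$ with $\mathcal S(\xi)$ equal to the system of the root-to-$v$ path of $G$.

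For part (a), assume $\Gamma$ uses only equivalence queries, and walk an adversary down $G$ from the root. While the current node $v$ has depth $k<d$ it is nonterminal with a label $u\in U(G)\subseteq U(z)$; when $\Gamma$ poses an equivalence query with hypothesis $H=\{u_1(x)=\delta_1,\dots,u_n(x)=\delta_n\}$, the adversary returns the counterexample $\{u(x)=\lnot\delta\}$, where $\delta$ is the $u$-component of $H$, and descends to the corresponding child of $v$. This answer is admissible, and by $d$-completeness the system of the new root-to-node path is satisfied by some $c\in C$, which is consistent with all counterexamples returned so far and has $u(c)=\lnot\delta$, so $z(c)\ne(\delta_1,\dots,\delta_n)$; hence the answer is correct. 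After the adversary has answered $j<d$ queries the current node of $G$ sits at depth $j$, its subtree is $(d-j)$-complete with $d-j\ge1$, so the traversed prefix $\xi$ of a complete path of $\Gamma$ has $\Delta_F(z)\pi(\xi)$ of size at least two, and the current node of $\Gamma$ cannot be terminal. Therefore the traversed complete path has at least $d$ working nodes, $h(\Gamma)\ge d$, and $h_F^{(2)}(z)\ge d$.

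For part (b), assume $\Gamma$ uses both membership and equivalence queries, and let $h=h(\Gamma)$. The plan is a refinement of the same adversary that tracks the number $N$ of \emph{live leaves} of $G$ -- leaves $\ell$ such that some concept of $C$ realizing $\ell$ is consistent with all answers returned so far. Initially $N=2^d$, and since distinct leaves of $G$ yield distinct tuples on $U(z)$, the tree can reach a terminal node only once $N=1$. The adversary answers each query so as to lose at most a factor $2d$ in $N$: a membership query partitions the live leaves into two classes by the answer, and returning the answer of the larger class costs a factor at most $2$; for an equivalence query with hypothesis $H$, at most one live leaf has all its realizations agreeing with $H$, every other live leaf survives at least one admissible counterexample, and the adversary returns a counterexample preserving the largest such group. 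Granting that the adversary can keep the play organized so that at every moment at most $d$ distinct elements may occur in a returned counterexample -- so the effective branching of an equivalence query is $2d$, not $\dim z$ -- one gets $N\ge 2^d/(2d)^h$ after $h$ queries, and $N=1$ forces $(2d)^h\ge 2^d$, i.e. $h\ge d/\log_2(2d)$, whence $h_F^{(3)}(z)\ge d/\log_2(2d)$.

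The main obstacle is precisely this organization in part (b). One wants an invariant -- roughly, that the live leaves stay confined to the subtree rooted at the current node $v$ of $G$, with counterexamples only about the $O(d)$ elements labelling that subtree along relevant branches -- that survives membership queries probing elements lying deep inside the subtree rooted at $v$ or entirely outside $U(G)$. Answering such a query without either corrupting the invariant or letting the live leaves escape a single subtree of $G$ is the delicate point, and the $\log_2(2d)$ loss in the bound is exactly the cost of the worst case in which the equivalence-query branching cannot be pushed below $2d$; by contrast, the pure equivalence-query case in part (a) never incurs this loss because the current node's label is always available as a counterexample.
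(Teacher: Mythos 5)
Your part (a) is correct and is essentially the paper's argument: the paper proves $h_F^{(2)}(z)\geq d$ by induction on $d$, descending into the subtree $G_\delta$ of $G$ determined by the counterexample on the root label $u$ of $G$, which is exactly your adversary unrolled.

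Part (b), however, has a genuine gap, and you have located it yourself but misdiagnosed what is needed to close it. The entire content of the bound is the claim that an equivalence query shrinks the set of live leaves by a factor of at most $2d$, and you only ``grant'' this, proposing to justify it by an invariant that confines the live leaves to a single subtree of $G$ and restricts which elements may appear in counterexamples. No such invariant is needed, and (as you observe) it would be hard to preserve under membership queries on elements outside $U(G)$. The correct observation is local to each equivalence query: a hypothesis $H=\{u_1(x)=\delta_1,\ldots,u_n(x)=\delta_n\}$ assigns a value to \emph{every} element of $U(G)\subseteq U(z)$, hence selects a unique complete path $\xi_H$ of $G$ (at each node labelled $u$, follow the edge agreeing with $H$), with node labels $u_{i_1},\ldots,u_{i_d}$. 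Fix once and for all a set $B$ of $2^d$ concepts, one realizing each leaf of $G$; any survivor whose leaf is not the leaf of $\xi_H$ diverges from $\xi_H$ at some node of $\xi_H$, hence violates $H$ on one of the $d$ elements $u_{i_1},\ldots,u_{i_d}$. So the $d$ counterexamples $\{u_{i_j}(x)=\lnot\delta_{i_j}\}$ cover all but at most one survivor, and by pigeonhole one of them retains at least $(|B'|-1)/d\geq |B'|/(2d)$ of the current survivors $B'$. Membership queries, on any element of $U(z)$ whatsoever, cost only a factor $2$ by taking the majority answer. With this, your potential-function computation $2^d/(2d)^{h}\leq 1$ goes through with no bookkeeping about subtrees of $G$; this is exactly how the paper argues. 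As written, your proof of (b) is incomplete at its central step.
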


\begin{proof}
(a) We prove the inequality $h_{F}^{(2)}(z)\geq d$ by induction on $d$. Let $%
d=1$. Then the tree $G$ has the only one nonterminal node, which is labeled
with an element $u$ such that the function $u$ is not constant on $C$.
Therefore $|\Delta _{F}(z)|\geq 2$ and $h_{F}^{(2)}(z)\geq 1$. Let, for $%
t\in \mathbb{N}$ and for any natural $d$, $1\leq d\leq t$, the considered
statement hold. Assume now that $d=t+1$, $G$ is a $d$-complete tree over $F$%
, $z\ $is a problem over $F$ such that $U(G)\subseteq U(z)$, and $\Gamma $
is a decision tree over $z$ with the minimum depth, which solves the problem
$z$ and uses only equivalence queries. Let $u$ be the element attached to the
root of the tree $G$ and $H$ be the hypothesis attached to the root of the
decision tree $\Gamma $. Then there is an edge, which leaves the root of $%
\Gamma $ and is labeled with the equation system $\{u(x)=\delta \}$, where
the equation $u(x)=\lnot \delta $ belongs to the hypothesis $H$. This edge
enters to the root of the subtree of $\Gamma $, which will be denoted by $%
\Gamma _{u}$. There is an edge, which leaves the root of $G$ and is labeled
with the equation system $\{u(x)=\delta \}$. This edge enters to the root of
the subtree of $G$, which will be denoted by $G_{\delta }$. One can show
that the decision tree $\Gamma _{u}$ solves the problem $z$ relative to the
family of concepts $F^{\prime }=(U,C(u,\delta ))$ and $G_{\delta }$ is a $t$%
-complete tree over $F^{\prime }$. It is clear that $U(G_{\delta })\subseteq
U(z)$. Using the inductive hypothesis, we obtain $h(\Gamma _{u})\geq t$.
Therefore $h(\Gamma )\geq t+1=d$ and $h_{F}^{(2)}(z)\geq d$.

(b) We now prove the inequality $h_{F}^{(3)}(z)\geq \frac{d}{\log _{2}(2d)}$%
. Let $z=(u_{1},\ldots ,u)$ and $\Gamma $ be a decision tree over $z$ with
the minimum depth, which solves the problem $z$ and uses both membership and
equivalence queries. The $d$-complete tree $G$ has $2^{d}$ complete paths $%
\xi _{1},\ldots ,\xi _{2^{d}}$. For $i=1,\ldots ,2^{d}$, we denote by $c_{i}$
a solution of the equation system $\mathcal{S}(\xi _{i})$. Denote $%
B=\{c_{1},\ldots ,c_{2^{d}}\}$. We now show that the decision tree $\Gamma $
contains a complete path, which length is at least $\frac{d}{\log _{2}(2d)}$%
. We describe the process of this path construction beginning with the root
of $\Gamma $.

Let the root of $\Gamma $ be labeled with an element $u_{i_{0}}$. For $%
\delta \in \{0,1\}$, we denote by $B^{\delta }$ the set of solutions on $B$
of the equation system $\{u_{i_{0}}(x)=\delta \}$ and choose $\sigma \in
\{0,1\}$ for which $|B^{\sigma }|=\max \{|B^{0}|,|B^{1}|\}$. It is clear
that $|B^{\sigma }|\geq \frac{|B|}{2}\geq \frac{|B|}{2d}$. In the considered
case, the beginning of the constructed path in $\Gamma $ is the root of $%
\Gamma $, the edge that leaves the root and is labeled with the equation
system $\{u_{i_{0}}(x)=\sigma \}$, and the node to which this edge enters.

Let as assume now that the root of $\Gamma $ is labeled with a hypothesis $%
H=\{u_{1}(x)=\delta _{1},\ldots ,u_{n}(x)=\delta _{n}\}$. We denote by $\xi
_{H}$ the complete path in $G$ for which the system of equations $\mathcal{S}%
(\xi _{H})$ is a subsystem of $H$. Let the nonterminal nodes of the complete
path $\xi _{H}$ be labeled with the elements $u_{i_{1}},\ldots ,u_{i_{d}}$.
For $j=1,\ldots ,d$, we denote by $B_{j}$ the set of solutions on $B$ of the
equation system $\{u_{i_{j}}(x)=\lnot \delta _{i_{j}}\}$. It is clear that $%
|B_{1}|+\cdots +|B_{d}|\geq |B|-1$. Therefore there exists $l\in \{1,\ldots
,d\}$ such that $|B_{l}|\geq \frac{|B|-1}{d}\geq \frac{|B|}{2d}$. In the
considered case, the beginning of the constructed path in $\Gamma $ is the
root of $\Gamma $, the edge that leaves the root and is labeled with the
equation system $\{u_{i_{l}}(x)=\lnot \delta _{i_{l}}\}$, and the node to
which this edge enters.

We continue the construction of the complete path in $\Gamma$ in the same
way such that after the $t$th query we will have at least $\frac{|B|}{%
(2d)^{t}}$ elements from $B$. The process of path construction will continue
at least until $\frac{|B|}{(2d)^{t}}\leq 1$, i.e., at least until $\log
_{2}|B|\leq t\log _{2}(2d)$. Since $|B|=2^{d},$ we have $h(\Gamma )\geq t
\geq \frac{d}{\log _{2}(2d)}$ and $h_{F}^{(3)}(z)\geq \frac{d}{\log _{2}(2d)}
$.
\end{proof}

\begin{lemma}
\label{L0b}Let $F=(U,C)$ be a family of concepts, $k\in \mathbb{N}\cup \{0\}$%
, and $F$ be not $m$-family of concepts for $m=0,\ldots ,k$. Then there
exists a $(k+1)$-complete tree over $F$.
\end{lemma}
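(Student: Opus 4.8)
The plan is to prove Lemma~\ref{L0b} by induction on $k$, constructing the required $(k+1)$-complete tree from the root downward using the defining property of $k$-families of concepts. The key observation is that the hypothesis ``$F$ is not an $m$-family for any $m\le k$'' means, in particular, that $F$ is not a $0$-family, so some element $u\in U$ has a non-constant corresponding function on $C$; and since $F$ is not a $k$-family (the relevant clause of the inductive definition fails), for this $u$ it is \emph{not} the case that both branches $(U,C(u,0))$ and $(U,C(u,1))$ are $m$-families for some $m\le k-1$ — actually I must be more careful here, and this is where the argument needs its main idea.

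The correct route is to show: because $F$ is not an $m$-family for $m=0,\dots,k$, there exists an element $u\in U$ such that, for \emph{both} $\delta=0$ and $\delta=1$, the family $F_\delta=(U,C(u,\delta))$ is not an $m$-family for $m=0,\dots,k-1$. Indeed, suppose to the contrary that every $u\in U$ has some $\delta_u\in\{0,1\}$ with $(U,C(u,\delta_u))$ being an $m_u$-family for some $m_u\le k-1$. If additionally $F$ is not an $m$-family for $m=0,\dots,k-1$, then by the very definition of $(k+1\text{-or-less})$-family this would force $F$ to be a $(k'+1)$-family for some $k'\le k-1$, i.e.\ an $m$-family with $m\le k$ — contradiction. (One must double-check the edge case where $F$ is a $0$-family, but that is excluded by hypothesis, and the case analysis on the minimal such level is routine.) So such a ``two-sided branching'' element $u$ exists. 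Note also $C(u,\delta)\neq\varnothing$ for both $\delta$, since a family over an empty concept set is vacuously a $0$-family; hence the equation system $\{u(x)=\delta\}$ is consistent, which is what makes $u$ admissible as a node label and keeps the paths consistent.

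With this element $u$ in hand, I label the root of the tree under construction with $u$ and draw the two edges labeled $\{u(x)=0\}$ and $\{u(x)=1\}$. By the choice of $u$, each child family $F_\delta=(U,C(u,\delta))$ satisfies the hypothesis of the lemma with $k$ replaced by $k-1$, so by the induction hypothesis there is a $k$-complete tree $G_\delta$ over $F_\delta$. I attach $G_\delta$ as the subtree hanging from the edge labeled $\{u(x)=\delta\}$. The resulting tree has all complete paths of length $1+k=k+1$; every nonterminal node is labeled by an element of $U$ with two outgoing edges labeled by the two equations; and for each complete path $\xi$, writing $\xi=\{u(x)=\delta\}\cdot\xi'$ with $\xi'$ a complete path in $G_\delta$, the system $\mathcal{S}(\xi)=\{u(x)=\delta\}\cup\mathcal{S}(\xi')$ is consistent on $C$ because $\mathcal{S}(\xi')$ is consistent on $C(u,\delta)=\{c\in C:u(c)=\delta\}$, and any solution there is automatically a solution of $\{u(x)=\delta\}$ on $C$. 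The base case $k=0$: ``$F$ is not a $0$-family'' gives a non-constant $u$, and a single node labeled $u$ with two leaves is the required $1$-complete tree (both $\{u(x)=0\}$ and $\{u(x)=1\}$ are consistent since $u$ is non-constant).

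The main obstacle is the combinatorial lemma extracted above — that failure to be an $m$-family for all $m\le k$ forces the existence of a single element whose \emph{both} restrictions fail to be $m$-families for all $m\le k-1$. This requires unwinding the inductive definition carefully, arguing by taking a minimal counterexample level, and handling the degenerate cases (empty concept classes, the $0$-family base clause). Once that is established, the tree construction itself is straightforward bookkeeping of path lengths and consistency. I would state and prove the combinatorial lemma as a separate short claim inside the proof, then assemble the tree as described.
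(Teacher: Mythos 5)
Your proof is correct and follows essentially the same route as the paper's: induction on $k$, extraction of an element $u\in U$ whose \emph{both} restrictions $(U,C(u,0))$ and $(U,C(u,1))$ fail to be $m$-families at all lower levels (which is precisely the negation of the second clause in the inductive definition of $k$-family, given that the first clause holds by hypothesis), and gluing the two inductively obtained complete trees under a root labeled $u$. The paper merely asserts the existence of such a two-sided branching element, whereas you justify it explicitly by contradiction and also verify nonemptiness of $C(u,\delta)$ and consistency of the resulting path systems; these are exactly the right details to check and your argument for them is sound.
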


\begin{proof}
We prove the considered statement by induction on $k$. Let $k=0$. In this
case, $F$ is not $0$-family of concepts. Then there exists an element $u\in
U $ for which the function $u$ is not constant on $C$. Using this element,
it is easy to construct $1$-complete tree over $F$.

Let the considered statement hold for some $k$, $k\geq 0$. We now show that
it also holds for $k+1$. Let $F=(U,C)$ be a family of concepts, which is not
$m$-family of concepts for $m=1,\ldots ,k+1$. Then there exists an element $%
u\in U$ such that, for any $\delta \in \{0,1\}$, the information system $%
F_{\delta }=(U,C(u,\delta ))$ is not $m$-information system for $m=1,\ldots
,k$. Using the inductive hypothesis, we conclude that, for any $\delta \in
\{0,1\}$, there exists a $(k+1)$-complete tree $G_{\delta }$ over $F_{\delta
}$. Denote by $G$ a directed tree with root in which the root is labeled
with the element $u$ and, for any $\delta \in \{0,1\}$, there is an edge
that leaves the root, is labeled with the equation system $\{u(x)=\delta \}$%
, and enters the root of the tree $G_{\delta }$. One can show that the tree $%
G$ is a $(k+2)$-complete tree over $F$.
\end{proof}

\begin{proof}[Proof of Theorem \protect\ref{T2}]
It is clear that $h_{F}^{(3)}(z)\leq h_{F}^{(2)}(z)$ for any problem $z$
over $F$. Therefore $h_{F}^{(3)}(n)\leq h_{F}^{(2)}(n)$ for any $n\in
\mathbb{N}$.

(a) Let $k\in \mathbb{N}\cup \{0\}$. We now show by induction on $k$ that,
for each $k$-family of concepts $F$ (not necessary infinite) for each
problem $z$ over $F$, the inequality $h_{F}^{(2)}(z)\leq k$ holds. Let $%
F=(U,C)$ be a $0$-family of concepts and $z$ be a problem over $F$. Since
all functions corresponding to elements from $U(z)$ are constant on $C$, the
set $\Delta _{F}(z)$ contains only one tuple. Therefore the decision tree
containing only one node labeled with this tuple solves the problem $z$
relative to $F$, and $h_{F}^{(2)}(z)=0$.

Let $k\in \mathbb{N}\cup \{0\}$ and, for each $m$, $0\leq m\leq k$, the
considered statement hold. Let us show that it holds for $k+1$. Let $F=(U,C)$
be a $(k+1)$-family of concepts and $z=(u_{1},\ldots ,u_{n})$ be a problem
over $F$. For $i=1,\ldots ,n$, choose a number $\delta _{i}\in \{0,1\}$ such
that the family of concepts $(U,C(u_{i},\lnot \delta _{i}))$ is $m_{i}$%
-family of concepts, where $1\leq m_{i}\leq k$. Using the inductive
hypothesis, we conclude that, for $i=1,\ldots ,n$, there is a decision tree $%
\Gamma _{i}$ over $z$, which uses only equivalence queries, solves the
problem $z$ over $(U,C(u_{i},\lnot \delta _{i}))$, and has depth at most $%
m_{i}$. We denote by $\Gamma $ a decision tree in which the root is labeled
with the hypothesis $H=\{u_{1}(x)=\delta _{1},\ldots ,u_{n}(x)=\delta _{n}\}$%
, the edge leaving the root and labeled with $H$ enters the terminal node
labeled with the tuple $(\delta _{1},\ldots ,\delta _{n})$, and for $%
i=1,\ldots ,n$, the edge leaving the root and labeled with $\{u_{i}(x)=\lnot
\delta _{i}\}$ enters the root of the tree $\Gamma _{i}$. One can show that $%
\Gamma $ solves the problem $z$ relative to $F$ and $h(\Gamma )\leq k+1$.
Therefore, $h_{F}^{(2)}(z)\leq k+1$ for any problem $z$ over $F$.

Let $F\in \mathcal{C}$. Then $F$ is $k$-family of concepts for some natural $%
k$ and, for each problem $z$ over $F$, we have $h_{F}^{(3)}(z)\leq
h_{F}^{(2)}(z)\leq k$. Therefore $h_{F}^{(2)}(n)=O(1)$ and $%
h_{F}^{(3)}(n)=O(1)$.

(b) Let $F=(U,C)\in \mathcal{D}\setminus \mathcal{C}$. First, we show that $%
h_{F}^{(2)}(n)=O(\log n)$. Let $z=(u_{1},\ldots ,u_{n})$ be an arbitrary
problem over $F$. From Lemma 5.1 \cite{Moshkov05} it follows that $|\Delta
_{F}(z)|\leq (4n)^{\mathit{VC}(F)}$. The proof of this lemma is based on the results
similar to ones obtained by Sauer \cite{Sauer72} and Shelah \cite{Shelah72}.
We consider a decision tree $\Gamma $ over $z$, which solves $z$ relative to
$F$ and uses only equivalence queries. This tree is constructed by the
halving algorithm \cite{Angluin88,Littlestone88}. We describe the work of
this tree for an arbitrary concept $c$ from $C$. Set $\Delta =$ $\Delta
_{F}(z)$. If $|\Delta |=1$, then the only $n$-tuple from $\Delta $ is the
solution $z(c)$ of the problem $z$ for the concept $c$. Let $|\Delta |\geq 2$%
. For $i=1,\ldots ,m$, we denote by $\delta _{i}$ a number from $\{0,1\}$
such that $|\Delta (u_{i},\delta _{i})|\geq |\Delta (u_{i},\lnot \delta
_{i})|$. The root of $\Gamma $ is labeled with the hypothesis $%
H=\{u_{1}(x)=\delta _{1},\ldots ,u_{n}(x)=\delta _{n}\}$. After this query
either the problem $z$ is solved (if the answer is $H$) or we halve the
number of tuples in the set $\Delta $ (if the answer is a counterexample $%
\{u_{i}(x)=\lnot \delta _{i}\}$). In the latter case, set $\Delta =$ $\Delta
_{F}(z)(u_{i},\lnot \delta _{i})$. The decision tree $\Gamma $ continues to
work with the concept $c$ and the set of $n$-tuples $\Delta $ in the same
way. Let during the work with the concept $c$, the considered decision tree
make $q$ queries. After the $(q-1)$th query, the number of remaining $n$%
-tuples in the set $\Delta $ is at least two and at most $%
(4n)^{\mathit{VC}(F)}/2^{q-1}$. Therefore $2^{q}\leq (4n)^{\mathit{VC}(F)}$ and $q\leq
\mathit{VC}(F)\log _{2}(4n)$. So during the processing of the concept $c$, the
decision tree $\Gamma $ makes at most $\mathit{VC}(F)\log _{2}(4n)$ queries. Since $c$
is an arbitrary concept from $C$, the depth of $\Gamma $ is at most $%
\mathit{VC}(F)\log _{2}(4n)$. Since $z$ is an arbitrary problem over $F$, we obtain $%
h_{F}^{(2)}(n)=O(\log n)$. Therefore $h_{F}^{(3)}(n)=O(\log n)$.

Using Lemma \ref{L0b} and the relation $F\notin \mathcal{C}$, we obtain
that, for any $d\in \mathbb{N}$, there exists $d$-complete tree $G_{d}$ over
$F$. Let $U(G_{d})=\{u_{1},\ldots ,u_{n_{d}}\}$. We know that $n_{d}\leq
2^{d}$. Denote $z_{d}=(u_{1},\ldots ,u_{n_{d}})$. From Lemma \ref{L0a} it
follows that $h_{F}^{(2)}(z_{d})\geq d$ and $h_{F}^{(3)}(z_{d})\geq \frac{d}{%
\log _{2}(2d)}$. As a result, we have $h_{F}^{(2)}(2^{d})\geq d$ and $%
h_{F}^{(3)}(2^{d})\geq \frac{d}{\log _{2}(2d)}$. Let $n\in \mathbb{N}$ and $%
n\geq 8$. Then there exists $d\in \mathbb{N}$ such that $2^{d}\leq n<2^{d+1}$%
. We have $d>\log _{2}n-1$, $h_{F}^{(2)}(n)\geq \log _{2}n-1$, $%
h_{F}^{(2)}(n)=\Omega (\log n)$, and $h_{F}^{(2)}(n)=\Theta (\log n)$. It is
easy to show that the function $\frac{x}{\log _{2}(2x)}$ is nondecreasing
for $x\geq 2$. Therefore $h_{F}^{(3)}(n)\geq \frac{\log _{2}n-1}{\log
_{2}(2(\log _{2}n-1))}$ and $h_{F}^{(3)}(n)=\Omega (\frac{\log n}{\log \log n%
})$.

(c) Let $F=(U,C)\notin \mathcal{D}$. We now consider an arbitrary problem $%
z=(u_{1},\ldots ,u_{n})$ over $F$ and a decision tree over $z$, which uses
only equivalence queries and solves the problem $z$ over $F$ in the
following way. For a given concept $c\in C$, the first query is about the
hypothesis $H_{1}=\{u_{1}(x)=1,\ldots ,u_{n}(x)=1\}$. If the answer is $%
H_{1} $, then the problem $z$ is solved for the concept $c$. If, for some $%
i\in \{1,\ldots ,n\}$, the answer is $\{u_{i}(x)=0\}$, then the second query
is about the hypothesis $H_{2}$ obtained from $H_{1}$ by replacing the
equality $u_{i}(x)=1$ with the equality $u_{i}(x)=0$, etc. It is clear that
after at most $n$ equivalence queries the problem $z$ for the concept $c$
will be solved. Thus, $h_{F}^{(2)}(z)\leq n$ and $h_{F}^{(3)}(z)\leq n$.
Since $z$ is an arbitrary problem over $F$, we have $h_{F}^{(2)}(n)\leq n$
and $h_{F}^{(3)}(n)\leq n$ for any $n\in \mathbb{N}$.

Let $n\in \mathbb{N}$. Since $F\notin \mathcal{D}$, there exist elements $%
u_{1},\ldots ,u_{n}\in U$ such that, for any $(\delta _{1},\ldots ,\delta
_{n})\in \{0,1\}^{n}$, the equation system $\{u_{1}(x)=\delta _{1},\ldots
,u_{n}(x)=\delta _{n}\}$ is consistent on $C$. We now consider the problem $%
z=(u_{1},\ldots ,u_{n})$ and an arbitrary decision tree $\Gamma $ over $z$,
which solves the problem $z$ over $F$ and uses both membership and
equivalence queries. Let us show that $h(\Gamma )\geq n$. If $n=1$, then the
considered inequality holds since $|\Delta _{F}(z)|\geq 2$. Let $n\geq 2$.
It is easy to show that an equation system over $z$ is inconsistent if and
only if it contains equations $u_{i}(x)=0$ and $u_{i}(x)=1$ for some $i\in
\{1,\ldots ,n\}$. For each node $v$ of the decision tree $\Gamma $, we
denote by $S_{v}$ the union of systems of equations attached to edges in the
path from the root of $\Gamma $ to $v$. A node $v$ of $\Gamma $ will be
called consistent if the equation system $S_{v}$ is consistent.

We now construct a complete path $\xi $ in the decision tree $\Gamma $,
which nodes are consistent. We will start constructing the path from the
root that is a consistent node. Let the path reach a consistent node $v$
of $\Gamma $. If $v$ is a terminal node, then the path $\xi $ is
constructed. Let $v$ be a working node labeled with an element $u_{i}\in
U(z) $. Then there exists $\delta \in \{0,1\}$ for which the system of
equations $S_{v}\cup \{u_{i}(x)=\delta \}$ is consistent. Then the path $\xi
$ will pass through the edge leaving $v$ and labeled with the system of
equations $\{u_{i}(x)=\delta \}$. Let $v$ be labeled with a hypothesis $%
H=\{u_{1}(x)=\delta _{1},\ldots ,u_{n}(x)=\delta _{n}\}$. If there exists $%
i\in \{1,\ldots ,n\}$ such that the system of equations $S_{v}\cup
\{u_{i}(x)=\lnot \delta \}$ is consistent, then the path $\xi $ will pass
through the edge leaving $v$ and labeled with the system of equations $%
\{u_{i}(x)=\lnot \delta \}$. Otherwise, $S_{v}=H$ and the path $\xi $ will
pass through the edge leaving $v$ and labeled with the system of equations $%
H $.

Let all edges in the path $\xi $ be labeled with systems of equations
containing one equation each. Since all nodes of $\xi $ are consistent, the
equation system $\mathcal{S}(\xi )$ is consistent. We now show that $%
\mathcal{S}(\xi )$ contains at least $n$ equations. Let us assume that this
system contains less than $n$ equations. Then the set $\Delta _{F}(z)\pi
(\xi )$ contains more than one $n$-tuple, which is impossible. Therefore the
length of the path $\xi $ is at least $n$. Let there be edges in $\xi $,
which are labeled with hypotheses, and the first edge in $\xi $ labeled with
a hypothesis $H$ leaves the node $v$. Then $S_{v}=H$ and the length of $\xi $
is at least $n$. Therefore $h(\Gamma )\geq n$, $h_{F}^{(3)}(z)\geq n$, and $%
h_{F}^{(2)}(z)\geq n$. As a result, we obtain $h_{F}^{(3)}(n)\geq n$ and $%
h_{F}^{(2)}(n)\geq n$. Thus, $h_{F}^{(2)}(n)=n$ and $h_{F}^{(3)}(n)=n$ for
any $n\in \mathbb{N}$.
\end{proof}

\section{Proof of Theorem \protect\ref{T3} \label{S5}}

In this section, we prove Theorem \ref{T3}. First, we consider several
auxiliary statements.

\begin{lemma}
\label{L2a} Let $F=(U,C)$ be a family of concepts, $z$ be a problem over $F$%
, and $\Gamma _{1}$ be a decision tree over $z$ that solves\ the problem $z$%
\ relative to $F$ and uses both membership and proper equivalence queries.
Then there exists a decision tree $\Gamma _{2}$ over $z$ that solves\ the
problem $z$\ relative to $F$, uses only proper equivalence queries, and
satisfies the inequality $h(\Gamma _{2})\leq 2^{h(\Gamma _{1})}-1$.
\end{lemma}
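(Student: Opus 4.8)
The plan is to argue by induction on $h(\Gamma_1)$, after slightly strengthening the statement so that it relativizes to subcollections of concepts. Precisely, I would prove: for every $B\subseteq C$, any decision tree $\Gamma$ over $z$ that uses membership queries and equivalence queries proper for $F$ and is \emph{correct for} $B$ (i.e., for each $c\in B$ and each complete path $\xi$ with $c\in C(\xi)$ the terminal node of $\xi$ is labeled with $z(c)$) can be replaced by a decision tree $\Gamma'$ over $z$ that uses only equivalence queries proper for $F$, is correct for $B$, and has $h(\Gamma')\le 2^{h(\Gamma)}-1$; the lemma is the case $B=C$. Keeping ``proper'' anchored to the original $F$ while $B$ shrinks is what makes the induction self-contained, and this bit of bookkeeping is one of the two delicate points.

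Write $\varphi(m)=2^{m}-1$, so that $\varphi(0)=0$, $\varphi(m)=2\varphi(m-1)+1$, and $1+\varphi(m-1)\le\varphi(m)$ for $m\ge 1$. First I would normalize $\Gamma$: delete every edge whose attached equation system is inconsistent on $C$ together with the subtree below it (contracting a working node left with a single child), and relabel every terminal node $t$ by the unique element of $\Delta_F(z)\pi(\xi_t)$ when this set is a singleton and by some fixed element of $\Delta_F(z)$ otherwise. This does not raise the depth and keeps correctness for $B$. The base case $h(\Gamma)=0$ is immediate. If the root of $\Gamma$ is an equivalence query with hypothesis $H$, it is already proper for $F$; I keep it as the root of $\Gamma'$, send the ``yes''-edge to a terminal labeled with the tuple of $H$, and send each counterexample edge to the tree produced by the inductive hypothesis applied to the corresponding subtree (of depth $\le h(\Gamma)-1$, with $B$ replaced by the concepts of $B$ that reach it). Then $h(\Gamma')\le 1+\varphi(h(\Gamma)-1)\le\varphi(h(\Gamma))$, and correctness transfers along paths.

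The substantial case is a membership query on $u_i$ at the root, with subtrees $\Gamma^{(0)},\Gamma^{(1)}$ of depth $\le h(\Gamma)-1$ for the answers $u_i(x)=0,1$. I apply the inductive hypothesis to $\Gamma^{(0)}$ with $B^{(0)}=\{c\in B:u_i(c)=0\}$ and to $\Gamma^{(1)}$ with $B^{(1)}=\{c\in B:u_i(c)=1\}$, obtaining proper-equivalence-query trees $\Gamma_2^{(0)},\Gamma_2^{(1)}$ of depth $\le\varphi(h(\Gamma)-1)$, correct for $B^{(0)}$ and $B^{(1)}$ respectively; re-labeling the (vacuous) terminals of $\Gamma_2^{(0)}$ not reached by any concept of $B^{(0)}$, I may assume every terminal label of $\Gamma_2^{(0)}$ has $u_i$-coordinate $0$. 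Now I form $\Gamma'$ by replacing each terminal of $\Gamma_2^{(0)}$, say the one carrying the tuple $\tau$, with the gadget: ask the proper equivalence query $H_\tau=\{u_1(x)=\tau_1,\dots,u_n(x)=\tau_n\}$; on answer ``yes'' go to a terminal labeled $\tau$, and on any counterexample go to a fresh copy of $\Gamma_2^{(1)}$. Every query of $\Gamma'$ is then proper for $F$, and every complete path of $\Gamma'$ has length $\le h(\Gamma_2^{(0)})+1+h(\Gamma_2^{(1)})\le 2\varphi(h(\Gamma)-1)+1=\varphi(h(\Gamma))$, as required.

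The step that takes the insight is settling on this gadget — in particular the observation that one never needs to know \emph{which} coordinate a counterexample to $H_\tau$ falls on. For $c\in B$ on a complete path $\xi$ of $\Gamma'$: if $\xi$ takes the ``yes''-edge after $H_\tau$, then $c$ satisfies the whole system $H_\tau$, so $z(c)=\tau$ and the label is correct; otherwise $z(c)\ne\tau$, hence $c\notin B^{(0)}$ (else correctness of $\Gamma_2^{(0)}$ on $B^{(0)}$ would force $\tau=z(c)$), so $u_i(c)=1$ and the trailing copy of $\Gamma_2^{(1)}$, correct for $B^{(1)}$, labels the end of $\xi$ with $z(c)$. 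Thus a single extra query per terminal of $\Gamma_2^{(0)}$ converts the whole tree, and the depths add up exactly as $\varphi(m)=2\varphi(m-1)+1$. The remaining work is routine: checking that the normalization behaves, verifying the ``correct for $B$'' transfer through the two recursive branches, and disposing of the degenerate cases $B^{(0)}=\emptyset$ or $B^{(1)}=\emptyset$ (where the root membership query is simply redundant). The one genuine care-point, as noted above, is tracking that ``proper'' always refers to the fixed $F$ even as the relevant collection of concepts shrinks down the recursion.
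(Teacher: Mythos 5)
Your proof is correct and follows essentially the same route as the paper's: induction on the depth, with the membership-query case handled by sequentially composing the two recursively converted subtrees together with one extra proper equivalence query to disambiguate the two candidate answers, yielding the recursion $2(2^{k}-1)+1=2^{k+1}-1$. The only differences are cosmetic --- you place the disambiguating query between the two subtree simulations rather than after both, and you keep properness anchored to the fixed $F$ while relativizing correctness to $B\subseteq C$ instead of shrinking the concept family itself.
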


\begin{proof}
We prove this statement by the induction on the depth $h(\Gamma _{1})$ of
the decision tree $\Gamma _{1}$. Let $h(\Gamma _{1})=0$. Then, as the
decision tree $\Gamma _{2}$, we can take the decision tree $\Gamma _{1}$. It
is clear that $h(\Gamma _{2})=2^{h(\Gamma _{1})}-1$. We now assume that the
considered statement is true for any family of concepts, any problem over
this family, and any decision tree over the considered problem that solves\
this problem, uses both membership and proper equivalence queries, and has
depth at most $k$, $k\geq 0$.

Let $F=(U,C)$ be a family of concepts, $z=(u_{1},\ldots ,u_{n})$ be a
problem over $F$, and $\Gamma _{1}$ be a decision tree over $z$ that solves\
the problem $z$\ relative to $F$, uses both membership and proper
equivalence queries, and satisfies the condition $h(\Gamma _{1})=k+1$. We
now show that there exists a decision tree $\Gamma _{2}$ over $z$, which
solves\ the problem $z$\ relative to $F$, uses only proper equivalence
queries, and which depth is at most $2^{k+1}-1$.

Let the root of $\Gamma _{1}$ be labeled with a proper hypothesis $%
H=\{u_{1}(x)=\delta _{1},\ldots ,u_{n}(x)=\delta _{n}\}$. Then there are $%
n+1 $ edges, which leave the root, are labeled with the systems of equations
$H$, $\{u_{1}(x)=\lnot \delta _{1}\}$, ..., $\{u_{n}(x)=\lnot \delta _{n}\}$%
, and enter the roots of subtrees $G_{0},G_{1},\ldots ,G_{n}$ of the tree $%
\Gamma _{1}$, respectively. It is clear that, for $i=1,\ldots ,n$, $G_{i}$
is a decision tree over $z$, which solves the problem $z$ relative to the
family of concepts $F_{i}=(U,C(u_{i},\lnot \delta _{i}))$, uses only
membership queries and proper equivalence queries for $F_{i}$, and satisfies
the inequality $h(G_{i})\leq k$. Using the inductive hypothesis, we obtain
that, for $i=1,\ldots ,n$, there exists a decision tree $G_{i}^{\prime }$
over $z$ that solves\ the problem $z$\ relative to $F_{i}$, uses only proper
equivalence queries for $F_{i}$, and satisfies the inequalities $%
h(G_{i}^{\prime })\leq 2^{h(G_{i})}-1\leq 2^{k}-1$. Let $G_{0}^{\prime }$ be
the decision tree, which contains only one node labeled with the tuple $%
(\delta _{1},\ldots ,\delta _{n})$. We denote by $\Gamma _{2}$ the decision
tree over $z$ that is obtained from the decision tree $\Gamma _{1}$ by
replacing the subtrees $G_{0},G_{1},\ldots ,G_{n}$ with the subtrees $%
G_{0}^{\prime },G_{1}^{\prime },\ldots ,G_{n}^{\prime }$. It is easy to show
that $\Gamma _{2}$ is a decision tree over $z$, which solves the problem $z$
relative to $F$, uses only proper equivalence queries for $F$, and satisfies
the inequalities $h(\Gamma _{2})\leq 2^{k}-1+1\leq 2^{h(\Gamma _{1})}-1$.

Let the root of $\Gamma _{1}$ be labeled with an element $u_{i}$. Then there
are two edges, which leave the root, are labeled with the equation systems $%
\{u_{i}(x)=0\}$ and $\{u_{i}(x)=1\}$, and enter the roots of subtrees $T_{0}$
and $T_{1}$ of the tree $\Gamma _{1}$, respectively. It is clear that, for $%
p=0,1$, $T_{p}$ is a decision tree over $z$, which solves the problem $z$
relative to the family of concepts $F_{p}=(U,C(u_{i},p))$, uses only
membership queries and proper equivalence queries for $F_{p}$, and satisfies
the inequality $h(T_{p})\leq k$. Using the inductive hypothesis, we obtain
that, for $p=0,1$, there exists a decision tree $T_{p}^{\prime }$ over $z$
that solves\ the problem $z$\ relative to $F_{p}$, uses only proper
equivalence queries for $U_{p}$, and satisfies the inequalities $%
h(T_{p}^{\prime })\leq 2^{h(T_{p})}-1\leq 2^{k}-1$. We denote by $T$ the
decision tree obtained from the decision tree $T_{0}^{\prime }$ by replacing
each terminal node of $T_{0}^{\prime }$ with the decision tree $%
T_{1}^{\prime }$.

Denote by $\Gamma _{2}$ the decision tree obtained from $T$ by the following
transformation of each complete path $\xi $ in $T$. If $C(\xi
)=\emptyset $, then we keep the path $\xi $ untouched. Let $C(\xi
)\neq \emptyset $, $\bar{\delta}=(\delta _{1},\ldots ,\delta _{n})$ be the
tuple that was attached to the terminal node of the tree $T_{0}^{\prime }$
through which the path $\xi $ passes, and $\bar{\sigma}=(\sigma _{1},\ldots
,\sigma _{n})$ be the tuple attached to the terminal node of $\xi $. Since $%
C(\xi )\neq \emptyset $, at least one of the tuples $\bar{\delta}$
and $\bar{\sigma}$ belongs to the set $\Delta _{F}(z)$. Let, for the
definiteness, $\bar{\delta}\in \Delta _{F}(z)$. Denote $H=\{u_{1}(x)=\delta
_{1},\ldots ,u_{n}(x)=\delta _{n}\}$. We replace the terminal node of the
path $\xi $ with the working node labeled with the hypothesis $H$, which is
proper for $F$. There are $n+1$ edges that leave this node and are labeled
with the systems of equations $H,\{u_{1}(x)=\lnot \delta
_{1}\},...,\{u_{n}(x)=\lnot \delta _{n}\}$, respectively. The edge labeled
with $H$ enters to the terminal node labeled with the tuple $\bar{\delta}$.
All other edges enter to terminal nodes labeled with the tuple $\bar{\sigma}$%
. One can show that $\Gamma _{2}$ is a decision tree over $z$ that solves\
the problem $z$\ relative to $F$, uses only proper equivalence queries for $%
U $, and satisfies the relations $h(\Gamma _{2})\leq
2(2^{k}-1)+1=2^{h(\Gamma _{1})}-1$.
\end{proof}

\begin{lemma}
\label{L3a} Let $F=(U,C)$ be an infinite family of concepts. Then $%
h_{F}^{(3)}(n)\leq h_{F}^{(5)}(n)\leq h_{F}^{(4)}(n)\leq n$ and $%
h_{F}^{(2)}(n)\leq h_{F}^{(4)}(n)$ for any $n\in \mathbb{N}$.
\end{lemma}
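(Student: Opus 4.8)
The plan is to handle the four inequalities one at a time. Three of them are immediate from the definitions: every decision tree over $z$ that uses only proper equivalence queries is, in particular, a decision tree that uses only equivalence queries and also a decision tree that uses both membership and proper equivalence queries, and every decision tree that uses both membership and proper equivalence queries uses both membership and equivalence queries. Hence the minimum defining $h_F^{(4)}(z)$ is taken over a subfamily of the sets of trees defining $h_F^{(5)}(z)$ and $h_F^{(2)}(z)$, and the one defining $h_F^{(5)}(z)$ over a subfamily of the one defining $h_F^{(3)}(z)$; since shrinking the set of admissible trees can only increase the minimum depth, we get $h_F^{(3)}(z) \le h_F^{(5)}(z) \le h_F^{(4)}(z)$ and $h_F^{(2)}(z) \le h_F^{(4)}(z)$ for every problem $z$ over $F$, and passing to the maximum over $z \in P(F)$ with $\dim z \le n$ gives the stated inequalities for the Shannon functions.

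For the remaining bound $h_F^{(4)}(n) \le n$ it suffices to show $h_F^{(4)}(z) \le \dim z$ for every problem $z = (u_1,\ldots,u_m)$ over $F$, and I would construct a suitable decision tree adaptively, described recursively for an arbitrary nonempty subfamily $C' \subseteq C$ (the top level being $C' = C$). If all functions $u_1,\ldots,u_m$ are constant on $C'$, output a single terminal node labeled with the tuple $z(c')$, where $c'$ is any concept of $C'$ (the choice does not matter). Otherwise pick any $c' \in C'$, label the root with the hypothesis $H = \{u_1(x) = u_1(c'),\ldots,u_m(x) = u_m(c')\}$, which is proper for $F$ since $H$ is consistent on $C' \subseteq C$; route the edge labeled $H$ to a terminal node labeled $z(c')$, route each edge labeled $\{u_i(x) = \lnot u_i(c')\}$ for which $\{u_i(x) = \lnot u_i(c')\}$ is inconsistent on $C'$ to a terminal node with an arbitrary label, and route every other edge $\{u_i(x) = \lnot u_i(c')\}$ to the tree obtained by applying the construction recursively to $(U, C'(u_i,\lnot u_i(c')))$ (a nonempty family). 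Every hypothesis used is consistent on $C$ and therefore proper for $F$, and a routine verification of the criterion ``$\Delta_F(z)\pi(\xi)$ contains at most one tuple, and equals the label of the terminal node of $\xi$ whenever it is nonempty'' shows that the top-level tree solves $z$ relative to $F$.

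The point to watch is the depth. Each recursive call passes from $(U,C')$ to $(U, C'(u_i,\lnot u_i(c')))$, where the function $u_i$ — which was non-constant on $C'$, that being why this branch recursed — becomes constant, while every function already constant on $C'$ remains constant; hence the number of functions among $u_1,\ldots,u_m$ that are non-constant on the current subfamily strictly decreases along any chain of nested recursive calls, so that chain has length at most $m$. Since each recursive call contributes exactly one working node to any path through it, and base cases and the terminal ``dead'' branches contribute none, every complete path of the constructed tree has at most $m$ working nodes; thus its depth is at most $m = \dim z$, which gives $h_F^{(4)}(z) \le \dim z$ and hence $h_F^{(4)}(n) \le n$. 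I expect the only real work, modest as it is, to be the verification that the recursively assembled tree solves $z$ relative to $F$ — in particular, checking that the ``dead'' counterexample branches, on which the associated equation system is inconsistent on $C$, cause no trouble — together with the depth accounting just sketched; the inclusion-based inequalities and the propriety of the hypotheses are entirely routine.
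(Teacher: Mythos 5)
Your proposal is correct and follows essentially the same route as the paper: the three inclusion-based inequalities are obtained exactly as in the paper's proof, and your recursive construction for $h_{F}^{(4)}(z)\leq \dim z$ is a more carefully formalized version of the paper's adaptive algorithm, which at each stage queries an arbitrary proper hypothesis consistent with all counterexamples received so far (equivalently, the tuple $z(c')$ for some $c'$ in the current restricted subfamily $C'$). Your depth accounting via the number of functions non-constant on $C'$ is just a repackaging of the paper's observation that each counterexample must fall on a previously unrevealed coordinate, so both arguments give the same bound of $n$ queries.
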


\begin{proof}
It is clear, that $h_{F}^{(3)}(z)\leq h_{F}^{(5)}(z)\leq h_{F}^{(4)}(z)$ and
$h_{F}^{(2)}(z)\leq h_{F}^{(4)}(z)$ for any problem $z$ over $U$. Therefore $%
h_{F}^{(3)}(n)\leq h_{F}^{(5)}(n)\leq h_{F}^{(4)}(n)$ and $%
h_{F}^{(2)}(n)\leq h_{F}^{(4)}(n)$ for any $n\in \mathbb{N}$.

We now consider an arbitrary problem $z=(u_{1},\ldots ,u_{n})$ over $F$ and
a decision tree over $z$, which uses only proper equivalence queries for $F$
and solves the problem $z$ relative to $F$ in the following way. For a given
concept $c\in C$, the first query is about an arbitrary proper hypothesis $%
H_{1}=\{u_{1}(x)=\delta _{1},\ldots ,u_{n}(x)=\delta _{n}\}$ for $F$. If the
answer is $H_{1}$, then the problem $z$ is solved for the concept $c$. If,
for some $i\in \{1,\ldots ,n\}$, the answer is $\{u_{i}(x)=\lnot \delta
_{i}\}$, then the second query is about a proper hypothesis $%
H_{2}=\{u_{1}(x)=\sigma _{1},\ldots ,u_{n}(x)=\sigma _{n}\}$ such that $%
\sigma _{i}=\lnot \delta _{i}$. If the answer is $H_{2}$, then the problem $%
z $ is solved for the concept $c$. If, for some $j\in \{1,\ldots ,n\}$, the
answer is $\{u_{j}(x)=\lnot \sigma _{j}\}$, then the third query is about a
proper hypothesis $H_{3}=\{u_{1}(x)=\gamma _{1},\ldots ,u_{n}(x)=\gamma
_{n}\}$ such that $\gamma _{i}=\lnot \delta _{i}$ and $\gamma _{j}=\lnot
\sigma _{j}$, etc. It is clear that after at most $n$ queries the problem $z$
for the concept $c$ will be solved. Thus, $h_{F}^{(4)}(z)\leq n$. Since $z$
is an arbitrary problem over $F$, we have $h_{F}^{(4)}(n)\leq n$ for any $%
n\in \mathbb{N}$.
\end{proof}

\begin{proof}[Proof of Theorem \protect\ref{T3}]
(a) Let $r\in \mathbb{N}$. We now show by induction on $k\in \mathbb{N}\cup
\{0\}$ that, for each $r$-i-reduced $k$-family of concepts $F$ (not
necessary infinite) for each problem $z$ over $F$, the inequality $%
h_{F}^{(5)}(z)\leq rk$ holds.

Let $F=(U,C)$ be a $r$-i-reduced $0$-family of concepts and $z$ be a problem
over $F$. Since all functions corresponding to elements from $U(z)$ are
constant on $C$, the set $\Delta _{F}(z)$ contains only one tuple. Therefore
the decision tree consisting of one node labeled with this tuple solves the
problem $z$ relative to $F$, and $h_{F}^{(5)}(z)=0$.

Let $k\in \mathbb{N}\cup \{0\}$ and, for each $m$, $0\leq m\leq k$, the
considered statement hold. Let us show that it holds for $k+1$. Let $F=(U,C)$
be a $r$-i-reduced $(k+1)$-family of concepts and $z=(u_{1},\ldots ,u_{n})$
be a problem over $F$. For $i=1,\ldots ,n$, choose a number $\delta _{i}\in
\{0,1\}$ such that the family of concepts $(U,C(u_{i},\lnot \delta _{i}))$
is $m_{i}$-family of concepts, where $1\leq m_{i}\leq k$. It is easy to show
that $(U,C(u_{i},\lnot \delta _{i}))$ is $r$-i-reduced family of concepts.
Using the inductive hypothesis, we conclude that, for $i=1,\ldots ,n$, there
is a decision tree $\Gamma _{i}$ over $z$, which uses both membership
queries and proper equivalence queries for $(U,C(u_{i},\lnot \delta _{i}))$,
solves the problem $z$ relative to $(U,C(u_{i},\lnot \delta _{i}))$, and has
depth at most $rm_{i}$.

Let the hypothesis $H=\{u_{1}(x)=\delta _{1},\ldots ,u_{n}(x)=\delta _{n}\}$
be proper for $F$. We denote by $T_{1}$ a decision tree in which the root is
labeled with the hypothesis $H$, the edge leaving the root and labeled with $%
H$ enters the terminal node labeled with the tuple $(\delta _{1},\ldots
,\delta _{n})$, and for $i=1,\ldots ,n$, the edge leaving the root and
labeled with $\{u_{i}(x)=\lnot \delta _{i}\}$ enters the root of the tree $%
\Gamma _{i}$. One can show that $T_{1}$ is a decision tree over $z$, which
uses both membership queries and proper equivalence queries for $F$, solves
the problem $z$ relative to $F$, and satisfies the inequalities $%
h(T_{1})\leq rk+1\leq r(k+1)$.

Let the hypothesis $H$ be not proper for $F$. Then the equation system $%
\{u_{1}(x)=\delta _{1},\ldots ,u_{n}(x)=\delta _{n}\}$ is inconsistent on $C$%
, and there exists its subsystem $\{u_{i_{1}}(x)=\delta _{i_{1}},\ldots
,u_{i_{t}}(x)=\delta _{i_{t}}\}$, which is inconsistent on $C$ and for which
$t\leq r$. We denote by $G$ a decision tree over $z$ with $2^{t}$ terminal
nodes in which each terminal node is labeled with $n$-tuple $(0,\ldots ,0)$,
and each complete path contains $t$ working nodes labeled with elements $%
u_{i_{1}},\ldots ,u_{i_{t}}$ starting from the root. We denote by $T_{2}$ a
decision tree obtained from the decision tree $G$ by transformation of each
complete path $\xi $ in $G$. Let $\{u_{i_{1}}(x)=\sigma _{1}\},\ldots
,\{u_{i_{t}}(x)=\sigma _{t}\}$ be equation systems attached to edges leaving
the working nodes of $\xi $ labeled with the elements $u_{i_{1}},\ldots
,u_{i_{t}}$, respectively. If $(\sigma _{1},\ldots ,\sigma _{t})=(\delta
_{i_{1}},\ldots ,\delta _{i_{t}})$, then we keep the path $\xi $ untouched.
Otherwise, let $j$ be the minimum number from the set $\{1,\ldots ,t\}$ such
that $\sigma _{j}=\lnot \delta _{i_{j}}$. In this case, we replace the
terminal node of the path $\xi $ with the root of the decision tree $\Gamma
_{i_{j}}$. One can show that $T_{2}$ is a decision tree over $z$, which uses
both membership and proper equivalence queries, solves the problem $z$
relative to $F$, and satisfies the inequalities $h(T_{2})\leq rk+t\leq
r(k+1) $. Therefore, $h_{F}^{(5)}(z)\leq r(k+1)$ for any problem $z$ over $F$%
.

Let $F\in \mathcal{C\cap I}$. Then $F$ is $r$-i-reduced $k$-family of
concepts for some natural $r$ and $k$, and $h_{F}^{(5)}(z)\leq rk$ for each
problem $z$ over $F$. From Lemma \ref{L2a} it follows that $%
h_{F}^{(4)}(z)\leq 2^{rk}-1$ for each problem $z$ over $F$. Therefore $%
h_{F}^{(4)}(n)=O(1)$ and $h_{F}^{(5)}(n)=O(1)$.

(b) Let $F=(U,C)\in (\mathcal{D}\setminus \mathcal{C)\cap I}$. By Lemma \ref%
{L3a}, $h_{F}^{(5)}(n)\geq h_{F}^{(3)}(n)$ and $h_{F}^{(4)}(n)\geq
h_{F}^{(2)}(n)$ for any $n\in \mathbb{N}$. Using the fact that $U\in
\mathcal{D}\setminus \mathcal{C}$ and Theorem \ref{T2}, we obtain $%
h_{F}^{(2)}(n)=\Omega (\log n)$ and $h_{F}^{(3)}(n)=\Omega (\frac{\log n}{%
\log \log n})$. Therefore $h_{F}^{(4)}(n)=\Omega (\log n)$ and $%
h_{F}^{(5)}(n)=\Omega (\frac{\log n}{\log \log n})$.

Since the family of concepts $F$ belongs to the set $\mathcal{D}$, it has
finite VC-dimension $\mathit{VC}(F)$. Since $F\in \mathcal{I}$, the family of
concepts $F$ is $r$-i-reduced for some natural $r$. We assume that $r\geq 2$%
. We can do it because each $t$-i-reduced family of concepts, $t\in \mathbb{N%
}$, is $(t+1)$-i-reduced.

We now show that $h_{F}^{(4)}(n)=O(\log n)$. Let $z=(u_{1},\ldots ,u_{n})$
be an arbitrary problem over $F$. From Lemma 5.1 \cite{Moshkov05} it follows
that $|\Delta _{F}(z)|\leq (4n)^{\mathit{VC}(F)}$.

We consider a decision tree $\Gamma $ over $z$, which solves the problem $z$
relative to $F$ and uses only proper equivalence queries. This tree is
constructed by a variant of the halving algorithm \cite%
{Angluin04,Hegedus95,Hellerstein96}. We describe the work of this tree for
an arbitrary concept $c$ from $C$. Set $\Delta =$ $\Delta _{F}(z)$. If $%
|\Delta |=1$, then the only $n$-tuple from $\Delta $ is the solution $z(c)$
to the problem $z$ for the concept $c$. Let $|\Delta |\geq 2$. For $%
i=1,\ldots ,n$, we denote by $\delta _{i}$ a number from $\{0,1\}$ such that
$|\Delta (u_{i},\delta _{i})|\geq |\Delta (u_{i},\lnot \delta _{i})|$.

Let the system of equations $H=\{u_{1}(x)=\delta _{1},\ldots
,u_{n}(x)=\delta _{n}\}$ be consistent on $C$. In this case, the root of $%
\Gamma $ is labeled with the proper hypothesis $H$. After this query, either
the problem $z$ will be solved (if the answer is $H$) or the number of
remaining tuples in $\Delta $ will be at most $|\Delta |/2$ (if the answer
is a counterexample $\{u_{i}(x)=\lnot \delta _{i}\}$).

Let the system of equations $H$ be inconsistent on $C$. For any inconsistent
subsystem $B$ of $H$, there exists a subsystem $D$ of $B$, which is
inconsistent and contains at most $r$ equations. Then the system $D$
contains at least one equation $u_{i}(x)=\delta _{i}$ such that $|\Delta
(u_{i},\lnot \delta _{i})|\geq |\Delta |/r$. If we assume the contrary, we
obtain that the system $D$ is consistent, which is impossible. Let $u_{i}\in
\{u_{1},\ldots ,u_{n}\}$. The element $u_{i}$ is called balanced if $|\Delta
(u_{i},\lnot \delta _{i})|\geq |\Delta |/r$, and unbalanced if $|\Delta
(u_{i},\lnot \delta _{i})|<|\Delta |/r$.

We denote by $H_{u}$ the subsystem of $H$ consisting of all equations $%
u_{i}(x)=\delta _{i}$ from $H$ with unbalanced elements $u_{i}$. We now show
that the system $H_{u}$ is consistent. Let us assume the contrary. Then it
will contain at least one equation for balanced element, which is
impossible. Let $b$ be a solution from $C$ to the system $H_{u}$, and $%
u_{1}(b)=\sigma _{1},\ldots ,u_{n}(b)=\sigma _{n}$. Then the system of
equations $P=\{u_{1}(x)=\sigma _{1},\ldots ,u_{n}(x)=\sigma _{n}\}$ is
consistent on $C$.

In the considered case, the root of $\Gamma $ is labeled with the proper
hypothesis $P$. After this query, either the problem $z$ will be solved (if
the answer is $P$), or the number of remaining tuples in $\Delta $ will be
less than $|\Delta |/r$ (if the answer is a counterexample $\{u_{i}(x)=\lnot
\sigma _{i}\}$ and $u_{i}$ is an unbalanced element), or the number of
remaining tuples in $\Delta $ will be at most $|\Delta |/2$ (if the answer
is a counterexample $\{u_{i}(x)=\lnot \sigma _{i}\}$, $\sigma _{i}=\delta
_{i}$, and $u_{i}$ is a balanced element), or the number of remaining
tuples in $\Delta $ will be at most $|\Delta |(1-1/r)$ (if the answer is a
counterexample $\{u_{i}(x)=\lnot \sigma _{i}\}$, $\sigma _{i}=\lnot \delta
_{i}$, and $u_{i}$ is a balanced element).

After the first query ($H$ or $P$) of the decision tree $\Gamma $, either
the problem $z$ will be solved or the number of remaining tuples in $\Delta $
will be at most $|\Delta |(1-1/r)$. In the latter case when the answer is a
counterexample of the kind $\{u_{i}(x)=\lnot \gamma _{i}\}$ ($\gamma
_{i}=\delta _{i}$ if the first query is $H$ and $\gamma _{i}=\sigma _{i}$ if
the first query is $P$) set $\Delta =$ $\Delta _{U}(z)(u_{i},\lnot \gamma
_{i})$. It is easy to show that the family of concepts $(U,C(u_{i},\lnot
\gamma _{i}))$ is also $r$-i-reduced. The decision tree $\Gamma $ continues
to work with the concept $c$ and the set of $n$-tuples $\Delta $ in the same
way.

Let during the work with the concept $c$, the decision tree $\Gamma $ make $%
q $ queries. After the $(q-1)$th query, the number of remaining $n$-tuples
in the set $\Delta $ is at least two and at most $(4n)^{\mathit{VC}(F)}(1-1/r)^{q-1}$%
. Therefore $(1+1/(r-1))^{q}\leq (4n)^{\mathit{VC}(F)}$ and $q\ln (1+1/(r-1))\leq
\mathit{VC}(F)\ln (4n)$. Taking into account that $\ln (1+1/m)>1/(m+1)$ for any
natural $m$, we obtain $q\,<r\mathit{VC}(F)\ln (4n)$. So during the processing of the
concept $c$, the decision tree $\Gamma $ makes at most $r\mathit{VC}(F)\ln (4n)$
queries. Since $c$ is an arbitrary concept from $C$, the depth of $\Gamma $
is at most $r\mathit{VC}(F)\ln (4n)$ and $h_{F}^{(4)}(z)\leq r\mathit{VC}(F)\ln (4n)$. Since $%
z $ is an arbitrary problem over $F$, we obtain $h_{F}^{(4)}(n)=O(\log n)$.
By Lemma \ref{L3a}, $h_{F}^{(5)}(n)=O(\log n)$.

(c) Let $F=(U,C)\in \mathcal{D}\setminus \mathcal{I}$. From Lemma \ref{L3a}
it follows that $h_{F}^{(5)}(n)\leq h_{F}^{(4)}(n)\leq n$ for any $n\in
\mathbb{N}$. We now show that, for any $m\in \mathbb{N}$, there exists a
natural $n$ such that $n\geq m$, $h_{F}^{(4)}(n)\geq n-1$, and $%
h_{F}^{(5)}(n)\geq n-1$.

Let $m\in \mathbb{N}$. Since $F\notin \mathcal{I}$, there exists a system of
equations $P$ over $U$ with $n\geq m$ equations such that $P$ is
inconsistent but each proper subsystem of $P$ is consistent on $C$. Let, for
the definiteness, $P=\{u_{1}(x)=0,\ldots ,u_{n}(x)=0\}$. Consider the
problem $z=(u_{1},\ldots ,u_{n})$ over $F$. Then, for $i=1,\ldots ,n$, the
set $\Delta _{F}(z)$ contains $n$-tuple $\bar{\delta}_{i}=(0,\ldots
,0,1,0,\ldots ,0)$ in which all digits with the exception of the $i$th one
are equal to $0$.

Let $\Gamma $ be a decision tree over $z$ that solves the problem $z$
relative to $F$ and uses both membership and proper equivalence queries. We
consider a complete path $\xi $ in $\Gamma $ in which each edge is labeled with an equation system of the kind $\{u_i(x)=0\}$, where $u_i \in U(z)$. Such complete path exists since $P$ is not a proper hypothesis. Let  $\pi (\xi
)=(u_{i_{1}},0)\cdots (u_{i_{t}},0)$ for some $u_{i_{1}},\ldots ,u_{i_{t}}\in U(z)$.  Since $\Gamma $ solves the problem $z$, the set $%
\Delta _{F}(z)\pi (\xi )$ contains at most one tuple. If we assume that $%
t<n-1$, we obtain that $\Delta _{F}(z)\pi (\xi )$ contains at least two
tuples. Therefore $t\geq n-1$ and $h(\Gamma )\geq n-1$. Thus, $%
h_{F}^{(5)}(z)\geq n-1$, $h_{F}^{(5)}(n)\geq n-1$ and, by Lemma \ref{L3a}, $%
h_{F}^{(4)}(n)\geq n-1$.

(d) Let $F\notin \mathcal{D}$. From Lemma \ref{L3a} it follows that $%
h_{F}^{(3)}(n)\leq h_{F}^{(5)}(n)\leq h_{F}^{(4)}(n)\leq n$ for any $n\in
\mathbb{N}$. By Theorem \ref{T2}, $h_{F}^{(3)}(n)=n$ for any $n\in \mathbb{N}$.
Thus, $h_{F}^{(5)}(n)=h_{F}^{(4)}(n)=n$ for any $n\in \mathbb{N}$.
\end{proof}

\section{Proof of Theorem \protect\ref{T4} \label{S6}}

First, we prove several auxiliary statements.

\begin{lemma}
\label{P1} $\mathcal{R}\subseteq \mathcal{D}$.
\end{lemma}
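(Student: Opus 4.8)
The plan is to prove the contrapositive-flavored statement directly: if $F=(U,C)$ is $r$-reduced, then $\mathit{VC}(F)\le r$, which immediately gives $F\in\mathcal{D}$. So fix $r\in\mathbb{N}$ such that every consistent (on $C$) system of equations over $F$ has a subsystem with the same solution set and at most $r$ equations, and suppose toward a contradiction that $U$ contains a subset $\{u_1,\ldots,u_{r+1}\}$ shattered by $C$.

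First I would exploit the shattering to produce a concrete consistent system that cannot be reduced. By the definition of shattering, the system $S=\{u_1(x)=0,\ldots,u_{r+1}(x)=0\}$ is consistent on $C$ (taking $\delta_1=\cdots=\delta_{r+1}=0$). Since $F$ is $r$-reduced, there is a subsystem $S'\subseteq S$ with $|S'|\le r$ and the same set of solutions on $C$ as $S$. Because $|S'|\le r<r+1$, at least one equation, say $u_j(x)=0$, does not occur in $S'$; thus every equation of $S'$ has the form $u_i(x)=0$ with $i\ne j$.

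Next I would use shattering once more to exhibit a solution of $S'$ that is not a solution of $S$. Applying the definition to the tuple $(\delta_1,\ldots,\delta_{r+1})$ with $\delta_j=1$ and $\delta_i=0$ for $i\ne j$, there is a concept $c\in C$ with $u_j(c)=1$ and $u_i(c)=0$ for all $i\ne j$. Then $c$ satisfies every equation of $S'$ but fails the equation $u_j(x)=0$ of $S$, so the solution sets of $S'$ and $S$ on $C$ differ -- contradicting the choice of $S'$. Hence no shattered subset of size $r+1$ exists, $\mathit{VC}(F)\le r<\infty$, and therefore $F\in\mathcal{D}$; since $F\in\mathcal{R}$ was arbitrary, $\mathcal{R}\subseteq\mathcal{D}$.

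There is no serious obstacle here; the only point requiring a little care is to keep straight that the $r$-reducedness hypothesis yields a \emph{subsystem} of $S$ (a subset of its equations), not merely some equivalent system of bounded size -- that is exactly what lets us pin down a missing equation $u_j(x)=0$ and then defeat $S'$ with a single shattering witness. It is also worth noting in passing that the argument actually shows the sharper bound $\mathit{VC}(F)\le r$, not just finiteness.
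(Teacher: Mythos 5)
Your proof is correct, but it takes a genuinely different route from the paper. The paper argues indirectly through the complexity machinery: it invokes Theorem \ref{T1} to get $h_{F}^{(1)}(n)=\Theta(\log n)$ for $F\in\mathcal{R}$, then shows that $F\notin\mathcal{D}$ would force $|\Delta_F(z)|=2^n$ for some problem $z$ of dimension $n$, whence any membership-query decision tree needs depth at least $n$ (via the bound of $2^{h(\Gamma)}$ on the number of terminal nodes), contradicting the logarithmic behavior. You instead give a direct combinatorial argument: a shattered set of size $r+1$ yields the consistent system $S=\{u_1(x)=0,\ldots,u_{r+1}(x)=0\}$, any equivalent subsystem $S'$ with at most $r$ equations must omit some $u_j(x)=0$, and the shattering witness with $u_j(c)=1$ and $u_i(c)=0$ for $i\ne j$ then separates the solution sets of $S'$ and $S$. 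Your argument is self-contained (it does not depend on Theorem \ref{T1} or on any counting of decision-tree leaves) and yields the sharper quantitative conclusion $\mathit{VC}(F)\le r$ for every $r$-reduced family, whereas the paper's proof only establishes finiteness of the VC-dimension; the paper's route is shorter given that Theorem \ref{T1} is already in hand. Both are valid; the one point you rightly flag, that $r$-reducedness produces a genuine subsystem (a subset of the equations) rather than merely an equivalent bounded-size system, is exactly what makes your missing-equation step work.
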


\begin{proof}
Let $F\in \mathcal{R}$. By Theorem \ref{T1}, $h_{F}^{(1)}(n)=\Theta (\log n)$%
. Let us assume that $F\notin \mathcal{D}$. Then, for any $n\in \mathbb{N}$,
there exists a problem $z=(u_{1},\ldots ,u_{n})$ over $F$ such that $|\Delta
_{F}(z)|=2^{n}$. Let $\Gamma $ be a decision tree over $z$, which solves the
problem $z$ relative to $F$ and uses only membership queries. Then $\Gamma $
should have at least $2^{n}$ terminal nodes. One can show that the number of
terminal nodes in the tree $\Gamma $ is at most $2^{h(\Gamma )}$. Then $%
2^{n}\leq 2^{h(\Gamma )}$, $h(\Gamma )\geq n$, and $h_{F}^{(1)}(z)\geq $ $n.$
Therefore $h_{F}^{(1)}(n)\geq n$ for any $n\in \mathbb{N}$, which is
impossible. Thus, $\mathcal{R}\subseteq \mathcal{D}$.
\end{proof}

\begin{lemma}
\label{P2} $\mathcal{C}\subseteq \mathcal{D}$.
\end{lemma}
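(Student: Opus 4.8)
The plan is to prove Lemma~\ref{P2} by mirroring the argument used for Lemma~\ref{P1}, now with equivalence queries in place of membership queries. Suppose $F = (U,C) \in \mathcal{C}$. By Theorem~\ref{T2}(a), $h_F^{(2)}(n) = O(1)$, so there is a constant $c$ with $h_F^{(2)}(n) \le c$ for all $n \in \mathbb{N}$. Assume for contradiction that $F \notin \mathcal{D}$, i.e.\ $\mathit{VC}(F) = \infty$. Then for every $n \in \mathbb{N}$ there is a subset $\{u_1,\ldots ,u_n\}$ of $U$ shattered by $C$; for the problem $z = (u_1,\ldots ,u_n)$ this means every tuple of $\{0,1\}^n$ is consistent, so $|\Delta_F(z)| = 2^n$. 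Any decision tree $\Gamma$ over $z$ that solves $z$ relative to $F$ using only equivalence queries must (as in the proof of Lemma~\ref{P1}) have at least $2^n$ terminal nodes, while such a tree has at most $(n+1)^{h(\Gamma)}$ terminal nodes, since each working node (an equivalence query over $z$) has $n+1$ outgoing edges. Hence $2^n \le (n+1)^{h(\Gamma)}$, so $h_F^{(2)}(z) \ge n/\log_2(n+1)$ and $h_F^{(2)}(n) \ge n/\log_2(n+1)$, which exceeds $c$ for $n$ large enough. This contradiction shows $F \in \mathcal{D}$.

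I do not expect any real obstacle here. The only points that need to be spelled out are the two elementary counting facts about equivalence-query decision trees over $z$ with $\dim z = n$ — that such a tree solving $z$ has at least $|\Delta_F(z)|$ terminal nodes and at most $(n+1)^{h(\Gamma)}$ of them — and these are exactly the branching-factor-$(n+1)$ analogues of the facts already invoked in the proofs of Theorem~\ref{T1} and Lemma~\ref{P1}.

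Alternatively, one can give a proof that does not invoke Theorem~\ref{T2}, by proving the sharper statement (by strong induction on $k$) that every $k$-family of concepts $F = (U,C)$, not necessarily infinite, satisfies $\mathit{VC}(F) \le k$; applying this to a witnessing $k$ for $F \in \mathcal{C}$ then gives $F \in \mathcal{D}$ immediately. The base case $k = 0$ is trivial, since all functions are constant on $C$ and hence no singleton is shattered. For the inductive step, if $F$ is a $(k+1)$-family and $\{u_1,\ldots ,u_{k+2}\}$ were shattered by $C$, we pick (by the definition of a $(k+1)$-family) a value $\delta \in \{0,1\}$ and $m \le k$ with $F' = (U, C(u_1,\delta))$ an $m$-family; here $C(u_1,\delta) \neq \emptyset$ because $u_1$ is non-constant on $C$, being an element of a shattered set. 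Then $\{u_2,\ldots ,u_{k+2}\}$ is shattered by $C(u_1,\delta)$, since any solution on $C$ of $\{u_1(x)=\delta, u_2(x)=\delta_2,\ldots ,u_{k+2}(x)=\delta_{k+2}\}$ lies in $C(u_1,\delta)$ and witnesses consistency of $\{u_2(x)=\delta_2,\ldots ,u_{k+2}(x)=\delta_{k+2}\}$ there; hence $\mathit{VC}(F') \ge k+1$, contradicting the inductive hypothesis $\mathit{VC}(F') \le m \le k$. The direct induction is slightly cleaner and more self-contained, so it is the route I would most likely take.
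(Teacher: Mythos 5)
Both of your arguments are correct, but they sit at different distances from the paper. The paper's own proof is the shortest possible version of your first route: it takes $F\in\mathcal{C}$, invokes Theorem~\ref{T2}(a) to get $h_{F}^{(2)}(n)=O(1)$, and then, assuming $F\notin\mathcal{D}$, invokes Theorem~\ref{T2}(c) to get $h_{F}^{(2)}(n)=n$ --- an immediate contradiction, with no need to redo any leaf counting. Your first variant reproves a weaker form of part (c) by hand (at least $2^{n}$ terminal nodes versus at most $(n+1)^{h(\Gamma)}$, hence $h_{F}^{(2)}(n)\geq n/\log_{2}(n+1)$); this is sound, and the two counting facts you flag do hold for the paper's notion of ``solves $z$ relative to $F$'' (distinct tuples of $\Delta_{F}(z)$ must label distinct terminal nodes, and each equivalence-query node has exactly $n+1$ children), but it buys nothing over simply citing part (c), whose proof is independent of Lemma~\ref{P2} and hence not circular. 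Your second variant is the genuinely different one: a direct strong induction showing that every $k$-family of concepts has VC-dimension at most $k$, using that a set shattered by $C$ remains shattered (minus one element) by $C(u,\delta)$, and that $C(u,\delta)\neq\emptyset$ because $u$ lies in a shattered set. This is more elementary and self-contained --- it bypasses decision trees and Theorem~\ref{T2} entirely and yields the quantitative bound $\mathit{VC}(F)\leq k$ rather than mere membership in $\mathcal{D}$ --- at the cost of not reusing machinery the paper has already built. Either version would be accepted; the inductive one is arguably the cleaner standalone proof.
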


\begin{proof}
Let $F\in \mathcal{C}$. By Theorem \ref{T2}, $h_{F}^{(2)}(n)=O(1)$. Let us
assume that $F\notin \mathcal{D}$. Then, by Theorem \ref{T2}, $%
h_{F}^{(2)}(n)=n$ for any $n\in \mathbb{N}$, which is impossible. Therefore $%
\mathcal{C}\subseteq \mathcal{D}$.
\end{proof}

\begin{lemma}
\label{P3} $\mathcal{R}\cap \mathcal{C}=\emptyset $.
\end{lemma}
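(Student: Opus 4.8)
The plan is to deduce $\mathcal{C}\cap\mathcal{R}=\emptyset$ from the stronger assertion that no infinite $k$-family of concepts is $r$-reduced for any $r$. For this it suffices to show: if $F=(U,C)$ is an infinite $k$-family of concepts, then for every $n\in\mathbb{N}$ there is a consistent system of equations over $F$ with $n$ equations such that deleting any single equation strictly enlarges its set of solutions on $C$. Indeed, such a system has no equivalent subsystem with fewer equations --- a smaller equivalent subsystem would be contained in an $(n-1)$-equation subsystem of it, which would then also be equivalent --- so $F$ is not $r$-reduced for any $r$, i.e.\ $F\notin\mathcal{R}$. Since every family in $\mathcal{C}$ is an infinite $k$-family for some $k$, this yields the lemma.

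Before the induction I would record a \emph{lifting lemma}: if $C'\subseteq C$ and $S$ is a system over $U$ that is consistent on $C'$ and such that deleting any equation of $S$ strictly enlarges its solution set on $C'$, then the same holds for $S$ on $C$. Consistency is immediate since solution sets only grow on passing from $C'$ to $C$; and any concept $c\in C'$ that is newly admitted when an equation is deleted is also a concept of $C$ newly admitted on $C$.

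Now I would induct on $k$. There is no infinite $0$-family (all element-functions would be constant on $C$, forcing $|C|\le1$), so the base is vacuous. Let $F=(U,C)$ be an infinite $(k+1)$-family; for each non-constant $u\in U$ fix $\delta_u$ and $m_u\le k$ with $(U,C(u,\delta_u))$ an $m_u$-family, and set $N_u=C(u,\delta_u)$. If some $N_u$ is infinite, then $(U,C(u,\delta_u))$ is an infinite $m_u$-family with $m_u\le k$; the induction hypothesis supplies the required systems over $(U,C(u,\delta_u))$, and the lifting lemma transports them to $(U,C)$. Otherwise every $N_u$ is finite and nonempty, $u(c)=\delta_u$ holds exactly when $c\in N_u$, and $\bigcup_u N_u$ (the union over all non-constant $u$) omits at most one concept of $C$ --- a concept lying in no $N_u$ has a completely determined profile on $U$ --- hence is infinite, so there are infinitely many non-constant elements.

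This second case is the crux and the only place I expect real work. Fix $n$ and choose, among all sets of $n$ distinct non-constant elements, a set $\{v_1,\dots,v_n\}$ minimizing $|N_{v_1}\cup\cdots\cup N_{v_n}|$ (a minimum exists by well-ordering). For this choice no $N_{v_i}$ is contained in $\bigcup_{j\ne i}N_{v_j}$: otherwise replacing $v_i$ by any non-constant element $w$ not already present would not decrease the union, forcing $N_w\subseteq\bigcup_{j\ne i}N_{v_j}$ for every such $w$ and hence making $\bigcup_u N_u$ finite, a contradiction. Pick $c_i\in N_{v_i}\setminus\bigcup_{j\ne i}N_{v_j}$ for each $i$; the $c_i$ are automatically distinct. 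The system $\{v_1(x)=\lnot\delta_{v_1},\dots,v_n(x)=\lnot\delta_{v_n}\}$ then has solution set $C\setminus(N_{v_1}\cup\cdots\cup N_{v_n})$, which is infinite hence nonempty, and deleting the $i$-th equation re-admits $c_i$; this is the system we need, completing the induction and the lemma. The points needing care are the exact bookkeeping in this last case (finiteness of the $N_u$ together with cofiniteness of their union) and the elementary-but-easily-botched reduction in the first paragraph from ``no equivalent proper subsystem'' to ``not $r$-reduced for any $r$''.
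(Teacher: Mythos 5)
Your route is genuinely different from the paper's. The paper assumes $F\in\mathcal{R}\cap\mathcal{C}$, uses the $r$-reducedness together with the $k$-family recursion to build, for every problem $z$, a membership-query decision tree of depth at most $rk$, and contradicts statement (a) of Theorem \ref{T1}, which forces $h_F^{(1)}(n)=\Theta(\log n)$ for $F\in\mathcal{R}$. You instead aim to show outright that no infinite $k$-family is $r$-reduced, by exhibiting for every $n$ a consistent $n$-equation system with no equivalent proper subsystem. Your opening reduction, the lifting lemma, and Case 1 are all sound, and the direct combinatorial statement would be a nice strengthening if it were fully proved.

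The gap is the selection argument in Case 2. Minimality of $|N_{v_1}\cup\cdots\cup N_{v_n}|$ tells you that replacing $v_i$ by $w$ cannot \emph{decrease} the union; but $\bigl|\bigcup_{j\neq i}N_{v_j}\cup N_w\bigr|\geq\bigl|\bigcup_{j\neq i}N_{v_j}\bigr|$ holds automatically for every $w$, so nothing is forced about $N_w$ --- you would need the reverse inequality, which minimality does not provide. Moreover the conclusion genuinely fails for the minimizer: take $U=\{q\}\cup\{p_i:i\in\mathbb{N}\}$ and $C=\{c_i:i\in\mathbb{N}\}$ with $c_1=\{p_1,q\}$, $c_2=\{p_2,q\}$, and $c_i=\{p_i\}$ for $i\geq3$; this is a $2$-family with $N_{p_i}=C(p_i,1)=\{c_i\}$ and $N_q=C(q,1)=\{c_1,c_2\}$. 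For $n=3$ the unique minimizer is $\{p_1,p_2,q\}$, where $N_q\subseteq N_{p_1}\cup N_{p_2}$, and your system $\{p_1(x)=0,p_2(x)=0,q(x)=0\}$ has the same solution set after deleting $q(x)=0$. (A good triple exists here, e.g.\ $\{p_3,p_4,p_5\}$, so the target statement survives --- but your rule does not find it.) Worse, the only hypotheses Case 2 actually uses (infinitely many finite nonempty $N_u$ with infinite union) are insufficient in general: a chain $N_{u_1}\subsetneq N_{u_2}\subsetneq\cdots$ satisfies them while no $n\geq2$ of the corresponding negative equations form an irredundant system; this is exactly the configuration of the paper's family $F_7$, which is excluded only because $F_7\notin\mathcal{C}$. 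So a correct Case 2 must invoke the $k$-family structure beyond what you extract from it; as written, this step is a genuine hole, and the paper's detour through $h_F^{(1)}$ is the safer repair.
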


\begin{proof}
Assume the contrary: $\mathcal{R}\cap \mathcal{C}\neq \emptyset $ and $%
F=(U,C)\in \mathcal{R}\cap \mathcal{C}$. Let $r,k\in \mathbb{N}$ and $F$ be $r$%
-reduced $k$-family of concepts. We now
consider an arbitrary problem $z=(u_{1},\ldots ,u_{n})$ over $F$ and
describe a decision tree $\Gamma $ over $z$, which uses only membership
queries, solves the problem $z$ over $F$, and has depth at most $kr$.

For $i=1,\ldots ,n$, let $\delta _{i}$ be a number from $\{0,1\}$ such that $%
(U,C(u_{i},\lnot \delta _{i}))$ is $m_{i}$-family of concepts with $0\leq
m_{i}<k$. Let $t$ be the maximum number from the set $\{1,\ldots ,n\}$ such
that the system of equations $S=\{u_{1}(x)=\delta _{1},\ldots
,u_{t}(x)=\delta _{t}\}$ is consistent. Then there exists a subsystem $%
\{u_{i_{1}}(x)=\delta _{i_{1}},\ldots ,u_{i_{p}}(x)=\delta _{i_{p}}\}$ of
the system $S$, which has the same set of solutions as $S$ and for which $%
p\leq r$. For a given $c\in C$, the decision tree $\Gamma $ computes
sequentially values $u_{i_{1}}(c),\ldots ,u_{i_{p}}(c)$.

If, for some $q\in \{1,\ldots ,p\}$, $u_{i_{1}}(c)=\delta _{i_{1}},\ldots
,u_{i_{q-1}}(c)=\delta _{i_{q-1}}$, and $u_{i_{q}}(c)=\lnot \delta _{i_{q}}$%
, then the decision tree $\Gamma $ continues to work with the problem $z$
and the family of concepts $F^{\prime }=(U,C^{\prime })$, where $C^{\prime }$
is the set of solutions on $C$ of the equation system $\{u_{i_{1}}(x)=\delta
_{i_{1}},\ldots ,u_{i_{q-1}}(x)=\delta _{i_{q-1}},u_{i_{q}}(x)=\lnot \delta
_{i_{q}}\}$. One can show that $F^{\prime }$ is $l^{\prime }$-family of concepts for
some $l^{\prime }\leq m_{i_{q}}<k$.

Let $u_{i_{1}}(c)=\delta _{i_{1}},\ldots ,u_{i_{p}}(c)=\delta _{i_{p}}$. If $t=n$,
then $(\delta _{1},\ldots ,\delta _{n})$ is the solution of the problem $z$
for the considered concept $c$. Let $t<n$. Then the decision tree $\Gamma $
continues to work with the problem $z$ and the family of concepts $F^{\prime
\prime }=(U,C^{\prime \prime })$, where $C^{\prime \prime }$ is the set of
solutions on $C$ of the equation system $\{u_{i_{1}}(x)=\delta
_{i_{1}},\ldots ,u_{i_{p}}(x)=\delta _{i_{p}}\}$. We know that the equation
system $\{u_{1}(x)=\delta _{1},\ldots ,u_{t}(x)=\delta
_{t},u_{t+1}(x)=\delta _{t+1}\}$ is inconsistent. Therefore the system $%
\{u_{i_{1}}(x)=\delta _{i_{1}},\ldots ,u_{i_{p}}(x)=\delta
_{i_{p}},u_{t+1}(x)=\delta _{t+1}\}$ is inconsistent. Hence $C^{\prime
\prime }\subseteq C(u_{t+1},\lnot \delta _{t+1})$ and $F^{\prime \prime }$
is $l^{\prime \prime }$-family of concepts for some $l^{\prime \prime }\leq
m_{t+1}<k$.

As a result, after at most $r$ membership queries, we either solve the
problem $z$ or reduce the consideration of the problem $z$ over $k$-family
of concepts $F$ to the consideration of the problem $z$ over some $l$-family
of concepts, where $l<k$. After at most $rk$ membership queries, we solve the
problem $z$ since each problem over $0$-family of concepts has exactly one
possible solution. Therefore $h_{F}^{(1)}(z)\leq rk$ and $%
h_{F}^{(1)}(n)=O(1) $. By Theorem \ref{T1}, $h_{F}^{(1)}(n)=\Theta (\log n)$%
. The obtained contradiction shows that $\mathcal{R}\cap \mathcal{C}%
=\emptyset $.
\end{proof}

\begin{lemma}
\label{P1a} $\mathcal{R}\subseteq \mathcal{I}$.
\end{lemma}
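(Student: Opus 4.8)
The goal is to show that every $r$-reduced infinite family of concepts is also $r'$-i-reduced for some $r'$. The natural candidate is $r' = r+1$. So let $F = (U,C) \in \mathcal{R}$, fix the witnessing constant $r \in \mathbb{N}$, and let $S = \{v_1(x) = \delta_1, \ldots, v_m(x) = \delta_m\}$ be an arbitrary \emph{inconsistent} system of equations over $F$. I must produce an inconsistent subsystem of $S$ with at most $r+1$ equations.

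The plan is to find a ``minimal point of inconsistency'' along a greedy enlargement of subsystems. Starting from the empty subsystem (which is consistent, its solution set being all of $C$), add the equations of $S$ one at a time in the order $v_1(x)=\delta_1, v_2(x)=\delta_2, \ldots$ until consistency is first lost; let $k$ be the smallest index such that $S_k := \{v_1(x)=\delta_1,\ldots,v_k(x)=\delta_k\}$ is inconsistent. Such a $k$ exists and $k \le m$ since $S = S_m$ is inconsistent. By minimality, $S_{k-1} = \{v_1(x)=\delta_1,\ldots,v_{k-1}(x)=\delta_{k-1}\}$ is consistent. Now apply the $r$-reducedness of $F$ to the \emph{consistent} system $S_{k-1}$: there is a subsystem $\{v_{i_1}(x)=\delta_{i_1}, \ldots, v_{i_t}(x)=\delta_{i_t}\}$ of $S_{k-1}$ with $t \le r$ that has the same solution set on $C$ as $S_{k-1}$. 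Then the set of solutions of $\{v_{i_1}(x)=\delta_{i_1}, \ldots, v_{i_t}(x)=\delta_{i_t}, v_k(x)=\delta_k\}$ equals the set of solutions of $S_{k-1} \cup \{v_k(x)=\delta_k\} = S_k$, which is empty. Hence this subsystem of $S$ is inconsistent and contains $t+1 \le r+1$ equations. Since $S$ was an arbitrary inconsistent system over $F$, we conclude $F$ is $(r+1)$-i-reduced, so $F \in \mathcal{I}$, proving $\mathcal{R} \subseteq \mathcal{I}$.

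I do not anticipate a serious obstacle here; the only point requiring a little care is the base case when $k=1$ (i.e.\ the single equation $v_1(x)=\delta_1$ is already inconsistent): then $S_0$ is the empty system, $t=0$, and the inconsistent subsystem is $\{v_1(x)=\delta_1\}$ itself, which has $1 \le r+1$ equations, so the argument still goes through. One should also note explicitly that passing from ``same solution set as $S_{k-1}$'' to ``same solution set as $S_k$ after adjoining $v_k(x)=\delta_k$'' is legitimate: intersecting both solution sets with the solution set of the single equation $v_k(x)=\delta_k$ preserves equality. This is routine set-theoretic manipulation and completes the proof.
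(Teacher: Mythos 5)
Your proof is correct and follows essentially the same route as the paper: the paper takes $S'$ to be a maximum-cardinality consistent subsystem of $S$, reduces it to at most $r$ equations with the same solution set via $r$-reducedness, and adjoins one more equation of $S$ to force inconsistency, which is exactly your argument with the maximal consistent subsystem replaced by the first inconsistent prefix $S_k$. Your extra remarks on the $k=1$ case and on why adjoining $v_k(x)=\delta_k$ preserves equality of solution sets are fine and only add detail the paper leaves implicit.
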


\begin{proof}
Let $F=(U,C)\in \mathcal{R}$. Then $F$ is $r$-restricted for some natural $r$%
. We now show that $F$ is $(r+1)$-i-restricted. Let $S$ be an arbitrary
inconsistent on $C$ equation system over $F$ and $S^{\prime }$ be a
subsystem of $S$ with the maximum number of equations that is consistent.
Since $F$ is $r$-restricted, the system $S^{\prime }$ has a subsystem $%
S^{\prime \prime }$ with at most $r$ equations and the same set of solutions
on $C$ as the system $S^{\prime }$. It is clear that there exists an
equation $u(x)=\delta $ from $S$ such that the system of equations $%
S^{\prime }\cup \{u(x)=\delta \}$ is inconsistent. Then the subsystem $%
S^{\prime \prime }\cup \{u(x)=\delta \}$ of $S$ with at most $r+1$ equations
is inconsistent. Therefore $F$ is $(r+1)$-i-restricted and $F\in \mathcal{I}$%
.
\end{proof}

\begin{table}[h]
\caption{All $3$-tuples from the set $\{0,1\}^{3}$}
\label{tab3}\center
\begin{tabular}{|l|lll|}
\hline
& $\mathcal{R}$ & $\mathcal{D}$ & $\mathcal{C}$ \\ \hline
1 & $0$ & $0$ & $0$ \\
2 & $0$ & $1$ & $0$ \\
3 & $0$ & $1$ & $1$ \\
4 & $1$ & $1$ & $0$ \\
5 & $1$ & $0$ & $0$ \\
6 & $0$ & $0$ & $1$ \\
7 & $1$ & $0$ & $1$ \\
8 & $1$ & $1$ & $1$ \\ \hline
\end{tabular}%
\end{table}

Let $F$ be an infinite family of concepts and its indicator vector $ind(F)$
be equal to $(e_{1},e_{2},e_{3},e_{4})$. The vector $(e_{1},e_{2},e_{3})$
will be called the restricted indicator vector for the family of concepts $F$
and will be denoted $rind(U)$. In this vector, $e_{1}=1$ if and only if $%
F\in \mathcal{R}$, $e_{2}=1$ if and only if $F\in \mathcal{D}$, and $e_{3}=1$
if and only if $F\in \mathcal{C}$.

\begin{lemma}
\label{P4} For any infinite family of concepts, its restricted indicator
vector coincides with one of the rows of Table \ref{tab3} with numbers 1-4.
\end{lemma}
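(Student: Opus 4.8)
The plan is to argue purely by elimination on the eight rows of Table~\ref{tab3}, which exhausts $\{0,1\}^{3}$. So it suffices to show that a restricted indicator vector cannot equal any of rows 5--8, after which rows 1--4 are the only remaining possibilities.

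First I would handle rows 5 and 7, which both have $e_{1}=1$ and $e_{2}=0$; by definition this would mean $F\in\mathcal{R}$ and $F\notin\mathcal{D}$, contradicting Lemma~\ref{P1} ($\mathcal{R}\subseteq\mathcal{D}$). Next I would handle row 6, which has $e_{2}=0$ and $e_{3}=1$, i.e. $F\notin\mathcal{D}$ and $F\in\mathcal{C}$, contradicting Lemma~\ref{P2} ($\mathcal{C}\subseteq\mathcal{D}$). Finally row 8 has $e_{1}=1$ and $e_{3}=1$, i.e. $F\in\mathcal{R}\cap\mathcal{C}$, contradicting Lemma~\ref{P3} ($\mathcal{R}\cap\mathcal{C}=\emptyset$). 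Since every infinite family of concepts has a well-defined restricted indicator vector lying in $\{0,1\}^{3}$, and rows 5--8 are excluded, the vector must coincide with one of rows 1--4.

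There is no real obstacle here: the lemma is an immediate bookkeeping consequence of Lemmas~\ref{P1}, \ref{P2}, and \ref{P3}, all of which are already established in the excerpt. The only point requiring the slightest care is making sure that Table~\ref{tab3} genuinely lists all eight triples (so that ``not rows 5--8'' really does force ``rows 1--4''); this is visible by inspection of the table. (One could also note in passing that Lemma~\ref{P1a}, $\mathcal{R}\subseteq\mathcal{I}$, is not needed for this particular statement, only later for the full four-component indicator vector in Theorem~\ref{T4}.)
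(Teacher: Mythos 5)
Your proposal is correct and follows exactly the paper's own argument: eliminate rows 5 and 7 via Lemma \ref{P1}, row 6 via Lemma \ref{P2}, and row 8 via Lemma \ref{P3}, leaving rows 1--4. No differences worth noting.
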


\begin{proof}
Table \ref{tab3} contains as rows all $3$-tuples from the set $\{0,1\}^{3}$.
We now show that rows with numbers 5-8 cannot be restricted indicator
vectors of infinite families of concepts. Assume the contrary: there is $i\in
\{5,6,7,8\}$ such that the row with the number $i$ is the restricted
indicator vector of an infinite family of concepts $F$. If $i=5$, then $F\in
\mathcal{R}$ and $F\notin \mathcal{D}$, but this is impossible since, by
Lemma \ref{P1}, $\mathcal{R}\subseteq \mathcal{D}$. If $i=6$, then $%
F\in \mathcal{C}$ and $F\notin \mathcal{D}$, but this is impossible since,
by Lemma \ref{P2}, $\mathcal{C}\subseteq \mathcal{D}$. If $i=7$, then $%
F\in \mathcal{R}$ and $F\notin \mathcal{D}$, but this is impossible since,
by Lemma \ref{P1}, $\mathcal{R}\subseteq \mathcal{D}$. If $i=8$, then $%
F\in \mathcal{R}$ and $F\in \mathcal{C}$, but this is impossible since, by
Lemma \ref{P3}, $\mathcal{R}\cap \mathcal{C}=\emptyset $. Therefore,
for any infinite family of concepts, its restricted indicator vector
coincides with one of the rows of Table \ref{tab3} with numbers 1-4.
\end{proof}

\begin{table}[h]
\caption{All extensions of rows 1-4 of Table \protect\ref{tab3}}
\label{tab4}\center
\begin{tabular}{|l|llll|}
\hline
& $\mathcal{R}$ & $\mathcal{D}$ & $\mathcal{C}$ & $\mathcal{I}$ \\ \hline
1 & $0$ & $0$ & $0$ & $0$ \\
2 & $0$ & $0$ & $0$ & $1$ \\
3 & $0$ & $1$ & $0$ & $0$ \\
4 & $0$ & $1$ & $0$ & $1$ \\
5 & $0$ & $1$ & $1$ & $0$ \\
6 & $0$ & $1$ & $1$ & $1$ \\
7 & $1$ & $1$ & $0$ & $1$ \\
8 & $1$ & $1$ & $0$ & $0$ \\ \hline
\end{tabular}%
\end{table}

\begin{lemma}
\label{P2a} For any infinite family of concepts, its indicator vector
coincides with one of the rows of Table \ref{tab1}.
\end{lemma}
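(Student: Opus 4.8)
The plan is to deduce this purely from Lemma \ref{P4} together with Lemma \ref{P1a}; no new construction is needed. First I would invoke Lemma \ref{P4}: for any infinite family of concepts $F$ the restricted indicator vector $(e_{1},e_{2},e_{3})$ coincides with one of rows 1--4 of Table \ref{tab3}, that is, with $(0,0,0)$, $(0,1,0)$, $(0,1,1)$, or $(1,1,0)$. Since the full indicator vector is $ind(F)=(e_{1},e_{2},e_{3},e_{4})$ with $e_{4}\in\{0,1\}$, each of these four cases extends in exactly two ways, and collecting all extensions gives precisely the eight rows of Table \ref{tab4}. So it remains only to rule out one of those eight rows.

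The one row to eliminate is row 8 of Table \ref{tab4}, namely $(1,1,0,0)$. A family $F$ with this indicator vector would satisfy $F\in\mathcal{R}$ and $F\notin\mathcal{I}$. But Lemma \ref{P1a} asserts $\mathcal{R}\subseteq\mathcal{I}$, so $F\in\mathcal{R}$ forces $F\in\mathcal{I}$, a contradiction. Hence $ind(F)$ must be one of rows 1--7 of Table \ref{tab4}. Finally I would observe, by inspection, that rows 1--7 of Table \ref{tab4} are literally the seven rows of Table \ref{tab1}, which completes the argument.

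This is essentially a bookkeeping step, so I do not anticipate a genuine obstacle once Lemmas \ref{P4} and \ref{P1a} are available. The only point deserving a moment of care is the claim that the four admissible restricted vectors generate exactly the eight rows of Table \ref{tab4} (in particular that row 4 of Table \ref{tab3}, the vector $(1,1,0)$, contributes rows 7 and 8 of Table \ref{tab4}); verifying this and the match with Table \ref{tab1} is immediate from the tables.
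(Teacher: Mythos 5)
Your proposal is correct and follows exactly the same route as the paper's own proof: apply Lemma \ref{P4} to restrict $(e_{1},e_{2},e_{3})$ to rows 1--4 of Table \ref{tab3}, enumerate the eight extensions in Table \ref{tab4}, and eliminate the extension $(1,1,0,0)$ via Lemma \ref{P1a} ($\mathcal{R}\subseteq\mathcal{I}$), leaving precisely the seven rows of Table \ref{tab1}. No differences worth noting.
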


\begin{proof}
Let $F$ be an infinite family of concepts and $%
ind(F)=(e_{1},e_{2},e_{3},e_{4})$. Then $rind(F)=(e_{1},e_{2},e_{3})$. By
Lemma \ref{P4}, $(e_{1},e_{2},e_{3})$ is one of the rows of Table \ref{tab3}
with numbers 1-4. Therefore, for each infinite family of concepts, its
indicator vector is an extension of one of the rows of Table \ref{tab3} with
numbers 1-4: it can be obtained from the row by adding the fourth digit,
which is equal to $0$ or $1$. Table \ref{tab4} contains all extensions of
rows of Table \ref{tab3} with numbers 1-4. We now show that the row with
number 8 cannot be the indicator vector of an infinite family of concepts.
Assume the contrary: there is an infinite family of concepts $F^{\prime }$
such that $ind(F^{\prime })=(1,1,0,0)$. Then $F^{\prime }\in \mathcal{R}$
and $F^{\prime }\notin \mathcal{I}$, but this is impossible since, by
Lemma \ref{P1a}, $\mathcal{R}\subseteq \mathcal{I}$. Therefore, for
any infinite family of concepts, its indicator vector coincides with one of
the rows of Table \ref{tab4} with numbers 1-7. Thus, it coincides with one
of the rows of Table \ref{tab1}.
\end{proof}

We now define seven infinite families of concepts $F_{1},\ldots ,F_{7}$ and
prove that these families belong to the complexity classes $\mathcal{F}%
_{1},\ldots ,\mathcal{F}_{7}$, respectively.

Define an infinite family of concepts $F_{1}=(U_{1},C_{1})$ as follows: $%
U_{1}$ is the set of all infinite sequences $u=(u^{(i)})_{i\in \mathbb{N}}$,
where $u^{(i)}\in \{0,1\}$ for any $i\in \mathbb{N}$, $C_{1}=\{c_{i}:i\in
\mathbb{N}\}$ and, for any $u=(u^{(i)})_{i\in \mathbb{N}}\in U_{1}$ and $%
c_{i}\in C_{1}$, $u\in c_{i}$ if and only if $u^{(i)}=1$.

\begin{lemma}
\label{L1b}The family of concepts $F_{1}$ belongs to the class $\mathcal{F}%
_{1}$.
\end{lemma}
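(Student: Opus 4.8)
The plan is to compute the indicator vector $ind(F_1)=(e_1,e_2,e_3,e_4)$ and check that it equals $(0,0,0,0)$; by the definition of $\mathcal{F}_1$ (the row $1$ of Table~\ref{tab1}) this is exactly the assertion $F_1\in\mathcal{F}_1$. The single fact I would use throughout is the identification $u(c_i)=u^{(i)}$ for $u=(u^{(i)})_{i\in\mathbb{N}}\in U_1$ and $c_i\in C_1$; consequently a system $\{v_1(x)=\delta_1,\dots,v_m(x)=\delta_m\}$ over $F_1$ is consistent on $C_1$ if and only if the $\{0,1\}$-matrix whose rows are the sequences $v_1,\dots,v_m$ has some column equal to $(\delta_1,\dots,\delta_m)$. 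Before anything else I would record that $F_1$ is genuinely an infinite family of concepts: $U_1$ is uncountable, and the $c_i$ are pairwise distinct since for $i\neq j$ the sequence that is $1$ in coordinate $i$ and $0$ in coordinate $j$ lies in $c_i\setminus c_j$, so $C_1$ is infinite.

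Next I would show $F_1\notin\mathcal{D}$ by exhibiting, for each $m\in\mathbb{N}$, a subset of $U_1$ of cardinality $m$ shattered by $C_1$. Fix a bijection between $\{1,\dots,2^m\}$ and $\{0,1\}^m$ and define $w_1,\dots,w_m\in U_1$ so that the $i$-th column of the matrix with rows $w_1,\dots,w_m$ is the $i$-th element of $\{0,1\}^m$ for $i\le 2^m$ and is $(0,\dots,0)$ for $i>2^m$. Then every tuple in $\{0,1\}^m$ occurs as a column, so $\{w_1,\dots,w_m\}$ is shattered by $C_1$ and $\mathit{VC}(F_1)=\infty$. By Lemmas~\ref{P1} and \ref{P2} ($\mathcal{R}\subseteq\mathcal{D}$ and $\mathcal{C}\subseteq\mathcal{D}$), $F_1\notin\mathcal{D}$ immediately forces $F_1\notin\mathcal{R}$ and $F_1\notin\mathcal{C}$, so $e_1=e_2=e_3=0$.

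It remains to prove $e_4=0$, i.e.\ $F_1\notin\mathcal{I}$, which is the only point requiring an explicit construction (it parallels the family of "critical" inconsistent systems used in the proof of Theorem~\ref{T3}(c)). Fix $n\in\mathbb{N}$. Define $u_1,\dots,u_n\in U_1$ so that for $i\le n$ the $i$-th column of their matrix is the standard basis vector $e_i\in\{0,1\}^n$ (that is, $u_k^{(i)}=1$ iff $k=i$) and for $i>n$ every entry of the $i$-th column equals $1$. No column equals $(0,\dots,0)$, so $P=\{u_1(x)=0,\dots,u_n(x)=0\}$ is inconsistent on $C_1$; the $u_k$ are pairwise distinct (they already differ within the first $n$ coordinates), so $P$ has exactly $n$ equations. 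Deleting $u_j(x)=0$ from $P$ leaves the system $\{u_k(x)=0:k\neq j\}$, which is satisfied by the concept $c_j$ because the $j$-th column is $e_j$ and hence is $0$ in every coordinate $k\neq j$; thus every proper subsystem of $P$ is consistent. Therefore $F_1$ is not $(n-1)$-i-reduced, and since $n$ was arbitrary, $F_1$ is not $r$-i-reduced for any $r\in\mathbb{N}$, i.e.\ $F_1\notin\mathcal{I}$.

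Combining the four items, $ind(F_1)=(0,0,0,0)$, which is the first row of Table~\ref{tab1}, so $F_1\in\mathcal{F}_1$. I do not expect a genuine obstacle here; the only care needed is to keep straight the duality between "rows" (elements of $U_1$, i.e.\ infinite $\{0,1\}$-sequences) and "columns" (the concepts $c_i$, over which the variable $x$ ranges), and to confirm that every sequence written down is a well-defined element of $U_1$.
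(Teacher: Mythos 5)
Your proof is correct and follows essentially the same route as the paper: show $F_1\notin\mathcal{D}$ via unbounded shattered sets and $F_1\notin\mathcal{I}$ via arbitrarily large inconsistent systems all of whose proper subsystems are consistent, then read off the indicator vector $(0,0,0,0)$ from the classification lemmas. The only cosmetic difference is your choice of critical system ($n$ equations $u_k(x)=0$ with columns $e_1,\dots,e_n$ followed by all-ones columns, versus the paper's $n+1$ equations including a selector element $u_0$ with the equation $u_0(x)=1$); both constructions are valid.
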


\begin{proof}
It is easy to show that the family of concepts $F_{1}$ has infinite
VC-dimension. Therefore $F_{1}\notin \mathcal{D}$. We now show that $%
F_{1}\notin \mathcal{I}$. Let $n\in \mathbb{N}$. We now define elements  $%
u_{0}=(u_{0}^{(i)})_{i\in \mathbb{N}},u_{1}=(u_{1}^{(i)})_{i\in \mathbb{N}%
},\ldots ,u_{n}=(u_{n}^{(i)})_{i\in \mathbb{N}}\in U_{1}$. For any $i\in
\mathbb{N}$, $u_{0}^{(i)}=1$ if and only if $i\in \{1,\ldots ,n\}$. For $%
j=1,\ldots ,n$, $u_{j}^{(i)}=1$ if and only if $j=i$. It is easy to show that
the equation system $\{u_{0}(x)=1,u_{1}(x)=0,\ldots ,u_{n}(x)=0\}$ is
inconsistent on $C_{1}$ but each proper subsystem of this system is
consistent. Therefore $F_{1}\notin \mathcal{I}$. Using Lemma \ref{P2a}, we
obtain $ind(F_{1})=(0,0,0,0)$, i.e., $F_{1}\in \mathcal{F}_{1}$.
\end{proof}

Define an infinite family of concepts $F_{2}=(U_{2},C_{2})$ as follows: $%
U_{2}=\mathbb{N}$ and $C_{2}$ is the set of all subsets of the set $\mathbb{N%
}$.

\begin{lemma}
\label{L2b}The family of concepts $F_{2}$ belongs to the class $\mathcal{F}%
_{2}$.
\end{lemma}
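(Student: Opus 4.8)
The plan is to determine the indicator vector of $F_{2}$ by establishing two facts, $F_{2}\notin\mathcal{D}$ and $F_{2}\in\mathcal{I}$, and then to read off from Table~\ref{tab1} (via Lemma~\ref{P2a}) that $ind(F_{2})$ must be the second row $(0,0,0,1)$, which by definition means $F_{2}\in\mathcal{F}_{2}$.

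To show $F_{2}\notin\mathcal{D}$, I would prove that $F_{2}$ has infinite VC-dimension. Given any $n\in\mathbb{N}$, choose $n$ pairwise distinct elements $u_{1},\ldots,u_{n}$ of $U_{2}=\mathbb{N}$. For every $(\delta_{1},\ldots,\delta_{n})\in\{0,1\}^{n}$, the set $c=\{u_{i}:\delta_{i}=1\}$ is a subset of $\mathbb{N}$ and hence a concept from $C_{2}$, with $u_{i}(c)=\delta_{i}$ for $i=1,\ldots,n$; thus $\{u_{1},\ldots,u_{n}\}$ is shattered by $C_{2}$. Since $n$ is arbitrary, $\mathit{VC}(F_{2})=\infty$ and $F_{2}\notin\mathcal{D}$.

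To show $F_{2}\in\mathcal{I}$, I would verify that $F_{2}$ is $2$-i-reduced. The key (and essentially the only substantive) point is a characterization of inconsistency over $F_{2}$: a system $\{v_{1}(x)=\delta_{1},\ldots,v_{m}(x)=\delta_{m}\}$ over $F_{2}$ is inconsistent on $C_{2}$ if and only if it contains two equations $v_{i}(x)=\delta_{i}$ and $v_{j}(x)=\delta_{j}$ with $v_{i}=v_{j}$ and $\delta_{i}\neq\delta_{j}$. Indeed, if no such conflicting pair exists, then the assignment sending each $v_{i}$ to $\delta_{i}$ is well defined, and the set $\{v_{i}:\delta_{i}=1\}$ is a concept in $C_{2}$ satisfying every equation of the system. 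Hence any inconsistent system over $F_{2}$ contains an inconsistent subsystem with exactly two equations, so $F_{2}$ is $2$-i-reduced and $F_{2}\in\mathcal{I}$.

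Finally, by Lemma~\ref{P2a} the indicator vector of $F_{2}$ coincides with one of the rows of Table~\ref{tab1}, and the only row whose $\mathcal{D}$-component is $0$ and whose $\mathcal{I}$-component is $1$ is the second one. Therefore $ind(F_{2})=(0,0,0,1)$ and $F_{2}\in\mathcal{F}_{2}$. I do not anticipate any real obstacle in this argument: the single step requiring an explicit justification is the characterization of inconsistency above, and that is straightforward precisely because $C_{2}$ consists of all subsets of $\mathbb{N}$, so the only possible obstruction to solvability is a literal contradiction on one element.
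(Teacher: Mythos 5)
Your proposal is correct and follows essentially the same route as the paper: establish $F_{2}\notin\mathcal{D}$ via infinite VC-dimension, establish $F_{2}\in\mathcal{I}$ via the observation that inconsistency over $F_{2}$ can only arise from a directly contradictory pair $i(x)=0$, $i(x)=1$ (so $F_{2}$ is $2$-i-reduced), and then invoke Lemma~\ref{P2a} to pin down the indicator vector as $(0,0,0,1)$. You merely supply more detail than the paper does at the ``it is easy to show'' steps.
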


\begin{proof}
It is easy to show that the family of concepts $F_{2}$ has infinite
VC-dimension. Therefore $F_{2}\notin \mathcal{D}$. Let $S$ be a system of
equations over $U_{2}$. It is clear that $S$ is inconsistent if and only if,
for some $i\in \mathbb{N}$, the system $S$ contains equations $i(x)=0$ and $%
i(x)=1$. Therefore $F_{2}$ is $2$-i-restricted and $F_{2}\in \mathcal{I}$.
Using Lemma \ref{P2a}, we obtain $ind(U_{2})=(0,0,0,1)$, i.e., $%
F_{2}\in \mathcal{F}_{2}$.
\end{proof}

Define an infinite family of concepts $F_{3}=(U_{3},C_{3})$ as follows: $%
U_{3}=\{p_{i}:i\in \mathbb{N}\}\cup \{l_{i}:i\in \mathbb{N}\}$
and $C_{3}=\{c_{i}:i\in \mathbb{N}\}$, where $c_{1}=\{p_{1}\}$ and, for $i\geq 2$, $c_{i}=\{p_{i},l_{1},\ldots ,l_{i-1}\}$.

\begin{lemma}
\label{L3b}The family of concepts $F_{3}$ belongs to the class $\mathcal{F}%
_{3}$.
\end{lemma}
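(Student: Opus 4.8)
The plan is to identify the indicator vector $ind(F_3)$ by proving $F_3\in\mathcal{D}$, $F_3\notin\mathcal{C}$, and $F_3\notin\mathcal{I}$: by Lemma~\ref{P2a} this vector is one of the rows of Table~\ref{tab1}, and the only such row with $(e_2,e_3,e_4)=(1,0,0)$ is the third one, so $ind(F_3)=(0,1,0,0)$, i.e.\ $F_3\in\mathcal{F}_3$. First note that $F_3$ really is an infinite family of concepts, since the concepts $c_i$ are pairwise distinct ($p_i\in c_i$ but $p_i\notin c_j$ for $j\neq i$). Two formulas will be used repeatedly: $p_i(c_j)=1$ iff $i=j$, and $l_i(c_j)=1$ iff $j>i$; in particular, on $C_3$ the equation $l_i(x)=0$ has solution set $\{c_1,\dots,c_i\}$ and the equation $l_i(x)=1$ has solution set $\{c_{i+1},c_{i+2},\dots\}$.

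For $F_3\in\mathcal{D}$ I would bound the number of traces of $C_3$ on a finite set $Y\subseteq U_3$. Since $c_i\cap Y=(\{p_i\}\cap Y)\cup(\{l_1,\dots,l_{i-1}\}\cap Y)$, where the second summand is nondecreasing in $i$ and therefore takes at most $|Y|+1$ distinct values, while the first summand is nonempty for at most $|Y|$ indices $i$, the set $\{c_i\cap Y:i\in\mathbb{N}\}$ has at most $2|Y|+1$ elements; hence no subset of $U_3$ of size at least $3$ is shattered by $C_3$, so $VC(F_3)\le 2<\infty$ and $F_3\in\mathcal{D}$. For $F_3\notin\mathcal{I}$ I would exhibit, for every $n\in\mathbb{N}$, a minimally inconsistent system with $n+1$ equations, namely $S_n=\{l_n(x)=0\}\cup\{p_i(x)=0:1\le i\le n\}$: a concept $c_j$ satisfies $S_n$ iff $j\le n$ and $j\notin\{1,\dots,n\}$, which is impossible; deleting $l_n(x)=0$ leaves the solution set $\{c_{n+1},c_{n+2},\dots\}$, and deleting some $p_i(x)=0$ leaves a system satisfied by $c_i$. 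Every subsystem of $S_n$ with at most $n$ equations is proper, hence consistent, so $F_3$ is not $n$-i-reduced; as $n$ is arbitrary, $F_3\notin\mathcal{I}$ (and hence also $F_3\notin\mathcal{R}$ by Lemma~\ref{P1a}, although this is not needed).

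The main point is $F_3\notin\mathcal{C}$, and I expect the one nonroutine idea to be here. For each $d\in\mathbb{N}$ I would construct a $d$-complete tree $G_d$ over $F_3$ using only the elements $l_m$, by a binary-search recursion on index intervals: an interval $[a,b]$ with $b-a+1=2^t$ and $t\ge 1$ is realized by a node labeled with $l_{a+2^{t-1}-1}$ whose $0$-edge (labeled $\{l_{a+2^{t-1}-1}(x)=0\}$) leads to the tree for $[a,\,a+2^{t-1}-1]$ and whose $1$-edge leads to the tree for $[a+2^{t-1},\,b]$; an interval of length $1$ is an (unlabeled) terminal node; the construction starts from $[1,2^d]$. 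Then every complete path of $G_d$ has length $d$, involves at most $2^d-1$ elements of $U_3$, and its equation system is consistent on $C_3$: if the path reaches the leaf $[j,j]$, then $c_j$ satisfies every equation on it, because $\{l_m(x)=0\}$ occurs on that path exactly when $j\le m$ and $\{l_m(x)=1\}$ exactly when $j>m$. Now suppose $F_3\in\mathcal{C}$; then $h_{F_3}^{(2)}(n)=O(1)$ by Theorem~\ref{T2}(a). On the other hand, taking $z_d$ to be the tuple of elements occurring in $G_d$ (so $\dim z_d\le 2^d$ and $U(G_d)\subseteq U(z_d)$), Lemma~\ref{L0a}(a) gives $h_{F_3}^{(2)}(z_d)\ge d$, and therefore $h_{F_3}^{(2)}(2^d)\ge h_{F_3}^{(2)}(z_d)\ge d$ for all $d$, a contradiction. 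Hence $F_3\notin\mathcal{C}$, and the identification of $ind(F_3)$ above yields $F_3\in\mathcal{F}_3$. Apart from spotting the binary-search $d$-complete trees built from the threshold-type elements $l_m$, every step is straightforward bookkeeping with the two formulas for $p_i$ and $l_i$.
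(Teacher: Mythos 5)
Your proposal is correct and follows essentially the same route as the paper: the same minimally inconsistent systems $S_n$ to show $F_3\notin\mathcal{I}$, the same $d$-complete trees built from the threshold elements $l_m$ combined with Lemma~\ref{L0a} and Theorem~\ref{T2} to show $F_3\notin\mathcal{C}$, and finiteness of the VC-dimension for $F_3\in\mathcal{D}$ (the paper states $\mathit{VC}(F_3)=1$, your bound of $2$ suffices equally). The extra details you supply (the explicit binary-search tree and the trace count) are correct fillings-in of steps the paper leaves to the reader.
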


\begin{proof}
For $n\in \mathbb{N}$, denote $S_{n}=\{p_{1}(x)=0,\ldots
,p_{n}(x)=0,l_{n}(x)=0\}$. It is easy to show that the equation system $S_{n}
$ is inconsistent on $C_3$ and each proper subsystem of $S_{n}$ is consistent.
Therefore $F_{3}\notin \mathcal{I}$. By Lemma \ref{P1a}, $F_{3}\notin
\mathcal{R}$. Using elements from the set $\{l_{i}:i\in \mathbb{N}\}$, we
can construct $d$-complete tree over $F_{3}$ for each $d\in \mathbb{N}$. By
Lemma \ref{L0a} and Theorem \ref{T2}, $F_{3}\notin \mathcal{C}$. One can
show that $\mathit{VC}(F_{3})=1$. Therefore $F_{3}\in \mathcal{D}$. Thus, $%
ind(U_{3})=(0,1,0,0)$, i.e., $F_{3}\in \mathcal{F}_{3}$.
\end{proof}

Define an infinite binary information system $F_{4}=(U_{4},C_{4})$ as
follows: $U_{4}=\{u_{i}:i\in \mathbb{N}\}\cup \{u_{i,j}:i,j\in \mathbb{N}\}$
and  $C_{4}=\{c_{p,q}:p,q\in \mathbb{N}\}$. For any $u_{i}\in U_{4}$
and any $c_{p,q}\in C_{4}$, $u_{i}\in c_{p,q}$ if and only if $p>i$. For any
$u_{i,j}\in U_{4}$ and any $c_{p,q}\in C_{4}$, $u_{i,j}\in c_{p,q}$ if and
only if $(p,q)=(i,j)$.

\begin{lemma}
\label{L4b}The family of concepts $F_{4}$ belongs to the class $\mathcal{F}%
_{4}$.
\end{lemma}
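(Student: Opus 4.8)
The plan is to compute the indicator vector of $F_{4}$ and to check that it equals $(0,1,0,1)$, the row of Table \ref{tab1} defining the class $\mathcal{F}_{4}$; by Lemma \ref{P2a} it is enough to verify the four statements $F_{4}\notin\mathcal{R}$, $F_{4}\in\mathcal{D}$, $F_{4}\notin\mathcal{C}$, and $F_{4}\in\mathcal{I}$. It is convenient first to record how the functions act on concepts: $u_{i}(c_{p,q})=1$ exactly when $p>i$ (a threshold test on the first index of the concept, not depending on $q$), and $u_{i,j}(c_{p,q})=1$ exactly when $(p,q)=(i,j)$ (the indicator of the single concept $c_{i,j}$).

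For $F_{4}\in\mathcal{D}$ I would show $\mathit{VC}(F_{4})=1$ by checking that no two-element subset of $U_{4}$ is shattered by $C_{4}$: two functions $u_{i},u_{j}$ with $i<j$ cannot realize the pattern $(0,1)$, since $p\le i$ forces $p\le j$; two functions $u_{i,j},u_{k,l}$ with $(i,j)\neq(k,l)$ cannot both equal $1$ on a common concept; and a mixed pair $u_{i},u_{k,l}$ is not shattered because the requirement $u_{k,l}(x)=1$ pins the concept down to $c_{k,l}$, which fixes the value of $u_{i}$. For $F_{4}\notin\mathcal{C}$ I would observe that, restricted to the concepts $c_{1,1},c_{2,1},c_{3,1},\dots$, the functions $u_{1},u_{2},\dots$ form an infinite chain ($u_{i}(c_{p,1})=1$ iff $p>i$), so a binary-search construction gives, for every $d\in\mathbb{N}$, a $d$-complete tree over $F_{4}$ whose nodes are labelled only with functions $u_{i}$: over a block of consecutive concepts of size at least $2^{d}$ (the initial block being the whole infinite sequence) one queries a $u_{m}$ that splits the block into two halves of size at least $2^{d-1}$ and recurses, and every root-to-leaf equation system stays consistent because its block of concepts is nonempty. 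Then, exactly as in the proof of Lemma \ref{L3b}, Lemma \ref{L0a}(a) together with Theorem \ref{T2}(a) gives $F_{4}\notin\mathcal{C}$.

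For $F_{4}\in\mathcal{I}$ I would prove that $F_{4}$ is $2$-i-reduced: every inconsistent system $S$ over $F_{4}$ has an inconsistent subsystem with at most two equations. I would split into cases according to the equations of $S$ of the form $u_{i,j}(x)=1$. If $S$ contains two such equations for distinct pairs, this pair is already inconsistent. If $S$ contains exactly one equation $u_{i,j}(x)=1$, then $c_{i,j}$ is the only concept satisfying it; since $S$ is inconsistent, $c_{i,j}$ violates some equation $e\in S$, and $\{u_{i,j}(x)=1,e\}$ is inconsistent. If $S$ contains no equation of the form $u_{i,j}(x)=1$, then besides threshold equations $u_{a}(x)=\delta$ the only equations are negated singleton equations $u_{k,l}(x)=0$; since for any value of $p$ allowed by the threshold equations there are infinitely many admissible $q$ and only finitely many forbidden ones, the negated singleton equations can never cause inconsistency, so $S$ is inconsistent only because the threshold equations already are, which forces $S$ to contain $u_{a}(x)=1$ and $u_{b}(x)=0$ with $a\ge b$ — an inconsistent pair.

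Finally, for $F_{4}\notin\mathcal{R}$ I would note that for every $n\in\mathbb{N}$ the system $\{u_{1,1}(x)=0,\dots,u_{1,n}(x)=0\}$ is consistent (for instance $c_{2,1}$ is a solution) but deleting any one equation $u_{1,k}(x)=0$ strictly enlarges the solution set, which then contains $c_{1,k}$; hence no subsystem with fewer than $n$ equations has the same solution set, and since $n$ is arbitrary $F_{4}$ is not $r$-reduced for any $r$. Combining the four facts with Lemma \ref{P2a} yields $ind(F_{4})=(0,1,0,1)$, i.e. $F_{4}\in\mathcal{F}_{4}$. I expect the case analysis establishing $F_{4}\in\mathcal{I}$ to be the part demanding the most care — in particular the observation that the negated singleton equations are harmless because each admissible row $p$ still carries infinitely many columns $q$ — whereas the verifications of the other three memberships are short.
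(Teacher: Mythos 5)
Your proposal is correct and follows essentially the same route as the paper: the same witness system $\{u_{1,1}(x)=0,\ldots,u_{1,n}(x)=0\}$ for $F_{4}\notin\mathcal{R}$, the same $d$-complete trees built from the chain $u_{1},u_{2},\ldots$ combined with Lemma \ref{L0a} and Theorem \ref{T2} for $F_{4}\notin\mathcal{C}$, the computation $\mathit{VC}(F_{4})=1$ for $F_{4}\in\mathcal{D}$, and $2$-i-reducedness for $F_{4}\in\mathcal{I}$, all assembled via Lemma \ref{P2a}. The only (cosmetic) difference is that the paper certifies $2$-i-reducedness by listing the five minimal inconsistent pairs explicitly, whereas you derive the existence of such a pair by a case analysis on the equations of the form $u_{i,j}(x)=1$; both verify the same fact.
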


\begin{proof}
Let $n\in \mathbb{N}$ and $S_{n}=\{u_{1,1}(x)=0,\ldots ,u_{1,n}(x)=0\}$. It
is easy to show that the system $S_{n}$ is consistent and each proper
subsystem of $S_{n}$ has another set of solutions on $C_{4}$ than the system
$S_{n}$. Therefore $F_{4}\notin \mathcal{R}$.

Using elements from the set $\{u_{i}:i\in \mathbb{N}\}$, we can construct $d$%
-complete tree over $F_{4}$ for each $d\in \mathbb{N}$. By Lemma \ref{L0a}
and Theorem \ref{T2}, $F_{4}\notin \mathcal{C}$.

Let $S$ be an equation system over $F_{4}$. One can show that $S$ is
inconsistent if and only if $S$ contains at least one of the following pairs
of equations:
\begin{itemize}
\item $u_{i,j}(x)=0$ and $u_{i,j}(x)=1$;

\item $u_{i,j}(x)=1$ and $u_{k,l}(x)=1$, $(i,j)\neq (k,l)$;

\item $u_{i,j}(x)=1$ and $u_{k}(x)=0$, $i>k$;

\item $u_{i,j}(x)=1$ and $u_{k}(x)=1$, $i\leq k$;

\item $u_{i}(x)=0$ and $u_{j}(x)=1$, $i\leq j$.
\end{itemize}
Therefore $F_{4}$ is $2$-i-restricted and $F_{4}\in \mathcal{I}$. One can
show that $\mathit{VC}(F_{4})=1$. Therefore $F_{4}\in \mathcal{D}$. Thus, $%
ind(F_{4})=(0,1,0,1)$, i.e., $F_{4}\in \mathcal{F}_{4}$.
\end{proof}

Define an infinite family of concepts $F_{5}=(U_{5},C_{5})$ as follows: $$%
U_{5}=\bigcup_{i\in \mathbb{N}}\{u_{i},u_{i,1},\ldots ,u_{i,i}\}$$ and  $%
C_{5}=\bigcup_{i\in \mathbb{N}}\{c_{i,1},\ldots ,c_{i,i}\}$. For any $%
u_{i}\in U_{5}$ and any $c\in C_{5}$, $u_{i}\in c$ if and only if $c\in
\{c_{i,1},\ldots ,c_{i,i}\}$. For any $u_{i,j}\in U_{5}$ and any $c\in C_{5}$%
, $u_{i,j}\in c$ if and only if $c=c_{i,j}$.

\begin{lemma}
\label{L5b}The family of concepts $F_{5}$ belongs to the class $\mathcal{F}%
_{5}$.
\end{lemma}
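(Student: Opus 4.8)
The plan is to establish the four membership facts $F_{5}\notin\mathcal{R}$, $F_{5}\in\mathcal{D}$, $F_{5}\in\mathcal{C}$, $F_{5}\notin\mathcal{I}$, so that $ind(F_{5})=(0,1,1,0)$, which is the fifth row of Table~\ref{tab1}, and hence $F_{5}\in\mathcal{F}_{5}$. Two of these come for free: once $F_{5}\in\mathcal{C}$ is proved, $\mathcal{C}\subseteq\mathcal{D}$ (Lemma~\ref{P2}) gives $F_{5}\in\mathcal{D}$ and $\mathcal{R}\cap\mathcal{C}=\emptyset$ (Lemma~\ref{P3}) gives $F_{5}\notin\mathcal{R}$; so it remains to prove $F_{5}\in\mathcal{C}$ and $F_{5}\notin\mathcal{I}$. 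First I would record the shape of the concepts: for all $i\in\mathbb{N}$ and $1\leq j\leq i$, $c_{i,j}=\{u_{i},u_{i,j}\}$, since $u_{i}\in c$ iff $c\in\{c_{i,1},\ldots,c_{i,i}\}$, $u_{i,j}\in c$ iff $c=c_{i,j}$, and every other element of $U_{5}$ lies outside $c_{i,j}$. (Incidentally $\mathit{VC}(F_{5})=1$ is easy to see directly: every concept has exactly two elements, so no $3$-element set is shattered, and a two-element set $\{u_{i},u_{i,j}\}$ is not shattered because no concept contains $u_{i,j}$ without $u_{i}$.)

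To see $F_{5}\notin\mathcal{I}$, for each $i\in\mathbb{N}$ I would take the system $T_{i}=\{u_{i}(x)=1,\,u_{i,1}(x)=0,\ldots,u_{i,i}(x)=0\}$ over $F_{5}$, which has $i+1$ equations. It is inconsistent: $u_{i}(x)=1$ forces $x\in\{c_{i,1},\ldots,c_{i,i}\}$, and the remaining equations rule out every one of those concepts. Yet each proper subsystem is consistent: removing $u_{i}(x)=1$ leaves a system satisfied by, e.g., $c_{1,1}$ if $i\geq 2$ (or $c_{2,1}$ if $i=1$), while removing some $u_{i,j_{0}}(x)=0$ leaves a system satisfied by $c_{i,j_{0}}$. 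Since any subsystem of a consistent system is consistent, no subsystem of $T_{i}$ with at most $i$ equations is inconsistent, so $F_{5}$ is not $r$-i-reduced for any $r\in\mathbb{N}$, i.e., $F_{5}\notin\mathcal{I}$.

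To see $F_{5}\in\mathcal{C}$ I would show that $F_{5}$ is a $2$-family of concepts. Every element of $U_{5}$ has the form $u_{i,j}$ or $u_{i}$. For $u_{i,j}$ one has $C_{5}(u_{i,j},1)=\{c_{i,j}\}$, a one-concept family, hence a $0$-family. For $u_{i}$ one has $C_{5}(u_{i},1)=\{c_{i,1},\ldots,c_{i,i}\}$, and I claim the family $G_{i}=(U_{5},\{c_{i,1},\ldots,c_{i,i}\})$ is a $0$-family if $i=1$ and a $1$-family if $i\geq 2$: on this concept set the only non-constant functions are $u_{i,1},\ldots,u_{i,i}$, fixing $u_{i,j}(x)=1$ isolates the single concept $c_{i,j}$, and for every element of $U_{5}$ that is constant on $\{c_{i,1},\ldots,c_{i,i}\}$ the branch corresponding to the value it does not take has empty solution set and is a (trivially $0$-) family. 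Hence for every $u\in U_{5}$ there are $\delta\in\{0,1\}$ and $m\leq 1$ with $(U_{5},C_{5}(u,\delta))$ an $m$-family. On the other hand $F_{5}$ is not a $0$-family ($u_{1}$ is non-constant on $C_{5}$) and not a $1$-family, as witnessed by $u_{2}$: neither $C_{5}(u_{2},1)=\{c_{2,1},c_{2,2}\}$ (where $u_{2,1}$ is non-constant) nor $C_{5}(u_{2},0)=C_{5}\setminus\{c_{2,1},c_{2,2}\}$ (where $u_{1}$ is non-constant) is the concept set of a $0$-family. Thus $F_{5}$ is exactly a $2$-family, so $F_{5}\in\mathcal{C}$.

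Putting the pieces together, $ind(F_{5})=(0,1,1,0)$, which by Lemma~\ref{P2a} coincides with the fifth row of Table~\ref{tab1}, so $F_{5}\in\mathcal{F}_{5}$. I expect the delicate point to be the verification that $F_{5}$ is a $2$-family — in particular, handling, in the sub-family $G_{i}$, the many elements of $U_{5}$ whose functions are constant there (so that the relevant restriction is an empty, hence degenerate $0$-, family), and confirming that $F_{5}$ is not already a $1$-family; the arguments for $F_{5}\notin\mathcal{I}$ and $\mathit{VC}(F_{5})<\infty$ are routine.
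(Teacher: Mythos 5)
Your proposal is correct and follows essentially the same route as the paper's proof: it establishes $F_{5}\in\mathcal{C}$ by exhibiting the same $2$-family decomposition (singleton branches for $u_{i,j}$ and for $u_{1}$, a $1$-family for $C_{5}(u_{i},1)$ when $i\geq 2$), establishes $F_{5}\notin\mathcal{I}$ with the same witness system $\{u_{i}(x)=1,u_{i,1}(x)=0,\ldots,u_{i,i}(x)=0\}$, and then reads off the remaining coordinates of the indicator vector from the structural lemmas. The only difference is that you spell out details the paper leaves as ``easy to show,'' including the convention needed for elements constant on the restricted concept set, which is consistent with the paper's intended reading.
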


\begin{proof}
It is easy to show that $F_{5}$ is $2$-family of concepts. In particular, $%
(U_{5},C_{5}(u_{1},1))$ is $0$-family of concepts, $%
(U_{5},C_{5}(u_{i},1))$ is $1$-family of concepts if $i \ge 2$, and $%
(U_{5},C_{5}(u_{i,j},1))$ is $0$-family of concepts for any element $%
u_{i,j}\in U_{5}$. Therefore $F_{5}\in \mathcal{C}$. Let $i\in \mathbb{N}$
and $S_{i}=\{u_{i}(x)=1,u_{i,1}(x)=0,\ldots ,u_{i,i}(x)=0\}$. One can show
that the equation system $S_{i}$ is inconsistent and each proper subsystem of $S_{i}$ is
consistent. Therefore $F_{5}\notin \mathcal{I}$. Using Lemma \ref{P2a}%
, we obtain $ind(F_{5})=(0,1,1,0)$, i.e., $F_{5}\in \mathcal{F}_{5}$.
\end{proof}

Define an infinite family of concepts $F_{6}=(U_{6},C_{6})$ as follows: $%
U_{6}=\{p_{i}:i\in \mathbb{N}\}$ and $C_{6}=\{c_{i}:i\in \mathbb{N}\}$, where $%
c_{i}=\{p_{i}\}$ for any $i\in \mathbb{N}$.

\begin{lemma}
\label{L6b}The family of concepts $F_{6}$ belongs to the class $\mathcal{F}%
_{6}$.
\end{lemma}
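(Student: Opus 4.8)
The plan is to compute the indicator vector $ind(F_6)$ directly and verify that it equals the sixth row $(0,1,1,1)$ of Table~\ref{tab1}. The key observation is that, on $C_6$, the function $p_i$ takes the value $1$ exactly on the single concept $c_i$ and the value $0$ on every other concept, since $c_j=\{p_j\}$.

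First I would show $F_6\in\mathcal{C}$. For any $p_i\in U_6$ we have $C_6(p_i,1)=\{c_i\}$, so every function is constant on this one-concept family and $(U_6,C_6(p_i,1))$ is a $0$-family of concepts. Since $F_6$ itself is not a $0$-family (the function $p_1$ is non-constant on $C_6$), applying the definition of a $(k+1)$-family with $k=0$ and $\delta=1$ for every element shows that $F_6$ is a $1$-family of concepts; hence $F_6\in\mathcal{C}$. By Lemma~\ref{P2} this gives $F_6\in\mathcal{D}$, and by Lemma~\ref{P3} it gives $F_6\notin\mathcal{R}$. (One can alternatively check directly that $\mathit{VC}(F_6)=1$: no two-element subset $\{p_i,p_k\}$ can be shattered because no concept contains two distinct elements.)

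Next I would show $F_6\in\mathcal{I}$ by characterizing the inconsistent equation systems over $F_6$. A system $S$ over $F_6$ is inconsistent if and only if it contains two equations $p_i(x)=1$ and $p_k(x)=1$ with $i\neq k$, or two equations $p_i(x)=1$ and $p_i(x)=0$; in either case a two-equation subsystem is already inconsistent. For the remaining direction, suppose $S$ contains no such pair. If $S$ contains exactly one equation with right-hand side $1$, say $p_i(x)=1$, then all its other equations are of the form $p_k(x)=0$ with $k\neq i$, so the concept $c_i$ satisfies $S$. If $S$ contains no equation with right-hand side $1$, then it consists only of equations $p_k(x)=0$, and any concept $c_j$ whose index $j$ lies outside the finite set of indices appearing in $S$ satisfies $S$; such $j$ exists because $\mathbb{N}$ is infinite. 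Hence every inconsistent system over $F_6$ has an inconsistent subsystem with at most two equations, so $F_6$ is $2$-i-reduced and $F_6\in\mathcal{I}$.

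Combining these, $ind(F_6)=(0,1,1,1)$, which is the sixth row of Table~\ref{tab1}, so $F_6\in\mathcal{F}_6$. The step demanding the most care is the characterization of inconsistency in the second part --- in particular, recognizing that a system consisting only of equations with right-hand side $0$ is always consistent because $\mathbb{N}$ is infinite, and that a system with a single equation having right-hand side $1$ is consistent unless it is directly contradicted.
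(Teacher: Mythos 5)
Your proof is correct and follows essentially the same route as the paper: show $F_6$ is a $1$-family of concepts (hence in $\mathcal{C}$) via $C_6(p_i,1)=\{c_i\}$, and show $F_6$ is $2$-i-reduced (hence in $\mathcal{I}$) by the same characterization of inconsistent systems. The only cosmetic difference is that you fill in the remaining coordinates of the indicator vector by citing Lemmas~\ref{P2} and~\ref{P3} directly, whereas the paper invokes Lemma~\ref{P2a}, which packages the same facts.
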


\begin{proof}
It is easy to show that $F_{6}$ is $1$-family of concepts: evidently, $F_{6}$
is not $0$-family of concepts, and $(U_{6},C_{6}(p_{i},1))$ is $0$-family of
concepts for any $i\in \mathbb{N}$. Therefore $F_{6}\in \mathcal{C}$. Let $S$
be an equation system over $F_{6}$. One can show that $S$ is inconsistent if
and only if it contains equations $p_{i}(x)=0$ and $p_{i}(x)=1$ for some $%
i\in \mathbb{N}$ or it contains equations $p_{i}(x)=1$ and $p_{j}(x)=1$ for
some $i,j\in \mathbb{N}$, $i\neq j$. Therefore $F_{6}$ is $2$-i-restricted
and $F_{6}\in \mathcal{I}$. Using Lemma \ref{P2a}, we obtain $%
ind(F_{6})=(0,1,1,1)$, i.e., $F_{6}\in \mathcal{F}_{6}$.
\end{proof}

Define an infinite family of concepts $F_{7}=(U_{7},C_{7})$ as follows: $%
U_{7}=\{l_{i}:i\in \mathbb{N}\}$ and $C_{7}=\{c_{i}:i\in \mathbb{N}\}$, where $%
c_{1}=\emptyset $ and, for $i\geq 2$, $c_{i}=\{l_{1},\ldots ,l_{i-1}\}$.

\begin{lemma}
\label{L7b}The family of concepts $F_{7}$ belongs to the class $\mathcal{F}%
_{7}$.
\end{lemma}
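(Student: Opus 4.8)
The plan is to compute the indicator vector $ind(F_7)$ entry by entry and verify it equals $(1,1,0,1)$, the seventh row of Table~\ref{tab1}.

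First I would analyse the functions $l_j$ on $C_7$. From the definition, $l_j(c_i)=1$ if and only if $l_j\in c_i$, i.e. if and only if $j\le i-1$, i.e. if and only if $i>j$; so the $l_j$ behave as thresholds on the linearly indexed concepts. Consequently, a consistent system of equations $\{l_{j_1}(x)=\delta_1,\ldots,l_{j_m}(x)=\delta_m\}$ over $F_7$ has as its set of solutions on $C_7$ exactly $\{c_i : A<i\le B\}$, where $A=\max(\{j_t:\delta_t=1\}\cup\{0\})$ and $B=\min(\{j_t:\delta_t=0\}\cup\{+\infty\})$. This interval is already cut out by the at most two equations $l_A(x)=1$ (if $A>0$) and $l_B(x)=0$ (if $B<+\infty$), so $F_7$ is $2$-reduced and $F_7\in\mathcal R$. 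Hence $F_7\in\mathcal D$ by Lemma~\ref{P1} and $F_7\in\mathcal I$ by Lemma~\ref{P1a}; this settles $e_1=e_2=e_4=1$. (Alternatively, one checks directly that no two-element subset of $U_7$ is shattered by $C_7$, so $\mathit{VC}(F_7)=1$.)

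Next I would show $F_7\notin\mathcal C$, i.e. $e_3=0$, exactly as for $F_3$ in Lemma~\ref{L3b}. Using the threshold elements $l_j$, a $d$-complete tree over $F_7$ is built by binary search: recursively, a node responsible for an index interval $(a,b]$ with $b-a$ a power of $2$ is labelled with $l_{(a+b)/2}$, its $\{l_{(a+b)/2}(x)=0\}$-edge leading to the subtree for $(a,(a+b)/2]$ and its $\{l_{(a+b)/2}(x)=1\}$-edge to the subtree for $((a+b)/2,b]$; starting from $(0,2^d]$ gives a complete tree of depth $d$, and along every complete path the associated equation system merely constrains the index to a nonempty interval, hence is consistent. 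Taking a problem $z_d$ over $U_7$ containing all elements attached to this tree, Lemma~\ref{L0a}(a) gives $h_{F_7}^{(2)}(z_d)\ge d$, so $h_{F_7}^{(2)}(n)$ is not bounded by a constant; by the contrapositive of Theorem~\ref{T2}(a) this forces $F_7\notin\mathcal C$.

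Combining the two paragraphs, $ind(F_7)=(1,1,0,1)$, which is the seventh row of Table~\ref{tab1}, so $F_7\in\mathcal F_7$. The only point that needs genuine care is the verification that every root-to-leaf path of the binary-search tree is consistent on $C_7$ (equivalently, that the intervals it produces are never empty); the rest is the routine bookkeeping already carried out for $F_2,F_3,F_4,F_6$ in the preceding lemmas, specialized to the threshold family $F_7$.
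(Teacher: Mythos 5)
Your proposal is correct, and its crux -- the verification that every consistent system over $F_7$ reduces to at most two ``threshold'' equations $l_A(x)=1$ and $l_B(x)=0$, hence $F_7\in\mathcal{R}$ -- is exactly the paper's argument. Where you diverge is in how the remaining three coordinates of the indicator vector are obtained. The paper stops after establishing $F_7\in\mathcal{R}$ and invokes Lemma~\ref{P2a}: since row 7 is the only row of Table~\ref{tab1} with $e_1=1$, membership in $\mathcal{R}$ alone forces $ind(F_7)=(1,1,0,1)$. You instead verify each coordinate directly: $e_2$ and $e_4$ via Lemmas~\ref{P1} and~\ref{P1a} (which is fine and is in fact how Lemma~\ref{P2a} itself is proved), and $e_3=0$ via an explicit binary-search construction of $d$-complete trees combined with Lemma~\ref{L0a}(a) and the contrapositive of Theorem~\ref{T2}(a). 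That construction is valid -- the midpoints stay integral because the interval lengths are powers of two, and every leaf interval is a nonempty singleton, so all path systems are consistent -- but it is more work than needed: Lemma~\ref{P3} already gives $\mathcal{R}\cap\mathcal{C}=\emptyset$, so $F_7\notin\mathcal{C}$ is immediate once $F_7\in\mathcal{R}$ is known. What your longer route buys is self-containedness and an explicit witness (the $d$-complete trees showing $h_{F_7}^{(2)}(n)$ is unbounded); what the paper's route buys is brevity, at the cost of leaning on the global classification of indicator vectors.
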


\begin{proof}
Let us consider an arbitrary consistent system of equations $S$ over $F_{7}$%
. We now show that there is a subsystem of $S$, which has at most two
equations and the same set of solutions as $S$. Let $S$ contain both
equations of the kind $l_{i}(x)=1$ and $l_{j}(x)=0$. Denote $i_{0}=\max
\{i:l_{i}(x)=1\in S\}$ and $j_{0}=\min \{j:l_{j}(x)=0\in S\}$. One can show
that the system of equations $S^{\prime }=\{l_{i_{0}}(x)=1,l_{j_{0}}(x)=0\}$
has the same set of solutions as $S$. The case when $S$ contains for some $%
\delta \in \{0,1\}$ only equations of the kind $l_{p}(x)=\delta $ can be
considered in a similar way. In this case, the equation system $S^{\prime }$
contains only one equation. Therefore the family of concepts $F_{7}$ is $2$%
-reduced and $F_{7}\in $ $\mathcal{R}$. Using Lemma \ref{P2a}, we
obtain $ind(F_{7})=(1,1,0,1)$, i.e., $F_{7}\in \mathcal{F}_{7}$.
\end{proof}

\begin{proof}[Proof of Theorem \protect\ref{T4}]
From Lemma \ref{P2a} it follows that, for any infinite family of
concepts, its indicator vector coincides with one of the rows of Table \ref%
{tab1}. Using Lemmas \ref{L1b}-\ref{L7b}, we conclude that each row of Table %
\ref{tab1} is the indicator vector of some infinite family of concepts. 
\end{proof}

\section{Conclusions \label{S7}}

Based on the results of exact learning, test theory, and rough set theory, for an arbitrary
infinite family of concepts, we studied five functions, which characterize
the dependence in the worst case of the minimum depth of a decision tree
solving a problem of exact learning on the number of elements in the problem
description. These five functions correspond to (i) decision trees using
membership queries, (ii) decision trees using equivalence queries,  (iii)
decision trees using both membership and equivalence queries, (iv) decision
trees using proper equivalence queries, and (v) decision trees using both
membership and proper equivalence queries. We described possible types of
behavior for each of these five functions. We also studied join behavior of
these functions and distinguished seven complexity classes of
infinite family of concepts. In the future, we are also planing to study
partial equivalence queries \cite{Maass92}.

\subsection*{Acknowledgments}

Research reported in this publication was supported by King Abdullah
University of Science and Technology (KAUST).

\bibliographystyle{spmpsci}
\bibliography{exact-families}

\end{document}